\documentclass[conference]{IEEEtran}
\usepackage{times}
\usepackage{amsmath}
\usepackage{amsthm}
\usepackage{amsfonts}
\usepackage{algorithm} 
\usepackage{algpseudocode}
\usepackage{booktabs}
\usepackage{multirow}

\theoremstyle{definition}
\newtheorem{definition}{\textbf{Definition}}  
\newtheorem{theorem}[definition]{\textbf{Theorem}}   
\newtheorem{lemma}[definition]{\textbf{Lemma}}
\newtheorem{proposition}[definition]{\textbf{Proposition}}

\newtheorem{assumption}{Assumption}
\DeclareMathOperator{\dist}{dist}

\theoremstyle{remark}
\newtheorem{remark}[definition]{\textbf{Remark}}

\usepackage{xcolor}
\usepackage{bm}
\definecolor{bestred}{RGB}{150,40,40}
\definecolor{secondblue}{RGB}{30,70,140}
\newcommand{\val}[1]{\ensuremath{#1}}
\newcommand{\best}[1]{\textcolor{bestred}{\ensuremath{\bm{#1}}}} 
\newcommand{\second}[1]{\textcolor{secondblue}{\ensuremath{\bm{#1}}}}

\usepackage[numbers]{natbib}
\usepackage{multicol}
\usepackage[bookmarks=true]{hyperref}
\usepackage{graphicx}
\usepackage{subcaption}
\hypersetup{hidelinks}

\begin{document}

\title{IMPACT: An Implicit Active-Set Augmented Lagrangian for Fast Contact-Implicit Trajectory Optimization}

\author{%
\begin{tabular}{@{}c@{}}
Jiayun Li$^{1,2}$ \quad
Dejian Gong$^{1}$ \quad
Georgia Chalvatzaki$^{1,2,3}$\\
$^{1}$PEARL Lab, Dept. of Computer Science, TU Darmstadt, Germany\\[-0.1em]
$^{2}$Hessian.AI, Darmstadt, Germany, 
$^{3}$Robotics Institute Germany (RIG)
\end{tabular}
}


%

\maketitle

\begingroup
\renewcommand{\thefootnote}{}
\footnotetext[0]{Project page is available at \url{https://jonaspflaume.github.io/impact_info/}.}
\endgroup

\begin{abstract}
\textit{Contact-implicit trajectory optimization} (CITO) has attracted growing attention as a unified framework for planning and control in contact-rich robotic tasks. Recent approaches have demonstrated promising results in manipulation and locomotion without requiring a prescribed contact-mode schedule. It is well known that the underlying \textit{mathematical programs with complementarity constraints} (MPCCs) remain numerically ill-conditioned, and systematic, scalable solution strategies for CITO remain an active area of research. More efficient and principled solvers that can handle contact constraints are therefore essential to broaden the applicability of CITO. In this work, we develop an augmented-Lagrangian approach to CITO for solving MPCC-based CITO with stationarity guarantees. The method can be interpreted as \emph{identifying the implicit contact-mode branches on the fly} during the \textit{trajectory optimization} (TO) iterations; we call this approach \textsc{IMPACT} (\textit{IMPlicit contact ACtive-set Trajectory optimization}). We provide an efficient C++ implementation tailored to trajectory-optimization workloads and evaluate it on the open-source CITO and \textit{contact-implicit model predictive control} (CI-MPC) benchmarks. On CITO, \textsc{IMPACT} achieves $2.9\times$--$70\times$ speedups over strong baselines (geometric mean $13.8\times$). On CI-MPC, we show improved control quality for contact-rich trajectories on dexterous manipulation tasks in simulation. Finally, we demonstrate the proposed method on real robotic hardware on a T-shaped object pushing task.

\end{abstract}
\IEEEpeerreviewmaketitle
\section{Introduction}
Optimal motion planning in contact-rich manipulation settings remains one of the major challenges in robotics. The planner must reason not only about continuous dynamics, but also about discrete events in which contacts are established, maintained, and broken to perform tasks such as underactuated manipulation and legged locomotion \cite{bouyarmane2019mcpc,pang2023global}. This hybrid, nonsmooth structure makes it difficult to cast the problem as a well-defined continuous TO problem. A common workaround is to prescribe the contact mode sequence (contact schedule) a priori; conditioned on this discrete choice, the resulting TO problem can often be formulated as a smooth \textit{nonlinear programming} (NLP) and solved efficiently using existing continuous optimization methods \cite{jallet2025proxddp,mastalli2020crocoddyl}. By contrast, when the contact schedule is not fixed, the problem is typically posed as CITO. While CITO is promising for autonomously inferring contact modes in complex tasks, it is generally more challenging to solve due to nonsmooth contact dynamics and the combinatorics of mode changes \cite{yunt2006trajectory,yunt2007combined,yunt2011augmented,posa2014direct,onol2019contact,sleiman2023versatile}.

\begin{figure}
    \centering
    \includegraphics[width=0.95\linewidth]{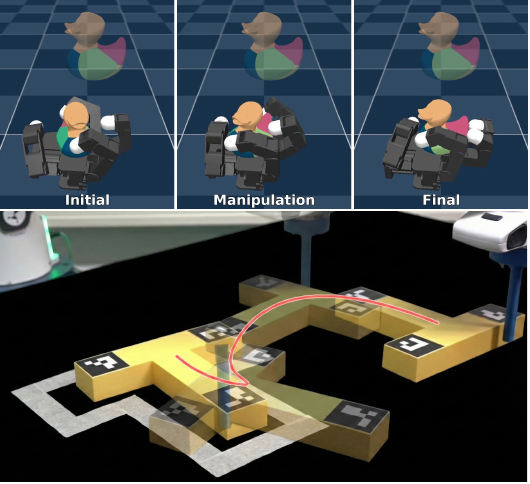}
    \caption{\textsc{IMPACT} demonstrations in simulation and hardware. \textbf{Top:} Allegro Hand reorients a rubber duck in simulation. \textbf{Bottom:} a Panda robot pushes a T-shaped block to a target pose in real-robot experiments; the red curve indicates the trajectory of the block's coordinate origin during the push.}
    \label{fig:IMPACT_demo_allegro_pushT}
    \vspace{-3.2mm}
\end{figure}

CITO commonly relies on contact models with a complementarity structure to capture the on/off nature of contact and frictional interaction, and the exact form depends on the chosen contact model (e.g., rigid vs.\ compliant, Coulomb vs.\ regularized friction) \cite{le2024contact}. This yields a fundamentally nonsmooth optimization problem that is naturally cast as MPCCs \cite{scheel2000mathematical}. At feasible points, complementarity induces degeneracy and violates the regularity assumptions required by standard smooth NLP constraint qualifications (CQs); consequently, standard constraint qualifications such as the \textit{Linear Independence Constraint Qualification} and the \textit{Mangasarian--Fromovitz Constraint Qualification} typically fail, and the associated Lagrange multipliers may become unbounded, rendering the \textit{Karush–Kuhn–Tucker} (KKT) conditions invalid \cite{scheel2000mathematical,nurkanovic2023solving}. Consequently, off-the-shelf NLP solvers based on standard KKT/CQ theory are often numerically brittle for MPCCs: large initial complementarity violations make convergence highly initialization-sensitive and unreliable at TO scale~\cite{li2025surprising}.

Due to the nonsmooth and degenerate nature of contact-implicit formulations, a variety of practical treatments have been explored to improve numerical robustness. Broadly, existing methods modify the original complementarity structure through smoothing or relaxation, or by introducing (exact) penalty terms to better fit standard solvers and their convergence assumptions \cite{kurtz2026inverse, jin2024complementarity, katayarna2022quasistatic}. While these treatments often improve numerical behavior, they can degrade control quality and introduce additional heuristic update parameters that strongly affect efficiency. Moreover, as these modifications are tightened to more closely approximate the original complementarity conditions, degeneracy and ill-conditioning can re-emerge, limiting reliability at TO tasks. 

In this work, we propose \textsc{IMPACT} (\textit{IMP}licit contact \textit{AC}tive-set Trajectory optimization), an \emph{implicit complementarity/contact-branch selection} method for solving MPCC-based CITO, with \emph{stationarity guarantee} for feasible accumulation points under standard assumptions from recent augmented Lagrangian (AuLa) theory~\cite{jia2023augmented, guo2022new, de2023constrained, kanzow2022convergence}. Instead of smoothing or relaxing complementarity constraints, \textsc{IMPACT} retains the original nonsmooth contact constraints and handles them through the AuLa subproblems. Across these subproblems, the approximate Lagrange multipliers and penalty strength guide the selection of the active complementarity branch (contact vs.\ no-contact), enabling an implicit contact-mode discovery without a prescribed homotopy schedule. Fig.~\ref{fig:algorithm_comarison} conceptually contrasts relaxation, penalty, and \textsc{IMPACT} on a toy complementarity example. Moreover, this AuLa treatment prevents the multiplier blow-up typically seen in vanilla AuLa formulations for MPCCs. As a result, \textsc{IMPACT} provides a principled alternative to penalty-only methods. Combined with our proposed \textit{block coordinate descent} (BCD) solver for the AuLa subproblems, \textsc{IMPACT} achieves substantially faster convergence than strong baselines in our experiments while maintaining competitive performance. On standard benchmarks, this translates to an order-of-magnitude speedup. Our contributions are summarized as follows.

\begin{itemize}
  \item \textbf{Algorithm.} We introduce \textsc{IMPACT}, an efficient CITO method based on an AuLa scheme for MPCCs, and establish its stationarity guarantee.
  \item \textbf{Implementation.} We provide C++ implementations of \textsc{IMPACT} and the evaluation baselines, with an interface designed specifically for CITO/CI-MPC workloads.
  \item \textbf{Validation.} We benchmark \textsc{IMPACT} against representative baselines on open-source suites: (i) a CITO benchmark and (ii) a MuJoCo dexterous-manipulation benchmark for Allegro-Hand CI-MPC. We further demonstrate the method on real robotic hardware in a Push-T manipulation task.
\end{itemize}

\section{Related Work}
\label{sec:related}
This section reviews CITO and CI-MPC methods from two perspectives: (a) approaches that \emph{soften} contact logic to obtain smooth gradients for planning/MPC, and (b) approaches that \emph{retain} nonsmooth contact logic and solve more directly.
\subsection{Smooth / Relaxed Contact Models for Planning and MPC}
Many CITO/CI-MPC pipelines replace rigid complementarity with \emph{smooth} or \emph{relaxed} surrogates to enable gradient-based optimization, which can improve numerical behavior but may affect gradient accuracy and closed-loop performance.
A common approach modifies the \emph{forward contact model} (e.g., compliant or differentiable contact), yielding smoother derivatives at the expense of strict rigid-contact fidelity \cite{onol2019contact,pang2023global,suh2022bundled,jin2024complementarity,kurtz2026inverse}.
Another line keeps \emph{forward simulation} rigid but targets \emph{differentiable optimization} by regularizing the sensitivity/KKT systems used to compute gradients through contact, enabling fast CI-MPC and relaxed-complementarity DDP variants \cite{le2024fast,kim2022contact,kim2025contact}. Beyond local smoothing, global convex relaxations have been explored for CITO, emphasizing geometric reasoning and contact sequencing \cite{graesdal2024towards}.

In contrast, our method adopts a \emph{nonsmooth} AuLa viewpoint: we update relaxation via safeguarded multiplier/penalty rules and solve nonsmooth subproblems that explicitly enforce complementarity, aiming to preserve mode transitions and implicitly identify the contact mode.

\subsection{Direct Nonsmooth Treatments of Contact Complementarity}
A second class handles contact complementarity more \emph{directly}, reflecting the inherent nonsmoothness of contact (make/break; stick/slide). Two common strategies are penalty/continuation methods and operator-splitting schemes.

Early CITO work introduces elastic (slack) variables and penalizes complementarity violations so that contacts can be ``discovered'' during optimization \cite{posa2014direct}.  Later pipelines improve reliability via continuation strategies or automatic penalty-update rules \cite{onol2020tuning,li2025surprising}, and related penalty reformulations also appear in \textit{quadratic programs with complementarity constraints} (QPCC) solvers such as LCQPow \cite{hall2025lcqpow}.

Despite their widespread use, penalty/continuation methods can become numerically stiff and brittle under large penalties, and heuristic schedules may slow convergence \cite{li2025surprising}. 
These issues motivate safeguarded AuLa schemes for \emph{general} MPCCs; among penalty-based CITO planners, CRISP is closest to our approach \cite{li2025surprising}.

\emph{Operator splitting / ADMM} exploits CI-MPC structure by alternating trajectory updates with projections onto contact-feasible sets. 
The \textsc{C3} family formulates CI-MPC in a consensus ADMM form with simple, parallelizable subproblems, scaling well to many-contact settings such as planar pushing and multi-contact manipulation \cite{aydinoglu2022real,aydinoglu2024consensus}. 
Extensions improve robustness via residual learning \cite{huang2024adaptive} and sampling guidance \cite{venkatesh2025approximating}, while \textsc{C3+} accelerates planar pushing through more efficient projections \cite{bui2025push}. 
While \textsc{C3+} is reminiscent of our emphasis on fast inner solves, it targets an ADMM projection setting that differs from \textsc{IMPACT}.

The ADMM-based methods above typically operate on linearized contact/dynamics. 
They often produce task-space commands, leading to QPCC-style problems and, in many implementations, an additional task-space tracking layer is required for execution \cite{aydinoglu2022real,bui2025push}. 
A recent preprint by Ménager et al. \cite{menager:hal-05201780}, released after the initial submission of \textsc{IMPACT}, also uses an AuLa and alternating-minimization/BCD scheme for contact-implicit QPCCs in one-step inverse dynamics. 
In contrast, \textsc{IMPACT} targets temporally coupled TO and CI-MPC settings with general MPCC-based CITO formulations. 
We use an MPCC-tailored safeguarded AuLa scheme with complementarity-enforcing inner subproblems. 
This induces an implicit contact-mode selection behavior as slack is driven tight.

\section{Problem Definition}

\begin{figure}
    \centering
    \includegraphics[width=\linewidth]{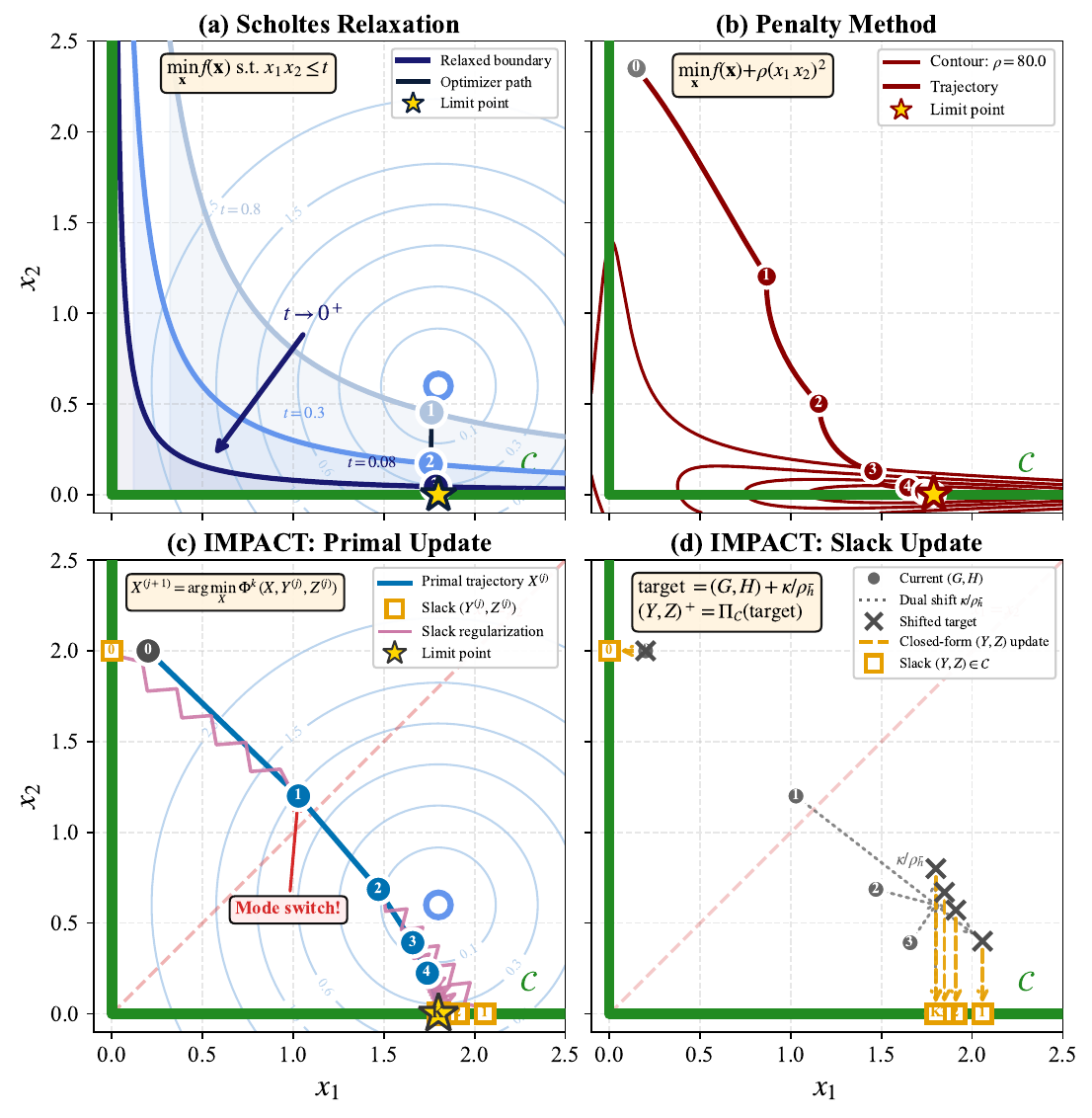}
    \caption{Comparison of complementarity-handling methods on a 2D toy objective. Top row: Scholtes relaxation and squared-penalty method. Bottom row: \textsc{IMPACT}, split into the primal update and the slack update. Objective contours are shown in blue and the complementarity-feasible set is shown in green. In \textsc{IMPACT}, the slack variables act as anchors that regularize the next primal update, while the slack update is obtained from a dual-shifted target projected onto the complementarity set.}
    \label{fig:algorithm_comarison}
    \vspace{-3.2mm}
\end{figure}

In this work, we formulate both CITO and CI-MPC in a unified optimization framework in generalized coordinates, given in problem~(\ref{eq:MPCC}). We use $X = [x_t]_{1:T}$ to denote the trajectory of decision variables (e.g., state, control, and contact forces) over time steps $t=1,\dots,T$. We use $r(x_t)$, $h(x_t, x_{t+1})$, and $g(x_t)$ to denote the residual function, equality constraints, and inequality constraints, respectively. $h(x_t, x_{t+1})$ consists of the standard equality constraints at time steps $t$ and $t+1$, together with the system dynamics that couple consecutive time steps. The $G(x_t)$ and $H(x_t)$ denote the complementarity functions at time step $t$. The complementarity condition $0 \le G(x_t) \perp H(x_t) \ge 0$ encodes a logical exclusivity: component-wise, at least one of $G(x_t)$ or $H(x_t)$ must be zero, so they cannot be simultaneously strictly positive. Algebraically, it is equivalent to $G(x_t)\ge 0$, $H(x_t)\ge 0$, and the element-wise product constraint $G(x_t)\circ H(x_t)=0$. Consequently, the problem is inherently nonconvex and nonsmooth. We focus on nonlinear sum-of-squares (SOS) objectives, which capture many task-space motion-planning costs. It is particularly convenient under the AuLa formulation in Section~\ref{sec:AuLaMPCC}, since both the penalty terms and Lagrangian terms can be absorbed into the nonlinear least-squares residual, yielding an efficient Gauss-Newton scheme.
\begin{subequations}
\label{eq:MPCC}
\begin{align}
\min_{X}\quad & \frac{1}{2}\sum_{t=1}^{T} r(x_t)^{\top} r(x_t) \\
\text{s.t.}\quad
& h(x_t, x_{t+1}) = 0, \quad t \in {1,\dots,T-1}, \\
& g(x_t) \le 0, \quad t \in {1,\dots,T}, \\
& 0 \le G(x_t) \perp H(x_t) \ge 0, \quad t \in {1,\dots,T}.
\end{align}
\end{subequations}

Problem~(\ref{eq:MPCC}) defines a general MPCC, with different discretizations and contact models recovered as special cases. For example, the template accommodates multiple friction models. With a polyhedral approximation of the friction cone, the per-step contact complementarity conditions can be written as a \textit{linear complementarity problem} (LCP). In this case, once the contact Jacobians are fixed at each step (e.g., via a standard linearization or a “frozen Jacobian” approximation), the resulting $G(x_t)$ and $H(x_t)$ are typically affine in the decision variables. If instead friction is enforced using a second-order cone representation, the contact conditions lead to a conic complementarity formulation; this still fits within the same MPCC template, but is generally nonlinear and more challenging to solve. 

In particular, our CI-MPC experiments in \ref{subsec:ci-mpc} adopt the LCP contact model following \cite{jin2024complementarity} and use a time-stepping impulse-velocity formulation in the spirit of Anitescu \cite{anitescu2006optimization}. We directly adopt the KKT system of the primal problem of the Anitescu quadratic program in \cite{jin2024complementarity} (Eq.~(7)) and incorporate it into the \textsc{IMPACT} formulation:
\begin{subequations}
\begin{align}
\mathbf{Q}\mathbf{v}_t - \big(\mathbf{b}(\mathbf{u}_t) + \Tilde{\mathbf{J}}^{\top}_{i}\boldsymbol{\beta}_t\big)/h &= 0, \\
0 \leq \boldsymbol{\beta}_t \perp \Tilde{\mathbf{J}}_{i}\mathbf{v}_t + \phi_i/h &\geq 0 .
\end{align}
\end{subequations}
This system describes the contact dynamics at time step $t$, where the first line gives the discrete-time dynamics, relating the generalized velocity $\mathbf{v}_t$ through the system mass moment $\mathbf{Q}$ to the actuation/external term $\mathbf{b}(\mathbf{u}_t)$ and the contact-force variable $\boldsymbol{\beta}_t$ through the contact Jacobian $\Tilde{\mathbf{J}}_i$ for the $i$th detected contact, including the friction-polyhedral approximation and the step size $h$. The second line imposes complementarity between the contact force
$\boldsymbol{\beta}_t$ and the velocity-level contact residual
$\Tilde{\mathbf{J}}_i\mathbf{v}_t+\phi_i/h$. Componentwise, a contact-force
component can be positive only when the corresponding contact constraint is
active, where $\phi_i$ denotes the normal distance for the $i$th contact. The generalized velocity $\mathbf{v}_t$ is then related to the generalized coordinate using time integration, possibly involving quaternion integration.

\section{A Safeguarded AuLa for MPCCs}
\label{sec:AuLaMPCC}
We next describe the safeguarded AuLa outer loop used in \textsc{IMPACT}. This outer loop updates penalty parameters and Lagrange multipliers while keeping complementarity enforced explicitly, and provides a stationarity-guaranteed wrapper for our BCD inner solver.

For algorithmic convenience, we rewrite Problem~(\ref{eq:MPCC}) in \emph{vertical form} \cite{nurkanovic2023solving} by introducing auxiliary (slack) variables $Y$ and $Z$. This moves nonlinearities from the complementarity set into smooth equality constraints: we enforce $G(x_t)=y_t$ and $H(x_t)=z_t$, and impose complementarity directly on the variables via $0\le y_t\perp z_t\ge 0$. While this increases the dimension and adds equality constraints, it yields a simple complementarity set that is well-suited to our inner solver design. The resulting vertical form is:
\begin{subequations}
\label{eq:vertical_form}
\begin{align}
\min_{X,Y,Z}\quad & \frac{1}{2}\sum_{t=1}^{T} r(x_t)^{\top} r(x_t) \\
\text{s.t.}\quad
& h(x_t, x_{t+1}) = 0, \quad t \in {1,\dots,T-1}, \\
& g(x_t) \le 0, \quad t \in {1,\dots,T}, \\
& G(x_t) = y_t , \quad t \in {1,\dots,T}, \\
& H(x_t) = z_t, \quad t \in {1,\dots,T}, \\
& 0 \le y_t \perp z_t \ge 0, \quad t \in {1,\dots,T}.
\end{align}
\end{subequations}

\begin{remark}
The vertical reformulation is equivalent to the original MPCC: at any feasible point, $(y_t,z_t)=(G(x_t),H(x_t))$, so enforcing $0\le y_t\perp z_t\ge 0$ is equivalent to $0\le G(x_t)\perp H(x_t)\ge 0$. We adopt this split because AuLa handles the resulting smooth equalities $G(x_t)-y_t=0$ and $H(x_t)-z_t=0$ effectively in TO settings \cite{toussaint2014novel}, while our inner solver exploits the resulting variable-wise complementarity structure via closed-form update (Section~\ref{sec:inner_solver}).
\end{remark}

We follow a safeguarded AuLa scheme for MPCCs in which complementarity constraints are not absorbed into the augmented Lagrangian function; instead, they are enforced explicitly in each inner solve. Concretely, we handle the smooth equality and inequality constraints via an AuLa treatment, while keeping the complementarity constraints as hard constraints in the inner subproblem. This separation preserves the MPCC structure and avoids the pathological multiplier growth that can arise when complementarity conditions are penalized directly \cite{guo2022new,jia2023augmented}. The resulting AuLa subproblem, with objective denoted by $\Phi(X,Y,Z)$, is
\begin{subequations}
\label{eq:aula_sub}
\begin{align}
\min_{X,Y,Z}\quad 
&\frac{1}{2}\sum_{t=1}^{T} r_t^{\top} r_t
+ \frac{\rho_{\bar{h}}}{2}\Bigl\lVert \bar{h}_t + \tfrac{\kappa_t}{\rho_{\bar{h}}} \Bigr\rVert^{2}
+ \frac{\rho_g}{2}\Bigl\lVert (g_t + \tfrac{\mu_t}{\rho_g})_{+} \Bigr\rVert^{2} \\
\text{s.t.}\quad 
& 0 \le y_t \perp z_t \ge 0,\quad t \in \{1,\dots,T\}.
\end{align}
\end{subequations}
For brevity, we omit explicit dependence on the variables and stack the equality constraints in~(\ref{eq:vertical_form}) at time step~$t$ as
$\bar h_t(x_{t+1}, x_t,y_t,z_t):=[\,h(x_t, x_{t+1});\ G(x_t)-y_t;\ H(x_t)-z_t\,]$. At the terminal time step $t = T$, $h(x_t, x_{t+1})$ reduces to $h(x_t)$, as no dynamics constraint is imposed beyond the terminal state.
The operator $(\cdot)_{+}$ denotes the element-wise positive part, $(x)_{+}:=\max(x,0)$.
We use $\rho_{\bar h}$ and $\rho_g$ as penalty factors for the equality and inequality constraints, with corresponding multipliers $\kappa_t$ and $\mu_t$.
The resulting subproblem~(\ref{eq:aula_sub}) is a nonlinear least-squares problem with simple complementarity constraints. 

To promote global convergence of the outer loop, we use the safeguarded multiplier and penalty updates in Algorithm~\ref{alg:safeguarded_aula}. Updates are applied component-wise across all time steps, and multipliers are clipped to fixed safeguard bounds to ensure they remain bounded. Lines~\ref{line:define_6}--\ref{line:define_12} implement a standard safeguard for penalty updates: we monitor the decrease of the equality-feasibility residual $\|\bar{h}(w^{k+1})\|_{\infty}$ and an inequality complementarity/KKT residual $\|\zeta^{k+1}\|_{\infty}$ computed after the multiplier update; if the combined violation does not decrease sufficiently, we increase the penalty parameters. This is one possible safeguard design, and other update rules that enforce a comparable sufficient-decrease condition can be used as well.

\begin{algorithm}[t]
\caption{Safeguarded AuLa outer loop for MPCCs}
\label{alg:safeguarded_aula}
\begin{algorithmic}[1]
\Require Initial multipliers $(\kappa^0,\mu^0)$, penalties $(\rho_{\bar{h}}^0,\rho_g^0)$, safeguard bounds $\mu_{\max}>0$ and $\kappa_{\min}<\kappa_{\max}$, penalty increase factor $\gamma>1$, constraints violation factor $\eta\in[0,1]$, inner problem tolerances $\{\varepsilon_k\}\downarrow 0$, outer problem tolerances $\epsilon_w$, $\epsilon_{g}$, $\epsilon_{\bar{h}}$.
\State Initialize $w^{0}$
\State Set $k\gets 0$. Initial variable-change $||\Delta w^{0}||_{\infty} \gets \infty$.
\While{$||\triangle w^k||_{\infty} \geq \epsilon_w$ or $ ||(g(X^{k}))_{+}||_{\infty} \geq \epsilon_{g} $ or $||\bar{h}(w_{k})||_{\infty} \geq \epsilon_{\bar{h}}$ }
  \State \textbf{Safeguard multipliers:}
  \[
    \bar{\kappa}^{k} \gets \Pi_{[\kappa_{\min},\kappa_{\max}]}(\kappa^{k}),
    \qquad
    \bar{\mu}^{k} \gets \Pi_{[0,\mu_{\max}]}(\mu^{k}),
  \]
  where $\Pi$ denotes component-wise clipping. \label{line:clipping_safeguard}

  \State \textbf{Inner solve:} approximately solve the AuLa subproblem ~(\ref{eq:aula_sub}) with $\bar{\mu}^{k}$, $\bar{\kappa}^{k}$ and obtain $w^{k+1} = (X^{k+1},Y^{k+1},Z^{k+1})$ such that the inner stationary residual is below $\varepsilon_k$. \label{line:eps_stationary}

  \State Evaluate $\bar{h}_t$, $g_t$ using $w^{k+1}$, $\triangle w^k \gets w^{k+1} - w^k$
  \State \textbf{Multiplier update:}
  \[\kappa^{k+1} \gets \bar{\kappa}^{k} + \rho_{\bar{h}}^{k}\, \bar{h}_t
    ,
    \qquad
    \mu^{k+1}_t \gets \bigl(\bar{\mu}^{k}_t + \rho_g^{k}\, g_t\bigr)_{+},
  \]
  \State $\zeta^{k+1}_t \gets \min\!\bigl(\mu^{k+1}_t,\, -g_t\bigr)$ \label{line:define_6}
  \State \textbf{Penalty update:}
  \If{$k=0$ \textbf{or} $\max\{\|\zeta^{k+1}\|_{\infty},\,\|\bar{h}(w^{k+1})\|_{\infty}\}\le
  \eta\,\max\{\|\zeta^{k}\|_{\infty},\,\|\bar{h}(w^{k})\|_{\infty}\}$}
    \State $\rho_{\bar{h}}^{k+1}\gets \rho_{\bar{h}}^{k}$;\quad $\rho_g^{k+1}\gets \rho_g^{k}$.
  \Else
    \State $\rho_{\bar{h}}^{k+1}\gets \gamma\,\rho_{\bar{h}}^{k}$;\quad $\rho_g^{k+1}\gets \gamma\,\rho_g^{k}$.
  \EndIf \label{line:define_12}

  \State $k\gets k+1$.
\EndWhile
\end{algorithmic}
\end{algorithm}

The accuracy of each inner solve directly affects the convergence behavior of the AuLa outer loop.
To formalize this connection, we introduce a computable KKT residual for the AuLa
subproblem~(\ref{eq:aula_sub}) and use it as the inner-solve accuracy criterion in
line~\ref{line:eps_stationary} of Algorithm~\ref{alg:safeguarded_aula}.  This residual
verifies: (i) stationarity of the augmented objective, (ii) feasibility of the
complementarity set \(0\le Y\perp Z\ge 0\), and (iii) consistency of the
complementarity multipliers with the limiting normal cone of this set.  The full
definition of the KKT residual is given in Appendix~\ref{app:global_convergence_aula}.

\begin{definition}[$\epsilon$-stationarity]
\label{def:stationary_main}
At outer iteration \(k\), let \(\Phi^k\) denote the AuLa objective
in~(\ref{eq:aula_sub}) with safeguarded multipliers and penalty factors.  We say
\(w^k=(X^k,Y^k,Z^k)\) is \emph{\(\epsilon\)-stationary} for~(\ref{eq:aula_sub}) if
\[
    r_{\mathrm{in}}(w^k)
    \doteq
    \inf_{(u,v)\in\mathcal M_M(w^k)}
    r_{\mathrm{in}}(w^k;u,v)
    \le \epsilon,
\]
where \(r_{\mathrm{in}}\) is computed with respect to \(\Phi^k\), and
\(\mathcal M_M(w^k)\) is the branch-dependent multiplier set defined in
Appendix~\ref{app:global_convergence_aula}.
\end{definition}

The residual \(r_{\mathrm{in}}\) is chosen so that the inner and outer accuracy
criteria are consistent.  It gives a computable certificate for the AuLa
subproblem and is compatible with the gradient-mapping measure controlled by
the inner solver introduced later in Section~\ref{sec:inner_solver}.  Therefore,
the condition \(r_{\mathrm{in}}\le\varepsilon\) is both the accuracy requirement
used by the safeguarded AuLa analysis and a natural stopping measure for the
inner routine.  The formal statement and assumptions are deferred to
Appendix~\ref{app:inner_convergence_rigorous}.

We rephrase the global convergence/stationarity result of \cite[Thm.~3]{guo2022new}
(and the closely related statement in \cite[Cor.~4.4]{jia2023augmented})
in terms of the KKT residual:
\begin{theorem}[Global convergence of safeguarded AuLa under vanishing inner residual]
\label{thm:global_aula_from_inner}
Let $\{w^k\}$ be the sequence generated by Algorithm~\ref{alg:safeguarded_aula}. Assume that the safeguarded multipliers $(\bar\kappa^k,\bar\mu^k)$ remain bounded
(e.g., line \ref{line:clipping_safeguard} of Algorithm~\ref{alg:safeguarded_aula}). Suppose the AuLa subproblem~\eqref{eq:aula_sub}
is solved inexactly in the sense that, for each $k$, there exist $(u^k,v^k)$ such that
\[
r_{\mathrm{in}}(w^k;u^k,v^k)\ \le\ \varepsilon_k,
\qquad \varepsilon_k \downarrow 0 .
\]
Let $w^\star$ be a feasible accumulation point of $\{w^k\}$ for~\eqref{eq:vertical_form}
(equivalently, for~\eqref{eq:MPCC}). If the MPCC regularity condition assumed in \cite[Thm.~3]{guo2022new}
holds at $w^\star$, then $w^\star$ is a first-order stationary point of the MPCC
(in the sense of \cite[Thm.~3]{guo2022new}).
\end{theorem}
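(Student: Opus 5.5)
The plan is a reduction to \cite[Thm.~3]{guo2022new}. We only need to check that Algorithm~\ref{alg:safeguarded_aula}, together with the inexactness criterion $r_{\mathrm{in}}(w^k;u^k,v^k)\le\varepsilon_k$, produces iterates satisfying that theorem's hypotheses.

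\textbf{Step 1: translate the inner residual into an approximate stationarity system.} Unpacking the definition of $r_{\mathrm{in}}$ from Appendix~\ref{app:global_convergence_aula}, the bound $r_{\mathrm{in}}\le\varepsilon_k$ says three things:
\begin{itemize}
\item there is a vector $e^k$ with $\|e^k\|\le\varepsilon_k$ such that $\nabla\Phi^k(w^k)$ plus the complementarity multiplier terms $(0,-u^k,-v^k)$ equals $e^k$;
\item $(Y^k,Z^k)$ is feasible for $0\le Y\perp Z\ge0$, up to $O(\varepsilon_k)$;
\item $(u^k,v^k)\in\mathcal M_M(w^k)$, i.e.\ $(u^k,v^k)$ lies in the limiting normal cone of the complementarity set at $(Y^k,Z^k)$.
\end{itemize}
Expanding $\nabla\Phi^k$ with the updated multipliers $\kappa^{k+1}=\bar\kappa^k+\rho_{\bar h}^k\bar h(w^k)$ and $\mu^{k+1}=(\bar\mu^k+\rho_g^k g)_+$ gives
\begin{equation*}
\nabla_X r^\top r+\nabla\bar h^\top\kappa^{k+1}+\nabla g^\top\mu^{k+1}-(0,u^k,v^k)=e^k .
\end{equation*}
This is exactly the ``approximate M-stationarity'' sequence notion used in \cite{guo2022new,jia2023augmented}, where the complementarity constraint is kept as a lower-level set. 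I take this for granted as the appendix's design intent.

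\textbf{Step 2: apply the sequential optimality argument.} Pass to a subsequence $w^k\to w^\star$. There are two cases.
\begin{itemize}
\item If the full multiplier sequence $(\kappa^{k+1},\mu^{k+1},u^k,v^k)$ is bounded, extract a convergent sub-subsequence. Outer semicontinuity of the limiting normal cone gives $(u^\star,v^\star)\in N_{\mathcal C}(Y^\star,Z^\star)$, i.e.\ MPCC M-stationarity.
\item If it is unbounded, normalize by its norm and pass to the limit. This yields a nonzero multiplier satisfying the degenerate system, which the MPCC regularity (MPCC-CRCQ/constant-rank or MPCC-CPLD type, as in \cite[Thm.~3]{guo2022new}) rules out.
\end{itemize}

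Complementary slackness for $g$ comes from the safeguard structure. If $g_i(x^\star)<0$, then $\bar\mu_i^k+\rho_g^k g_i\to-\infty$ whenever $\rho_g^k\to\infty$, since $\bar\mu^k$ is bounded; so $\mu^{k+1}_i=0$ eventually. If $\rho_g^k$ stays bounded, the penalty test in lines~\ref{line:define_6}--\ref{line:define_12} forces $\zeta^k\to0$, which gives the same conclusion. Feasibility of $w^\star$ is assumed, so the vertical-form equalities hold at the limit. Equivalence with \eqref{eq:MPCC} follows from the preceding Remark, since the $Y,Z$ components of the stationarity system recover multipliers for $G,H$.

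\textbf{Main obstacle.} The delicate point is Step~1: confirming that the branch-dependent set $\mathcal M_M(w^k)$ is precisely the limiting normal cone, including the biactive case $y_i=z_i=0$. This is what lets the closedness argument in Step~2 go through. If $\mathcal M_M$ were smaller, for instance Bouligand/tangent-based, the limit would still be M-stationary. If it were larger (Clarke-type), only C-stationarity would follow. In that case I would state the conclusion in the weaker sense matching \cite[Thm.~3]{guo2022new}.
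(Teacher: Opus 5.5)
Your overall route is the paper's: show that $r_{\mathrm{in}}(w^k)\le\varepsilon_k$ is exactly the limiting-normal-cone inexactness criterion of \cite{guo2022new}, then invoke \cite[Thm.~3]{guo2022new}. The gap is that you ``take for granted'' the one step the paper actually proves, and you yourself list it as the main obstacle. The paper settles it by a direct computation (Lemma~\ref{lem:M_normal_cone}). For a single pair $C_i=\{(y_i,z_i):0\le y_i\perp z_i\ge0\}$, the limiting normal cone is $\{0\}\times\mathbb{R}$ on $I_{+0}$ and $\mathbb{R}\times\{0\}$ on $I_{0+}$. On $I_{00}$ it is $\mathbb{R}^2_-\cup(\mathbb{R}\times\{0\})\cup(\{0\}\times\mathbb{R})$: the regular normals $\mathbb{R}^2_-$ at the origin, plus limits of the normals along the two rays. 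The product rule for limiting normal cones then gives $N_{\mathcal{C}}(Y,Z)=\{(-u,-v):(u,v)\in\mathcal{M}_M(w)\}$ exactly. Since $X$ is unconstrained and $r_{\mathrm{pri}}=0$ on feasible iterates, this yields $r_{\mathrm{in}}(w)=\dist_\infty\bigl(0,\nabla\Phi(w)+\{0\}\times N_{\mathcal{C}}(Y,Z)\bigr)$ (Lemma~\ref{lem:rin_vs_normalcone_rewritten}). That is Guo--Deng's subproblem requirement up to norm-equivalence constants (Proposition~\ref{prop:aula_interface_rewritten}). This also resolves your hedge: $\mathcal{M}_M$ is neither smaller nor larger than the limiting cone, so the conclusion is M-stationarity, not merely C-stationarity. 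Note the sign as well: it is $(-u^k,-v^k)$, not $(u^k,v^k)$, that lies in $N_{\mathcal{C}}$. This matters at biactive pairs, where the cone contains $\mathbb{R}^2_-$ but not $\mathbb{R}^2_+$. Your Step~1 equation $\nabla\Phi^k-(0,u^k,v^k)=e^k$ has the correct sign, but your ``i.e.''\ and the limit statement in Step~2 do not.

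Your Step~2 re-derives the inside of \cite[Thm.~3]{guo2022new}, which the paper simply cites. You do not need it, and as written it would not be a complete substitute. Normalizing an unbounded multiplier sequence and contradicting regularity works for MPCC-MFCQ/NNAMCQ-type conditions. Constant-rank or CPLD-type conditions, which you name, first require a Carath\'eodory-type reduction to linearly independent gradient subsets before passing to the limit. Likewise, your bounded-penalty argument for $\zeta^k\to0$ needs $\eta<1$. Algorithm~\ref{alg:safeguarded_aula} allows $\eta=1$, where the test only enforces non-increase and does not force $\zeta^k\to0$. So either drop Step~2 and rely on the citation, as the paper does, or state these extra conditions explicitly.
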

Therefore, to ensure the \emph{attainability} of stationarity, we design an inner solver to meet the Definition~\ref{def:stationary_main} for any prescribed tolerance $\varepsilon>0$ within finitely many inner iterations. We present such a solver in the next section.

\section{A BCD Inner Solver for AuLa Subproblems}
\label{sec:inner_solver}

We solve the AuLa inner subproblem~\eqref{eq:aula_sub} with a two-block coordinate descent (BCD) scheme. Each BCD iteration alternates between updating the trajectory variables $X$ and updating the auxiliary complementarity variables $(Y,Z)$. 
The $X$-update is carried out by a Gauss-Newton update with globalization, and the $(Y,Z)$-update admits a closed-form solution under the complementarity constraints.

\paragraph{Block 1: $X$-update (Gauss--Newton with globalization)}
With $(Y,Z)$ fixed, the inner objective reduces to a smooth nonlinear least-squares problem in $X$. We solve it using a damped Gauss--Newton method with an Armijo backtracking line search to ensure sufficient decrease of the inner objective $\Phi^k$.

\paragraph{Block 2: $(Y,Z)$-update (closed-form minimization over a union of cones)}
With $X$ fixed, the $(Y,Z)$-subproblem decouples across time steps and complementary pairs. For $i$th scalar pair at time $t$: $(y_{t,i},z_{t,i})$ subject to $0\le y_{t,i}\perp z_{t,i}\ge 0$, the feasible set is the union of two convex cones,
$\mathcal{C} = \{(y,0)\mid y\ge 0\}\cup\{(0,z)\mid z\ge 0\}$. Thus we evaluate two closed-form candidates and select the one yielding the smaller quadratic penalty
contribution in~\eqref{eq:aula_sub}.

With the safeguarded multipliers at outer iteration $k$ and current penalty factor $\rho_{\bar{h}}$:

\noindent \emph{Case 1} ($z_{t,i}=0$):
\begin{align}
y^{\star}_{t,i} = \max\,\Bigl(0,\ G_i(x_t) + \tfrac{1}{\rho_{\bar{h}}}\,\kappa_{G,t,i}\Bigr),\qquad z^{\star}_{t,i}=0.
\label{eq:slack_case_y}
\end{align}
\emph{Case 2} ($y_{t,i}=0$):
\begin{align}
z^{\star}_{t,i} = \max\,\Bigl(0,\ H_i(x_t) + \tfrac{1}{\rho_{\bar{h}}}\,\kappa_{H,t,i}\Bigr),\qquad y^{\star}_{t,i}=0.
\label{eq:slack_case_z}
\end{align}
We then pick between \eqref{eq:slack_case_y} and \eqref{eq:slack_case_z} by comparing their resulting objective values (equivalently, their quadratic penalty contributions). This step is an \emph{exact} minimization over the complementarity set for fixed $X$ and hence never increases $\Phi^k$. We summarize the BCD approach as Algorithm~\ref{alg:inner_bcd}. A graphical depiction of the algorithm is shown in Fig.~\ref{fig:algorithm_comarison}.

\begin{remark}[Why this differs from the ADMM projection in \cite{bui2025push}]
The ADMM projection in \cite{bui2025push} is primarily feasibility-driven in consensus step: it directly projects the copied variable onto the complementarity-feasible set. 
In contrast, our closed-form update corresponds to a \emph{shifted} mapping induced by the current approximate Lagrange multipliers and penalty weights. Unlike an ADMM consensus update, this step does not introduce copied variables; it is obtained by directly minimizing the AuLa subproblem with respect to the slack variables.
This mapping injects the influence of the current multiplier/penalty into the slack variables, acting like a spring force that regularizes the primal update. 
As illustrated in Fig.~\ref{fig:algorithm_comarison}, \textsc{IMPACT} therefore provides more than a pure feasibility projection and can yield substantially faster progress.
\end{remark}

\begin{theorem}[Attainability of $\varepsilon$-stationarity by the BCD inner solver]
\label{thm:inner_attainability_main}
Fix an outer iteration $k$ and consider the BCD inner iterates $\{w^{(j)}\}=\{(X^{(j)},Y^{(j)},Z^{(j)})\}$
generated by Algorithm~\ref{alg:inner_bcd}. Assume: (i) $\Phi^k$ is continuously differentiable, $\nabla\Phi^k$ is Lipschitz on the bounded set visited by the iterates and \(\Phi^k\) is bounded below on \(\{(X,Y,Z):(Y,Z)\in\mathcal C\}\); (ii) the $X$-update is globalized and achieves sufficient decrease (e.g., damped GN with Armijo line search); and (iii) the $(Y,Z)$-update is an exact minimization over the complementarity set for fixed $X$ (as in \eqref{eq:slack_case_y}--\eqref{eq:slack_case_z}). Then, for any tolerance $\varepsilon_k>0$, there exists a finite inner iteration index $j_k$ and multipliers
$(u^{(j_k)},v^{(j_k)})$ such that
\[
r_{\mathrm{in}}\bigl(w^{(j_k)};u^{(j_k)},v^{(j_k)}\bigr)\ \le\ \varepsilon_k.
\]
In particular, the inner BCD loop can be terminated after finitely many iterations with an output $w^k$ that is
$\varepsilon_k$-stationary in the sense of Definition~\ref{def:stationary_main}.
\end{theorem}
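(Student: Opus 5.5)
The argument is a standard sufficient-decrease and summability argument for two-block coordinate descent, adapted to the nonconvex set $\mathcal C$. We work on the sublevel set $\mathcal L:=\{w:(Y,Z)\in\mathcal C,\ \Phi^k(w)\le\Phi^k(w^{(0)})\}$, whose visited portion is bounded by assumption~(i). We write $L$ for the Lipschitz constant of $\nabla\Phi^k$ there.

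\emph{Step 1: monotone decrease.} By assumption~(ii), the Armijo-globalized damped Gauss--Newton step yields
\[
\Phi^k(X^{(j+1)},Y^{(j)},Z^{(j)})\le \Phi^k(w^{(j)})-c\,\|\nabla_X\Phi^k(w^{(j)})\|^2
\]
for some $c>0$. This uses that the GN direction is a descent direction with uniformly bounded angle and norm (the Gauss--Newton matrix is damped, hence uniformly positive definite and bounded on the bounded set) and that the accepted step size is bounded below via $L$. By assumption~(iii), the $(Y,Z)$-update does not increase $\Phi^k$. Since $\Phi^k$ is bounded below on $\{(Y,Z)\in\mathcal C\}$, summing gives $\sum_j\|\nabla_X\Phi^k(w^{(j)})\|^2<\infty$, so $\nabla_X\Phi^k(w^{(j)})\to0$.

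\emph{Step 2: stationarity in the slack block.} After the exact $(Y,Z)$-minimization, each scalar pair $(y_{t,i},z_{t,i})$ is a global minimizer over $\mathcal C$ of a separable strongly convex quadratic. We construct the multipliers $(u,v)$ branchwise from the closed forms \eqref{eq:slack_case_y}--\eqref{eq:slack_case_z}, setting $u_{t,i}=\partial_{y}\Phi^k$ and $v_{t,i}=\partial_z\Phi^k$:
\begin{itemize}
\item if $y>0,z=0$, then $u=0$ and $v$ is free;
\item if $y=z=0$, the optimal-choice comparison forces both shifted targets to be $\le0$, so $u,v\ge0$.
\end{itemize}
These multipliers lie exactly in the limiting normal cone of $\mathcal C$, so they belong to $\mathcal M_M(w)$ and the slack-stationarity and complementarity-feasibility components of $r_{\mathrm{in}}$ vanish exactly at $(X^{(j+1)},Y^{(j+1)},Z^{(j+1)})$.

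\emph{Step 3: transfer of the $X$-gradient.} The remaining component of $r_{\mathrm{in}}$ is $\|\nabla_X\Phi^k\|$ evaluated at the post-slack-update point $w^{(j+1)}$, whereas Step~1 controls it at the point before the $X$-step. This is the main obstacle. The plan is to bound
\[
\|\nabla_X\Phi^k(w^{(j+1)})\|\le \|\nabla_X\Phi^k(w^{(j)})\|+L\|w^{(j+1)}-w^{(j)}\|,
\]
and to show that both terms tend to zero. The $X$-displacement is controlled by the step length, which is at most a constant times $\|\nabla_X\Phi^k\|$ by the damping. For the slack displacement, I would exploit that $\Phi^k$ is $\rho_{\bar h}$-strongly convex in $(Y,Z)$ along each branch. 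Hence whenever the selected branch is unchanged, exact minimization gives the decrease $\Phi^k_{\mathrm{before}}-\Phi^k_{\mathrm{after}}\ge\frac{\rho_{\bar h}}2\|\Delta(Y,Z)\|^2$, and $\sum_j\|\Delta(Y,Z)\|^2<\infty$ follows.

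Branch switches are the delicate part. A switch happens only when the two candidate values cross. I would handle it by noting that the slack pair at iterate $j$ is still feasible for the iterate-$(j+1)$ slack problem. The decrease quadratic-growth estimate then holds relative to the previous point, with the chosen minimizer being at least as good. Strong convexity of the one-dimensional quadratic restricted to the cone containing the old point gives the same bound up to the cross-branch gap, and that gap is itself bounded by the decrease. If this fails, the fallback is a subsequence argument: combine $\liminf$ of the residual with the fact that only the infimum over $j$ is needed.

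\emph{Conclusion.} With $\sum_j\big(\|\nabla_X\Phi^k\|^2+\|\Delta w\|^2\big)<\infty$, we get $r_{\mathrm{in}}(w^{(j)};u^{(j)},v^{(j)})\to0$ along the sequence, or at least along a subsequence. Hence a finite $j_k$ exists with $r_{\mathrm{in}}\le\varepsilon_k$, and taking the infimum over $\mathcal M_M$ yields $\varepsilon_k$-stationarity in the sense of Definition~\ref{def:stationary_main}.
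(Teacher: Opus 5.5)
\textbf{The gap is in Step 3.} Your conclusion depends on it, since you finish by invoking $\sum_j\|\Delta w\|^2<\infty$, and the branch-switch estimate used there is false. Take one pair with shifted targets $a=G_i+\kappa_{G,i}/\rho_{\bar h}$ and $b=H_i+\kappa_{H,i}/\rho_{\bar h}$, both positive. The two candidates have values $\tfrac{\rho_{\bar h}}{2}b^2$ (for $(a,0)$) and $\tfrac{\rho_{\bar h}}{2}a^2$ (for $(0,b)$). Suppose the $X$-step moves the pair slightly across the tie $a\approx b\approx s>0$. Then the slack jumps from about $(s,0)$ to about $(0,s)$, a displacement of about $\sqrt2\,s$. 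The decrease produced by the slack update, however, is only $\tfrac{\rho_{\bar h}}{2}\bigl[(a^{(j+1)}-a^{(j)})^2+(b^{(j+1)})^2-(a^{(j+1)})^2\bigr]$, which can be arbitrarily small. So the cross-branch gap is not controlled by the decrease, and square-summability of $\Delta(Y,Z)$ does not follow. Moreover, at such a flip the pair's contribution to $\nabla_X\Phi^k$ genuinely jumps, from $\rho_{\bar h}\,b\,\nabla H_i$ to $\rho_{\bar h}\,a\,\nabla G_i$. The right-hand side of your Lipschitz transfer inequality therefore need not become small, and the unspecified subsequence fallback does not address this.

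\textbf{The obstacle Step 3 attacks does not exist.} Your Step 1 inequality measures the decrease by $\nabla_X\Phi^k$ at $w^{(j)}=(X^{(j)},Y^{(j)},Z^{(j)})$, the starting point of the next $X$-step. For $j\ge1$ this point is exactly the output of the previous sweep's slack update, i.e.\ a completed iterate. Step 2 therefore applies at $w^{(j)}$ itself: there are $(u^{(j)},v^{(j)})\in\mathcal M_M(w^{(j)})$ with $r_{\mathrm{in}}(w^{(j)};u^{(j)},v^{(j)})=\|\nabla_X\Phi^k(w^{(j)})\|_\infty$. Step 1 already gives $\sum_j\|\nabla_X\Phi^k(w^{(j)})\|^2<\infty$, hence $r_{\mathrm{in}}(w^{(j)})\to0$ along the whole sequence, with $\min_{j\le N}r_{\mathrm{in}}(w^{(j)})=O(N^{-1/2})$.

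This is exactly the paper's proof. It combines square-summability of the $X$-block residual from sufficient decrease with a lemma: exact $(Y,Z)$-minimization (Fermat rule plus the normal-cone description of $\mathcal C$) reduces $r_{\mathrm{in}}$ to the $X$-block residual at every completed iterate. A large $X$-gradient created by a branch flip is harmless. It is the gradient at which the next Armijo step starts, so it is paid for by that step's decrease, and such jumps are automatically square-summable. Delete Step 3 and reindex; Steps 1 and 2 then form a complete proof.
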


The proof is given in Appendix~\ref{app:inner_convergence_rigorous}: sufficient decrease makes the \(X\)-block residual attainable, and exact \((Y,Z)\)-minimization eliminates the complementarity-block part of the KKT residual.

\textbf{Inner-solver guarantee and practical stopping.}
The inner optimizer is \emph{not} intended to certify global optimality of the nonconvex subproblem.
For the purpose of \emph{outer-loop convergence analysis}, we adopt an \emph{attainability} interface:
for any tolerance $\varepsilon>0$, the inner procedure can produce (in finitely many iterations) an iterate whose
stationarity measure is below $\varepsilon$ under suitable multipliers (Theorem~\ref{thm:inner_attainability_main}).
Our implementation, however, uses an \emph{inexact/budgeted} inner solve:
we do not explicitly test stationarity and instead stop when the inner augmented objective stagnates,
$|\Phi^k(w^{(j)})-\Phi^k(w^{(j+1)})|\le \tau_k$.
Empirically, this criterion consistently yields sufficiently small constraint and complementarity residuals
and stable outer-loop progress on all benchmarks; see Appendix~\ref{app:inner_convergence_rigorous} for definitions and additional diagnostics.

\begin{algorithm}[t]
\caption{Inner Solver: BCD for the AuLa subproblem~\eqref{eq:aula_sub}}
\label{alg:inner_bcd}
\begin{algorithmic}[1]
\Require Outer index $k$, safeguarded multipliers $(\bar\kappa^k,\bar\mu^k)$, penalties $(\rho_{\bar h}^k,\rho_g^k)$,
initial $(X^{(0)},Y^{(0)},Z^{(0)})$, stagnation tolerance $\tau_k$
\State Set $w^{(0)}\gets (X^{(0)},Y^{(0)},Z^{(0)})$ and evaluate $\Phi ^k(w^{(0)})$.
\For{$j=0,1,2,\dots$}
  \State \textbf{$X$-update:} compute a damped GN update on $\Phi^k(\cdot,Y^{(j)},Z^{(j)})$ with backtracking line search; obtain $X^{(j+1)}$.
  \State \textbf{$(Y,Z)$-update:} for each $(t,i)$, compute candidates \eqref{eq:slack_case_y} and \eqref{eq:slack_case_z} and select the one with smaller objective; obtain $(Y^{(j+1)},Z^{(j+1)})$.
  \State Set $w^{(j+1)}\gets (X^{(j+1)},Y^{(j+1)},Z^{(j+1)})$ and evaluate $\Phi^k(w^{(j+1)})$.
  \State \textbf{Practical stopping:} if $|\Phi^k(w^{(j)})-\Phi^k(w^{(j+1)})|\le \tau_k$, 
  \State \hspace{0.5cm} \textbf{return} $w^k\gets w^{(j+1)}$.
\EndFor
\end{algorithmic}
\end{algorithm}
\section{Experiments}
In this section, we evaluate our planner on two open-source benchmarks. For long-horizon planning, we use the CITO benchmark from CRISP \cite{li2025surprising}. For dexterous manipulation under multi-contact dynamics, we use the CI-MPC benchmark from \cite{jin2024complementarity}. All experiments are run on a desktop PC with an AMD Ryzen 9 7950X3D CPU, with no GPU acceleration. Our IMPACT implementation uses no explicit multi-threading (e.g., OpenMP/TBB). It links against BLAS/LAPACK, uses Eigen’s sparse LDLT factorization for linear solves, and updates slack pairs using Eigen’s SIMD-vectorized operations. We use CasADi’s C++ interface for modeling and symbolic differentiation \cite{andersson2019casadi}. CasADi CodeGen is disabled for both CITO and CI-MPC.

\subsection{Long Horizon CITO Benchmark}
The CRISP CITO benchmark comprises six nonlinear, hybrid-dynamics, contact-rich problems for long-horizon planning. The CRISP paper compares standard NLP solvers on four tasks. We exclude the one task that requires a manually specified initial guess, which is outside our scope, and evaluate on three tasks: Push Box, Cart Transport, and Push T.

Rather than comparing against generic NLP solvers that do not explicitly handle complementarity constraints and are often brittle on MPCCs (e.g., SNOPT and PROXNLP \cite{li2025surprising}), we compare against solution strategies that are known to perform well in practice: \textbf{Scholtes relaxation (SR)}, the square \textbf{penalty method (PM)}, and \textbf{CRISP}. To handle the long planning horizons in these benchmarks, we formulate all evaluated methods using multiple shooting. We implement SR and PM using the IPOPT C++ interface \cite{biegler2009large}, with symbolic automatic differentiation provided by the CasADi C++ interface. For \textbf{CRISP}, we directly use the released C++ implementation provided by the authors, which is specialized to the benchmark problems.

For \textbf{SR}, we enforce the nonnegativity constraints $G(x)\ge 0$ and $H(x)\ge 0$ explicitly as inequalities, which IPOPT handles reliably, and relax complementarity via $G(x)H(x)\le t$. We then solve a sequence of relaxed problems, initialized with $t=1.0$ and reducing $t$ by a factor of $10$ at each outer iteration until $t=10^{-10}$. In our experiments, this continuation schedule provides a robust trade-off between numerical stability and runtime. The outer loop terminates early when the complementarity residual falls below a prescribed tolerance. For efficiency, we warm-start IPOPT using the previous iterate’s primal and dual variables.

For \textbf{PM}, we keep $G(x)\ge 0$ and $H(x)\ge 0$ as explicit inequality constraints and penalize complementarity using a squared penalty term with a relatively large penalty weight. Empirically, this configuration is stable and often more efficient than starting from a small penalty and gradually increasing it. While a large penalty can make the inner NLP more ill-conditioned, we find IPOPT remains effective in this regime and can be more efficient than an iterative NLP scheme. 
\begin{figure}
    \centering
    \includegraphics[width=0.75\linewidth]{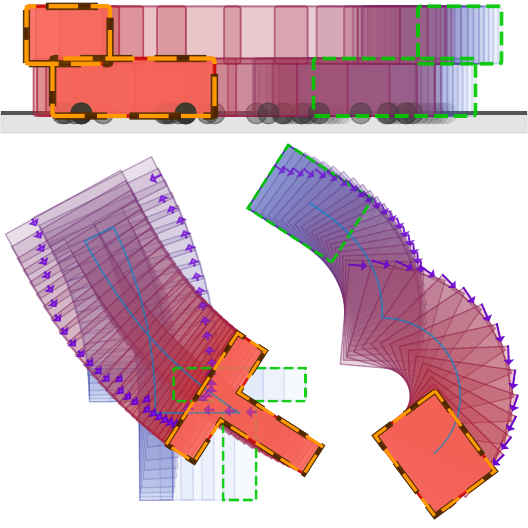}
    \caption{\textsc{IMPACT} planning demos on three CITO tasks. The green dashed box marks the start pose and the orange dashed box marks the goal pose. Purple arrows visualize contact forces, and the solid green line traces the object-origin trajectory. Color indicates time from early (light blue) to late (dark orange).}
    \label{fig:benchmark1}
    \vspace{-3.2mm}
\end{figure}

The benchmark problems are shown in Fig.~\ref{fig:benchmark1}, and we provide detailed descriptions in Appendix~\ref{app:hypper_setup}. We remark that these tasks require forward integration of nonlinear dynamics and include general equality and inequality constraints, so they are not directly cast as LCPs solvable by standard LCP solvers. We report problem sizes in terms of decision variables, complementarity pairs, and constraint counts: \textit{Push Box} has 453 variables, 500 complementarity pairs, and 150 dynamics constraints. \textit{Push T} has 1353 variables, 2150 complementarity pairs, 150 dynamics constraints, 350 equality constraints, and 200 inequality constraints. \textit{Cart Transport} has 2404 variables, 900 complementarity pairs, 1200 dynamics constraints, 300 equality constraints, and 1200 inequality constraints.

\noindent\textbf{Solver Settings and Performance Metrics:}
We use an all-zero initialization for every solver on all tasks, following the standardized CRISP comparison setup. This choice avoids task-specific initial guesses and makes the comparison reproducible across methods. We note, however, that warm-started initialization is common in practical TO/MPC pipelines, and the relative advantage of \textsc{IMPACT} may differ in such settings. For each task, all methods (SR/PM/\textsc{IMPACT} and CRISP) are synchronized to solve the same optimization problem with identical objective-function weights. For \textit{Push Box} and \textit{Push T}, we use a horizon of $T=50$ with $\Delta t=0.05$; for \textit{Cart Transport}, we use $T=300$ with $\Delta t=0.02$. For \textit{Push Box} and \textit{Push T}, we randomly sample 50 goal poses, and for \textit{Cart Transport}, we randomly sample 50 start-goal pairs.

We run each planner until it satisfies a common set of termination criteria. Specifically, we require all constraint violations to be below $10^{-5}$, including (for example) the complementarity violation $\lVert G(x)\circ H(x)\rVert_{\infty} \le 10^{-5}$. A run is declared a failure if (i) it reaches the maximum iteration limit (2000 in our experiments) while still violating either stopping criterion, or (ii) the resulting trajectory fails to reach the goal (e.g., becoming stuck in an intermediate configuration), a failure mode also reported in the CRISP paper. Otherwise, the run is counted as successful. We report success rate, total tracking error, iteration count, and wall-clock runtime in Table~\ref{tab:cito_results}. We report total tracking error as an end-to-end task metric; lower values typically indicate trajectories that reach the goal sooner and maintain better tracking, which often correlates with more efficient contact-mode progress. For \textsc{IMPACT}, we report the number of BCD sweeps, where one sweep corresponds to a full pass over all blocks. For CRISP, we report the total number of QP solves (including those from second-order corrections) as the subproblem-count metric. The MPCC formulations and all solver hyperparameters are provided in Appendix~\ref{app:hypper_setup}.



\begin{table}[t]
\centering
\caption{CITO benchmark results reported as mean $\pm$ 95\% confidence interval (CI). For each task and metric, the best method is highlighted in \textcolor{bestred}{\textbf{red}} and the second best in \textcolor{secondblue}{\textbf{blue}}.}
\label{tab:cito_results}
\setlength{\tabcolsep}{2pt}
\renewcommand{\arraystretch}{1.05}
\begin{tabular}{@{}l|ccccc@{}}
\toprule
Task & Solver & Success (\%) & Track. Err. $\downarrow$ & Iters $\downarrow$ & Time (s) $\downarrow$ \\
\midrule
\multirow{4}{*}{\textit{Box}}
& SR      & \val{100.0} & \best{19.46 \pm 2.30}     & \second{164 \pm 12} & \val{1.27 \pm 0.10} \\
& PM      & \val{98.0} & \val{58.00 \pm 5.11}      & \best{66 \pm 13}    & \second{0.85 \pm 0.17} \\
& CRISP   & \val{100.0} & \val{51.90 \pm 4.08}   & \val{641 \pm 358}   & \val{2.52 \pm 1.34} \\
& IMPACT  & \val{100.0} & \second{26.99 \pm 2.81}      & \val{219 \pm 49}    & \best{0.15 \pm 0.03} \\
\midrule
\multirow{4}{*}{\textit{T}}
& SR      & \val{100.0} & \best{1.24 \pm 0.13}      & \second{202 \pm 11} & \val{5.50 \pm 0.38} \\
& PM      & \val{90.0} & \val{21.99 \pm 2.57}      & \best{117 \pm 14}   & \second{2.96 \pm 0.39} \\
& CRISP   & \val{100.0} & \second{8.03 \pm 0.71}    & \val{1243 \pm 407}  & \val{25.73 \pm 6.92} \\
& IMPACT  & \val{100.0} & \val{8.48 \pm 1.94}       & \val{606 \pm 225}   & \best{1.03 \pm 0.44} \\
\midrule
\multirow{4}{*}{\textit{Cart}}
& SR      & \val{100.0} & \val{433.08 \pm 44.92}    & \val{261 \pm 22}    & \val{5.63 \pm 0.57} \\
& PM      & \val{100.0} & \val{408.65 \pm 42.32}    & \best{87 \pm 14}    & \second{2.00 \pm 0.32} \\
& CRISP   & \val{100.0} & \best{235.10 \pm 21.70}   & \val{501 \pm 381}   & \val{2.72 \pm 2.05} \\
& IMPACT  & \val{100.0} & \second{381.89 \pm 32.35} & \second{101 \pm 15} & \best{0.08 \pm 0.01} \\
\bottomrule
\end{tabular}
\end{table}
Table~\ref{tab:cito_results} shows a quality-speed trade-off across solvers. In terms of \emph{mode/trajectory quality} (total tracking error), SR is best on \textit{Push Box} and \textit{Push T}, while CRISP is best on \textit{Cart Transport}. \textsc{IMPACT} remains competitive: it is second-best on \textit{Push Box} and \textit{Cart Transport}, and close to CRISP on \textit{Push T}. In contrast, PM yields the largest tracking errors and is also less robust, with failures on \textit{Push T} (90\% success) and \textit{Push Box} (98\% success). In terms of \emph{efficiency}, \textsc{IMPACT} provides a dominant runtime advantage. Despite PM having fewer iterations, the penalized subproblems can be ill-conditioned, so wall-clock time does not scale with iteration count. We find that driving the complementarity violation down to the $10^{-5}$ level typically requires multiple continuation iterations in CRISP, leading to increased runtime. SR also remains comparatively expensive, as we find that warm-starting does not substantially reduce the iteration count in practice. Quantitatively, \textsc{IMPACT} is $16.8\times$, $25.0\times$, and $34.0\times$ faster than CRISP on \textit{Box}, \textit{T}, and \textit{Cart}, respectively (geometric mean $24.3\times$). Even relative to the fastest baseline in time (PM), \textsc{IMPACT} achieves $5.7\times$, $2.9\times$, and $25.0\times$ speedups (geometric mean $7.4\times$). Across all baseline comparisons in Table~\ref{tab:cito_results}, this corresponds to a $2.9\times$--$70\times$ speedup range with a geometric mean of $13.8\times$.

\label{subsec:ci-mpc}
\begin{figure*}[t]
    \centering
    \includegraphics[width=\linewidth]{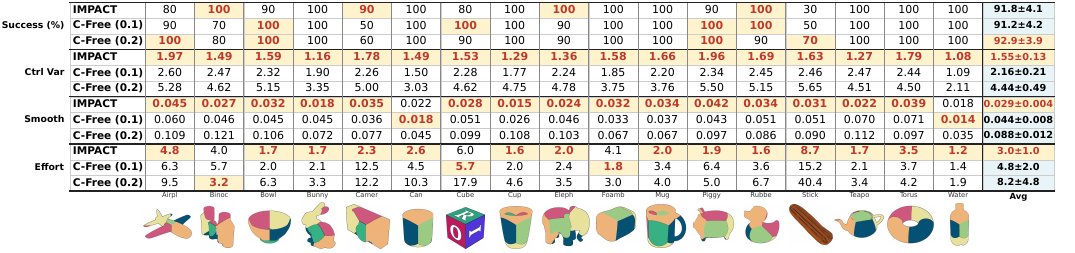}
    \caption{Allegro-Hand CI-MPC benchmark results on 17 objects. We compare \textsc{IMPACT} against \textit{cfree}(0.1) and \textit{cfree}(0.2), which use control bounds $[-0.1,0.1]$ and $[-0.2,0.2]$, respectively. Reported metrics include \textbf{success rate} (\textbf{Success}) and control-quality measures: control variance (\textbf{Ctrl Var}), control smoothness (\textbf{Smooth}), and control effort (\textbf{Effort}). Table columns use these abbreviations. For each object, the best value for each metric is highlighted, and the final column reports the aggregate results with a 95\% confidence region across all 17 objects.}
    \label{fig:CIMPC_benchmark_result}
    \vspace{-3.2mm}
\end{figure*}

\subsection{Real-Robot Push-T Demonstration}
To evaluate \textsc{IMPACT} beyond simulation, we deploy it on a real robotic pushing setup for the Push-T task as a qualitative hardware demonstration (Fig.~\ref{fig:IMPACT_demo_allegro_pushT}). We first solve a full-horizon TO problem to obtain a reference pushing trajectory, and then execute it on the robot by commanding the end-effector to emulate the pusher and track the reference. If the object deviates too much from the nominal trajectory, we replan online. To improve sim-to-real fidelity, we tune the rotational \emph{contact-radius} constraint in the contact model based on observed rollouts. We run $10$ trials with different initial object configurations; \textsc{IMPACT} reaches the target in all trials, achieving a $100\%$ success rate. Additional videos are provided in the supplementary material.

\subsection{Multi-contact High Dimensional CI-MPC Benchmark}
To evaluate the scalability of \textsc{IMPACT} in a realistic CI-MPC setting, we use the Allegro Hand meshable-object reorientation benchmark introduced in \cite{jin2024complementarity}. Following \cite{jin2024complementarity}, we refer to the overall evaluation setup under their framework as \textit{cfree} in the remainder of the paper. As a point of reference, the \textit{cfree} pipeline relies on \textbf{smoothing-based} contact modeling together with the interior-point NLP solver IPOPT, and achieves a $50$-$60 \, \mathrm{Hz}$ CI-MPC on our test PC. However, smoothing-based CI-MPC can exhibit undesired closed-loop behavior: smoothed contact transitions may degrade control smoothness and induce oscillations during execution. In contrast, our goal in this experiment is to demonstrate that \textsc{IMPACT} scales to this benchmark and achieves a practically useful control rate of around $10\,\mathrm{Hz}$ on the same hardware. Importantly, this rate is achieved while directly solving a \emph{multi-contact, nonsmooth, MPCC-based} MPC problem \emph{without} smoothing or relaxation. 
Such problems are widely considered challenging for real-time control due to the combinatorial mode structure and nonsmooth contact dynamics.

The \textit{cfree} benchmark includes 17 objects spanning a wide range of surface curvatures and shapes; see Fig.~\ref{fig:CIMPC_benchmark_result} for the object set. The Allegro-hand setup can involve up to 20 simultaneous detected contact points; the contact complementarity conditions below are instantiated for each detected contact. We adhere closely to the cfree setup and keep all hyperparameters unchanged, including (but not limited to) the simulation time step, friction coefficients, and inertial parameters. The full list of parameters is provided in Appendix~\ref{app:hypper_setup}. 

In \textsc{IMPACT}, we optimize over the generalized velocities ${\mathbf{v}_t}$, contact forces ${\boldsymbol{\beta}_t}$ and control $\mathbf{u}_t$; given the relatively short MPC horizon, we use a single-shooting formulation and eliminate the position states by forward integration. The MPC horizon matches \textit{cfree} and is set to $4$. Following \textit{cfree}, we compute the contact Jacobian only at the MPC initial state and hold it fixed over the prediction horizon.

We keep the same objective structure as \textit{cfree}, but replace its quaternion inner-product term with a log-map rotation residual in $\mathbb{R}^3$, yielding a standard nonlinear least-squares form that better matches our Gauss--Newton pipeline. To keep the relative influence of the rotation terms comparable, we increase its weight accordingly. To improve closed-loop damping, we additionally include a small penalty on object velocity; in contrast, \textit{cfree} primarily introduces damping via penalties on contact forces expressed in the contact frame. We do not perform per-object tuning: as in \textit{cfree}, we use a single shared set of control parameters across all 17 objects. For initialization, we follow the same strategy as \textit{cfree} by adding small random perturbations to the object’s start and goal poses in hand. We also adopt the same success criteria as \textit{cfree}: a squared position-norm error below $0.02$ and a quaternion inner-product error below $0.04$. Each object is evaluated over 10 trials. The total simulation horizon is 1000 time steps with a time step of 0.1 s. One key difference from the default \textit{cfree} setup is the control bound: we use $[-0.1,0.1]$ to promote a locally stable contact mode. For comparison, we also report \textit{cfree} results with control bounds $[-0.1,0.1]$ and $[-0.2,0.2]$. Our \textsc{IMPACT} experiments use MuJoCo's C API via a C++ interface to match our optimizer implementation, whereas we run \textit{cfree} using its released Python implementation. The experimental results are reported in Fig.~\ref{fig:CIMPC_benchmark_result}. Besides the success rate, we report three control-quality metrics. Control variance is defined as $\frac{1}{d}\sum_{i=1}^{d}\mathrm{Var}(\mathbf{b}_i)$, where $d$ is the dimension of the control signal. Control smoothness is computed as $\frac{1}{T-2}\sum_{t=1}^{T-2}\lVert \mathbf{u}_{t+1}-\mathbf{u}_{t}\rVert_2^2$, and control effort as $\sum_{t=1}^{T-1}\lVert \mathbf{u}_t\rVert_2^2$, where $T$ is the trial length.

Fig.~\ref{fig:CIMPC_benchmark_result} summarizes performance across the 17 objects. \textsc{IMPACT} achieves a success rate comparable to \textit{cfree} (avg.\ $91.8\%\pm4.1$ vs.\ $91.2\%\pm4.2$ for \textit{cfree}(0.1) and $92.9\%\pm3.9$ for \textit{cfree}(0.2)), while consistently improving control quality: variance $1.55\pm0.13$ (vs.\ $2.16\pm0.21$ and $4.44\pm0.49$), smoothness $0.029\pm0.004$ (vs.\ $0.044\pm0.008$ and $0.088\pm0.012$), and effort $3.0\pm1.0$ (vs.\ $4.8\pm2.0$ and $8.2\pm4.8$) for \textit{cfree}(0.1/0.2) \cite{jin2024complementarity}. Increasing \textit{cfree}'s control bound from $[-0.1,0.1]$ to $[-0.2,0.2]$ preserves success rate but substantially worsens these control-quality metrics, consistent with more oscillatory closed-loop behavior under aggressive inputs. The \textit{stick} object is a notable outlier where \textsc{IMPACT} performs worse; we attribute this to the heightened sensitivity of long, slender geometries to LCP-based contact modeling, which can induce frequent mode switching. In terms of efficiency, \textsc{IMPACT} completes successful trials in $66.5\pm10.9$ MPC steps, comparable to \textit{cfree}(0.1) $71.4\pm13.5$ and \textit{cfree}(0.2) $57.8\pm14.6$, but runs at a lower control rate of $9.53\pm0.36$ Hz (vs.\ $50.6\pm0.86$ and $54.0\pm1.30$ Hz).

\section{Limitations}

\paragraph{Contact-mode stability under limited compute}
In the Allegro-hand CI-MPC experiments, we observe a transient regime before contact is firmly established in which the optimizer may switch between nearby contact sets. This behavior is most pronounced under tight real-time compute budgets, where only coarse inner solves are feasible and the contact-branch selections can vary across iterations. Despite this, \textsc{IMPACT} scales well to high-dimensional MPCC-based CI-MPC and achieves strong aggregate performance on the benchmark. Incorporating additional stabilization mechanisms (e.g., trust-region \cite{suh2025dexterous} or hysteresis on mode updates) is a promising direction to further reduce contact-set switching in this regime.

\paragraph{Local planning and dependence on contact-encouraging costs}
Under a limited compute budget, \textsc{IMPACT} in the Allegro-hand setup operates as a local planner. While it is effective and yields smooth behavior in many cases, it can be sensitive to local minima and depends on a contact-encouraging cost adopted from \cite{jin2024complementarity}. This limitation is most visible on the stick object, where slender geometry makes contact maintenance highly sensitive. Such heuristic contact terms can increase sensitivity to hyperparameters and contribute to residual oscillations; the smoother Push-T results suggest that global geometric encodings can reduce this reliance. Promising directions include incorporating offline reinforcement learning to provide global guidance and reduce reliance on hand-crafted contact costs, as well as exploring object representations beyond local parameterizations to improve robustness. We leave these directions to future work.

\section{Conclusion} 
\label{sec:conclusion}
We presented \textsc{IMPACT}, an implicit contact active-set augmented Lagrangian method for fast contact-implicit trajectory optimization. \textsc{IMPACT} combines a safeguarded AuLa outer loop for MPCCs with a structured block coordinate descent inner solver, enabling efficient contact-mode discovery without a prescribed continuation schedule while retaining the original nonsmooth complementarity structure.

We implemented \textsc{IMPACT} in C++ and evaluated it on two open-source benchmarks. On the CRISP long-horizon CITO suite, \textsc{IMPACT} achieves substantial runtime gains over strong baselines, with speedups ranging from $2.9\times$ to $70\times$ (geometric mean $13.8\times$). In particular, compared with CRISP, \textsc{IMPACT} is $16.8\times$, $25.0\times$, and $34.0\times$ faster on Push Box, Push T, and Cart Transport, respectively (geometric mean $24.3\times$), while maintaining competitive task-level tracking performance under the same feasibility/stationarity termination criteria.

On the high-dimensional multi-contact CI-MPC benchmark for dexterous manipulation, we demonstrate that the proposed \textsc{IMPACT} architecture scales beyond offline planning to closed-loop control. \textsc{IMPACT} scales to multi-contact dynamics in generalized coordinates and, at comparable success rates to the baseline, yields improved control-quality metrics (lower control variance/smoothness/effort). Finally, we demonstrate the proposed method on real robotic hardware in a Push-T manipulation task.

\section*{Acknowledgments}

This work was supported by the DFG Emmy Noether Programme (CH 2676/1-1), the EU Horizon Europe projects MANiBOT (101120823) and ARISE (101135959), the BMFTR project RIG (16ME1001), and the ERC project SIREN (101163933). We also acknowledge support from the hessian.AI Service Center (BMFTR, 16IS22091), the hessian.AI Innovation Lab (S-DIW04/0013/003), TAM, RAI, Google, and the Alfried Krupp Foundation.

\bibliographystyle{unsrtnat}
\bibliography{references}

\clearpage    
\onecolumn      

\section*{\textbf{Supplementary Material}}
\appendices

In the supplementary material, Appendices~\ref{app:global_convergence_aula} and~\ref{app:inner_convergence_rigorous} clarify the theoretical stationarity interface used by IMPACT. Appendices~\ref{app:hypper_setup} and~\ref{app:additional_exps} provide the hyperparameters used in our experiments and additional results.

Appendix~\ref{app:global_convergence_aula} defines the corrected inner KKT residual $r_{\rm in}$, whose multiplier set is consistent with the limiting normal cone of the complementarity set
\[
    \mathcal C := \{(Y,Z): 0 \le Y \perp Z \ge 0\}.
\]
Appendix~\ref{app:inner_convergence_rigorous} proves that the BCD inner solver can attain this residual to any prescribed tolerance under the stated assumptions.
This provides the inner-solve interface required by the safeguarded AuLa convergence result.

\paragraph{\textbf{Scope}}
Throughout this work, ``convergence/stationarity'' refers to the standard safeguarded AuLa stationarity guarantee for MPCCs: under bounded safeguarded multipliers, vanishing inner residuals
\[
    r_{\rm in}(w^k) \le \epsilon_k,
    \qquad \epsilon_k \downarrow 0,
\]
and the stated MPCC regularity condition, every feasible accumulation point of the outer iterates is M-stationary\footnote{We refer the reader to \cite{nurkanovic2023solving} for the relevant notions of stationarity; these notions are classified by the signs and feasible sets of the multipliers associated with the complementarity constraints. Throughout this paper, we do not otherwise distinguish among them.} for the vertical MPCC formulation.
We do not claim whole-sequence convergence, unconditional feasibility, or existence of accumulation points.
By equivalence of the vertical reformulation, the stationarity statement also applies to the original MPCC after eliminating the slack variables.

\section{Inner stationarity interface for the safeguarded AuLa wrapper}
\label{app:global_convergence_aula}

This appendix makes explicit the inner-stationarity interface used by the safeguarded AuLa wrapper.  The key point is that the complementarity variables are kept as hard constraints in the AuLa
subproblem, and therefore the inner residual must be consistent with the limiting normal cone of the
complementarity set
\[
    \mathcal C
    \doteq
    \{(Y,Z): 0\le Y \perp Z \ge 0\}.
\]
We use the M-stationarity convention of MPCC theory and of the safeguarded AuLa method of Guo and Deng
\cite{guo2022new}.  In particular, the multipliers associated with the complementarity variables are not subject to
a global sign restriction.  Their admissible signs depend on the local complementarity status of each pair
\((Y_i,Z_i)\).

Throughout Appendix~\ref{app:global_convergence_aula} and Appendix~\ref{app:inner_convergence_rigorous}, we
fix an outer iteration \(k\) and write
\[
    \Phi \equiv \Phi^k
\]
for the corresponding inner AuLa objective in~\eqref{eq:aula_sub}.  The inner subproblem is
\[
    \min_{X,Y,Z} \ \Phi(X,Y,Z)
    \qquad
    \text{s.t.}\qquad
    (Y,Z)\in\mathcal C.
\]
The smooth equality and inequality constraints of the original MPCC are already represented in \(\Phi\) through
the safeguarded AuLa terms.  The complementarity set \(\mathcal C\), however, is not penalized or relaxed.

\paragraph{Sign convention}
We use the Lagrangian sign convention
\[
    \mathcal L_{\rm in}(w,u,v)
    \doteq
    \Phi(w)-u^\top Y-v^\top Z,
    \qquad
    w=(X,Y,Z).
\]
Thus the complementarity normal-cone element is represented as
\[
    (-u,-v)\in N_{\mathcal C}(Y,Z),
\]
where \(N_{\mathcal C}\) denotes the limiting normal cone.

\subsection{Stationarity-consistent multipliers and the KKT residual}
\label{app:setup_stationarity}

For a feasible pair \((Y,Z)\in\mathcal C\), define the standard MPCC index sets
\[
    I_{+0}(w)
    \doteq
    \{i:Y_i>0,\ Z_i=0\},
    \qquad
    I_{0+}(w)
    \doteq
    \{i:Y_i=0,\ Z_i>0\},
\]
\[
    I_{00}(w)
    \doteq
    \{i:Y_i=0,\ Z_i=0\}.
\]
We also define
\[
    \mathcal Q_M
    \doteq
    \mathbb R_{++}^{2}
    \ \cup\
    (\mathbb R\times\{0\})
    \ \cup\
    (\{0\}\times\mathbb R).
\]
Equivalently, \((a,b)\in\mathcal Q_M\) if either \(a>0\) and \(b>0\), or \(ab=0\).

\begin{definition}[Stationarity-consistent complementarity multipliers]
\label{def:M_consistent_multipliers}
For a feasible \(w=(X,Y,Z)\) with \((Y,Z)\in\mathcal C\), define
\[
\begin{aligned}
    \mathcal M_M(w)
    \doteq
    \Bigl\{(u,v):\ &
    u_i=0 \quad \forall i\in I_{+0}(w),\\
    &
    v_i=0 \quad \forall i\in I_{0+}(w),\\
    &
    (u_i,v_i)\in \mathcal Q_M \quad \forall i\in I_{00}(w)
    \Bigr\}.
\end{aligned}
\]
No sign restriction is imposed on \(v_i\) for \(i\in I_{+0}(w)\), and no sign restriction is imposed on \(u_i\) for
\(i\in I_{0+}(w)\).
\end{definition}

\begin{lemma}[Normal cone of the complementarity set]
\label{lem:M_normal_cone}
For every feasible \((Y,Z)\in\mathcal C\),
\[
    N_{\mathcal C}(Y,Z)
    =
    \{(-u,-v):(u,v)\in\mathcal M_M(w)\}.
\]
Consequently, the condition
\[
    0\in \nabla_{(Y,Z)}\Phi(w)+N_{\mathcal C}(Y,Z)
\]
is equivalent to the existence of \((u,v)\in\mathcal M_M(w)\) such that
\[
    \nabla_Y\Phi(w)-u=0,
    \qquad
    \nabla_Z\Phi(w)-v=0.
\]
\end{lemma}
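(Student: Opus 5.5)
The plan is to reduce everything to a two-dimensional computation. The set $\mathcal C$ is a Cartesian product $\mathcal C=\prod_i \mathcal C_i$ over complementarity pairs, where each factor is $\mathcal C_i=\{(a,b)\in\mathbb R^2: a\ge 0,\ b\ge 0,\ ab=0\}$. Each $\mathcal C_i$ is closed, so I will invoke the product rule for limiting normal cones of closed sets (Rockafellar--Wets, Prop.~6.41). It gives
\[
N_{\mathcal C}(Y,Z)=\prod_i N_{\mathcal C_i}(Y_i,Z_i).
\]
Since $\mathcal M_M(w)$ is also defined pair by pair, it then suffices to show, for each $i$, that $(-u_i,-v_i)\in N_{\mathcal C_i}(Y_i,Z_i)$ holds exactly when $(u_i,v_i)$ satisfies the condition imposed on index $i$ in Definition~\ref{def:M_consistent_multipliers}.

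I will then compute $N_{\mathcal C_i}$ by cases on the index sets.

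\emph{Case $i\in I_{+0}$.} Here $Y_i>0$ and $Z_i=0$. In a neighborhood of the point, $\mathcal C_i$ coincides with the line $\mathbb R\times\{0\}$, and the limiting normal cone is local. Hence $N_{\mathcal C_i}=\{0\}\times\mathbb R$, which corresponds to $u_i=0$ with $v_i$ free.

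\emph{Case $i\in I_{0+}$.} This is symmetric and gives $v_i=0$ with $u_i$ free.

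\emph{Case $i\in I_{00}$.} At the origin I will first compute the regular (Fréchet) normal cone. The set $\mathcal C_i$ is a closed cone, so its tangent cone at $0$ is $\mathcal C_i$ itself. The regular normal cone is therefore the polar $\mathcal C_i^\circ=\mathbb R_-^2$. Next, the limiting cone is the union of this set with all limits of regular normals at nearby feasible points. Nearby points on the positive $a$-axis contribute $\{0\}\times\mathbb R$, and points on the positive $b$-axis contribute $\mathbb R\times\{0\}$. Both of these families are closed and constant along their axes, so no further limits arise. Hence
\[
N_{\mathcal C_i}(0,0)=\mathbb R_-^2\cup(\mathbb R\times\{0\})\cup(\{0\}\times\mathbb R).
\]
Negating, $(-u_i,-v_i)$ lies in this set if and only if
\[
(u_i,v_i)\in\mathbb R_+^2\cup(\mathbb R\times\{0\})\cup(\{0\}\times\mathbb R).
\]
This set equals $\mathcal Q_M$, because the boundary of $\mathbb R_+^2$ (where $u_i=0$ or $v_i=0$) is already contained in the two axes. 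I expect this corner case, including the careful justification that the limiting cone is exactly this union, to be the main point needing care.

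Finally, the "consequently" part is immediate. The condition $0\in\nabla_{(Y,Z)}\Phi(w)+N_{\mathcal C}(Y,Z)$ means there is some element $(-u,-v)\in N_{\mathcal C}$ with $\nabla_Y\Phi(w)-u=0$ and $\nabla_Z\Phi(w)-v=0$. By the characterization just proved, this is the same as requiring $(u,v)\in\mathcal M_M(w)$.
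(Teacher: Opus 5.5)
Your proposal is correct and follows the paper's own route: write $\mathcal C$ as a product of two-dimensional complementarity cones, compute each pair's limiting normal cone in the three cases $I_{+0}$, $I_{0+}$, $I_{00}$, and negate to recover $\mathcal M_M(w)$. The paper simply states the per-pair cone formula, whereas you also justify it (product rule, locality, and the polar-plus-limits argument at the biactive corner) and check that $\mathbb R_+^2$ together with the two axes equals $\mathcal Q_M$; this adds rigor but no new idea.
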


\begin{proof}
The complementarity set is a Cartesian product of the two-dimensional cones
\[
    C_i=\{(y_i,z_i):0\le y_i\perp z_i\ge0\}.
\]
For a single pair, the limiting normal cone is
\[
N_{C_i}(y_i,z_i)
=
\begin{cases}
    \{0\}\times\mathbb R, & y_i>0,\ z_i=0,\\[2pt]
    \mathbb R\times\{0\}, & y_i=0,\ z_i>0,\\[2pt]
    \mathbb R_-^2 \cup (\mathbb R\times\{0\})\cup(\{0\}\times\mathbb R), & y_i=0,\ z_i=0.
\end{cases}
\]
Writing each normal element as \((-u_i,-v_i)\) gives exactly the multiplier pattern in
Definition~\ref{def:M_consistent_multipliers}.  Taking the Cartesian product over all complementarity pairs gives
the claim.
\end{proof}

\paragraph{Reference normal-cone stationarity measure}
The stationarity measure required by safeguarded AuLa/MPCC theory is
\begin{equation}
\label{eq:nc_stationarity_rewritten}
    r_{\rm nc}(w)
    \doteq
    \dist_\infty\!\left(
        0,\,
        \nabla\Phi(w)+\{0\}\times N_{\mathcal C}(Y,Z)
    \right),
    \qquad
    w=(X,Y,Z),\quad (Y,Z)\in\mathcal C.
\end{equation}
Here \(\dist_\infty\) denotes distance in the \(\ell_\infty\) norm.  This is the vertical-form version of the inner
stationarity condition used in the safeguarded AuLa method of~\cite{guo2022new}.

\paragraph{\(X\)-block stationarity term}
In the theoretical formulation~\eqref{eq:aula_sub}, the \(X\)-variables are unconstrained after the smooth
constraints have been absorbed into the AuLa objective.  Thus the \(X\)-block residual is
\begin{equation}
\label{eq:x_block_mapping_rewritten}
    G^X(w)
    \doteq
    \nabla_X\Phi(w).
\end{equation}
If an implementation additionally enforces simple convex bounds on \(X\) directly, then \(G^X\) may be replaced
by the usual projected-gradient mapping over that convex set.

\paragraph{Primal complementarity feasibility}
Although the inner subproblem enforces \((Y,Z)\in\mathcal C\) exactly, we keep the primal feasibility term in the
residual to make the KKT certificate explicit:
\begin{equation}
\label{eq:rpri_def_rewritten}
    r_{\rm pri}(w)
    \doteq
    \max\Bigl\{
        \|\min(Y,0)\|_\infty,\,
        \|\min(Z,0)\|_\infty,\,
        \|Y\circ Z\|_\infty
    \Bigr\}.
\end{equation}
For all completed BCD iterates considered below, \(r_{\rm pri}(w)=0\).

\paragraph{KKT residual}
For any feasible \(w=(X,Y,Z)\) and any \((u,v)\in\mathcal M_M(w)\), define
\begin{equation}
\label{eq:rin_corrected_def}
    r_{\rm in}(w;u,v)
    \doteq
    \max\Bigl\{
        \|G^X(w)\|_\infty,\,
        \|\nabla_Y\Phi(w)-u\|_\infty,\,
        \|\nabla_Z\Phi(w)-v\|_\infty,\,
        r_{\rm pri}(w)
    \Bigr\}.
\end{equation}
The best achievable KKT residual is
\begin{equation}
\label{eq:rinM_def}
    r_{\rm in}(w)
    \doteq
    \inf_{(u,v)\in\mathcal M_M(w)}
    r_{\rm in}(w;u,v).
\end{equation}
We call \(w\) \(\varepsilon\)-stationary for the inner AuLa subproblem if
\[
    r_{\rm in}(w)\le\varepsilon.
\]
Equivalently, there exist consistent multipliers \((u,v)\in\mathcal M_M(w)\) such that
\(r_{\rm in}(w;u,v)\le\varepsilon\).

\paragraph{Closed-form evaluation}
The infimum in~\eqref{eq:rinM_def} is separable across complementarity pairs.  Let
\[
    a_i(w)\doteq \nabla_{Y_i}\Phi(w),
    \qquad
    b_i(w)\doteq \nabla_{Z_i}\Phi(w),
\]
and let \((s)_+\doteq \max\{s,0\}\).  For feasible \(w\), define the per-pair stationarity mismatch
\[
d_i(w)
\doteq
\begin{cases}
    |a_i(w)|, & i\in I_{+0}(w),\\[2pt]
    |b_i(w)|, & i\in I_{0+}(w),\\[2pt]
    \displaystyle
    \min\Bigl\{
        \max\{(-a_i(w))_+,(-b_i(w))_+\},\,
        |a_i(w)|,\,
        |b_i(w)|
    \Bigr\}, & i\in I_{00}(w).
\end{cases}
\]
Then
\begin{equation}
\label{eq:rinM_closed_form}
    r_{\rm in}(w)
    =
    \max\Bigl\{
        \|G^X(w)\|_\infty,\,
        r_{\rm pri}(w),\,
        \max_i d_i(w)
    \Bigr\}.
\end{equation}
At a biactive pair \((Y_i,Z_i)=(0,0)\), the third case in \(d_i\) is precisely the
\(\ell_\infty\)-distance from \((a_i,b_i)\) to the multiplier set
\(\mathcal Q_M\).  This is the correction that replaces the incorrect global sign-feasible multiplier restriction.

\subsection{Link to safeguarded AuLa stationarity}
\label{app:link_rin_nc_pg}

\begin{lemma}[KKT residual matches the limiting-normal-cone residual]
\label{lem:rin_vs_normalcone_rewritten}
For every feasible \(w=(X,Y,Z)\) with \((Y,Z)\in\mathcal C\),
\[
    r_{\rm nc}(w)
    =
    r_{\rm in}(w).
\]
\end{lemma}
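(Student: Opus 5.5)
The plan is to unfold both sides coordinatewise and match them term by term, using Lemma~\ref{lem:M_normal_cone} to parametrize the normal cone. Fix a feasible $w=(X,Y,Z)$ with $(Y,Z)\in\mathcal C$, so that $r_{\rm pri}(w)=0$. By Lemma~\ref{lem:M_normal_cone}, every element of $\{0\}\times N_{\mathcal C}(Y,Z)$ has the form $(0,-u,-v)$ with $(u,v)\in\mathcal M_M(w)$. Conversely, every such $(u,v)$ gives an element of this set. Hence
\[
\nabla\Phi(w)+\{0\}\times N_{\mathcal C}(Y,Z)=\bigl\{(\nabla_X\Phi(w),\,\nabla_Y\Phi(w)-u,\,\nabla_Z\Phi(w)-v):(u,v)\in\mathcal M_M(w)\bigr\}.
\]

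Next, take the $\ell_\infty$ norm of a generic element of this set. It is the maximum of $\|\nabla_X\Phi(w)\|_\infty=\|G^X(w)\|_\infty$, $\|\nabla_Y\Phi(w)-u\|_\infty$ and $\|\nabla_Z\Phi(w)-v\|_\infty$. Since $r_{\rm pri}(w)=0$, this maximum is exactly $r_{\rm in}(w;u,v)$ from~\eqref{eq:rin_corrected_def}. The distance from $0$ in~\eqref{eq:nc_stationarity_rewritten} is the infimum of this norm over the set. By the parametrization above, that infimum equals $\inf_{(u,v)\in\mathcal M_M(w)} r_{\rm in}(w;u,v)=r_{\rm in}(w)$, which proves the identity.

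For completeness, I would also check consistency with the closed form~\eqref{eq:rinM_closed_form}, since the lemma is meant to justify it. The set $\mathcal M_M(w)$ is a Cartesian product over pairs, and the $\ell_\infty$ norm is a maximum of coordinates, so the infimum of the maximum splits into a maximum of per-pair infima. This step needs care in general, but it holds here because the $X$-term does not depend on $(u,v)$ and each pair's constraint involves only $(u_i,v_i)$. Per pair, the cases are as follows.
\begin{itemize}
\item For $i\in I_{+0}$: $u_i=0$ and $v_i$ is free, giving $|a_i|$.
\item For $i\in I_{0+}$: symmetrically, giving $|b_i|$.
\item For $i\in I_{00}$: we need the $\ell_\infty$ distance from $(a_i,b_i)$ to $\mathcal Q_M$. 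The axis components give $|a_i|$ and $|b_i|$. The open positive orthant $\mathbb R^2_{++}$ gives $\max\{(-a_i)_+,(-b_i)_+\}$ as an infimum, and this equals the stated value because its closure is covered by adding the axes.
\end{itemize}

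The main obstacle is precisely this biactive case, together with the fact that $\mathcal Q_M$ is not closed. The infimum over $\mathbb R^2_{++}$ may not be attained, so $r_{\rm in}$ must be treated as an infimum rather than a minimum. I would note that the distance to a set equals the distance to its closure, and $\overline{\mathcal Q_M}=\mathbb R^2_+\cup(\mathbb R\times\{0\})\cup(\{0\}\times\mathbb R)$. This also matches $-N_{C_i}(0,0)$ from the proof of Lemma~\ref{lem:M_normal_cone}. Consequently, both sides are infima over the same closure, and the equality holds without any attainment issue.
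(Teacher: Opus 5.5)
Your proof is correct and is essentially the paper's argument: you use Lemma~\ref{lem:M_normal_cone} to write $\nabla\Phi(w)+\{0\}\times N_{\mathcal C}(Y,Z)$ as $\{(\nabla_X\Phi(w),\nabla_Y\Phi(w)-u,\nabla_Z\Phi(w)-v):(u,v)\in\mathcal M_M(w)\}$, take the $\ell_\infty$ norm, and use $r_{\rm pri}(w)=0$ to see that the two infima coincide. One side remark should be corrected, though it does not affect the lemma: $\mathcal Q_M$ is in fact closed, because $\mathbb R^2_{++}\cup(\mathbb R\times\{0\})\cup(\{0\}\times\mathbb R)=\mathbb R^2_{+}\cup(\mathbb R\times\{0\})\cup(\{0\}\times\mathbb R)$ (the closure you computed is $\mathcal Q_M$ itself), so the biactive per-pair infimum is attained and the ``obstacle'' you describe does not arise.
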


\begin{proof}
Since \(X\) is unconstrained in the inner AuLa subproblem~\eqref{eq:aula_sub},
\[
    \{0\}\times N_{\mathcal C}(Y,Z)
\]
is the only nonsmooth normal-cone term in~\eqref{eq:nc_stationarity_rewritten}.  By
Lemma~\ref{lem:M_normal_cone},
\[
    (-u,-v)\in N_{\mathcal C}(Y,Z)
    \quad\Longleftrightarrow\quad
    (u,v)\in\mathcal M_M(w).
\]
Therefore,
\[
\begin{aligned}
    r_{\rm nc}(w)
    &=
    \inf_{(u,v)\in\mathcal M_M(w)}
    \max\Bigl\{
        \|\nabla_X\Phi(w)\|_\infty,\,
        \|\nabla_Y\Phi(w)-u\|_\infty,\,
        \|\nabla_Z\Phi(w)-v\|_\infty
    \Bigr\}.
\end{aligned}
\]
Using \(G^X(w)=\nabla_X\Phi(w)\) and \(r_{\rm pri}(w)=0\) for feasible \(w\) gives exactly
\(r_{\rm in}(w)\).
\end{proof}

\begin{proposition}[Safeguarded AuLa interface]
\label{prop:aula_interface_rewritten}
Suppose the inner iterates returned to the outer loop satisfy
\[
    r_{\rm in}(w^k)\le \varepsilon_k,
    \qquad
    \varepsilon_k\downarrow0.
\]
Then the inner stationarity requirement of the safeguarded AuLa method,
\[
    \dist\!\left(
        0,\,
        \nabla\Phi^k(w^k)+\{0\}\times N_{\mathcal C}(Y^k,Z^k)
    \right)
    \le
    O(\varepsilon_k),
\]
holds, up to norm-equivalence constants.  Consequently, under bounded safeguarded multipliers and the MPCC
regularity condition in~\cite[Thm.~3]{guo2022new}, every feasible accumulation point of the outer iterates is
M-stationary for the vertical MPCC formulation.
\end{proposition}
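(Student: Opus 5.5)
The plan has two parts: first transfer the inner residual bound to the normal-cone stationarity measure required by \cite{guo2022new}, then invoke \cite[Thm.~3]{guo2022new}, i.e.\ Theorem~\ref{thm:global_aula_from_inner}.

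\emph{Step 1: pass from $r_{\rm in}$ to $r_{\rm nc}$.} Every iterate $w^k$ returned by Algorithm~\ref{alg:inner_bcd} ends with an exact $(Y,Z)$-update onto $\mathcal C$ through \eqref{eq:slack_case_y}--\eqref{eq:slack_case_z}. Hence $(Y^k,Z^k)\in\mathcal C$ and $r_{\rm pri}(w^k)=0$. Because $w^k$ is feasible for $\mathcal C$, Lemma~\ref{lem:rin_vs_normalcone_rewritten} gives $r_{\rm nc}(w^k)=r_{\rm in}(w^k)\le\varepsilon_k$. To see that the $\ell_\infty$ bound yields a genuine element of the normal cone, note that for each pair the admissible multiplier set $\mathcal Q_M$ (or the coordinate axis) is closed. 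The infimum in \eqref{eq:rinM_def} is therefore attained, and there is a vector $\xi^k\in\nabla\Phi^k(w^k)+\{0\}\times N_{\mathcal C}(Y^k,Z^k)$ with $\|\xi^k\|_\infty\le\varepsilon_k$. Guo--Deng measure inexactness in the Euclidean norm, so we rescale using $\|\xi\|_2\le\sqrt{n}\,\|\xi\|_\infty$, where $n$ is the fixed total dimension of $w$. This gives the $O(\varepsilon_k)$ bound. The constant does not depend on $k$, so the rescaled tolerances still satisfy $\sqrt n\,\varepsilon_k\downarrow0$.

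\emph{Step 2: match the algorithmic template.} I would then check, item by item, that Algorithm~\ref{alg:safeguarded_aula} is an instance of the safeguarded AuLa scheme in \cite{guo2022new,jia2023augmented} applied to the vertical form \eqref{eq:vertical_form}. The checklist is as follows:
\begin{itemize}
\item the smooth constraints $\bar h$ and $g$ are penalized in the standard PHR form;
\item the complementarity set $\mathcal C$ is kept as the hard, nonsmooth ``lower-level'' constraint;
\item the multipliers are projected onto bounded boxes, as in line~\ref{line:clipping_safeguard};
\item the multiplier updates have the standard form $\bar\kappa+\rho\bar h$ and $(\bar\mu+\rho g)_+$;
\item the penalty update is a sufficient-decrease rule on $\max\{\|\zeta\|_\infty,\|\bar h\|_\infty\}$, with $\zeta$ the standard min-residual for the inequalities.
\end{itemize}
The vertical form has smooth data because $G$ and $H$ enter only through the equalities $G(x_t)-y_t=0$ and $H(x_t)-z_t=0$.

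\emph{Step 3: conclude.} Take a feasible accumulation point $w^\star$ along a subsequence $K$, and assume the MPCC regularity condition of \cite[Thm.~3]{guo2022new} holds there. The standard argument then applies. If the penalties stay bounded, the multiplier estimates converge along a further subsequence. If the penalties diverge, the bounded safeguarded multipliers keep the terms $\rho^k\bar h$ and $(\bar\mu+\rho g)_+$ consistent with the limiting multipliers; in particular, multipliers of inactive inequalities vanish eventually. Outer semicontinuity (closedness) of the limiting normal cone $N_{\mathcal C}$ lets us pass to the limit in $\xi^k\to0$. The regularity condition is needed to exclude unbounded multiplier sequences, via the usual normalization-and-contradiction argument. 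The result is M-stationarity of $w^\star$ for \eqref{eq:vertical_form}. By the remark on the vertical reformulation, this transfers to \eqref{eq:MPCC}.

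The main obstacle is Step 2. One must make sure that the specific penalty-update test and the use of the updated $\mu^{k+1}$ inside $\zeta$ really fall within the hypotheses of \cite{guo2022new}. The same holds for using a common penalty factor for $\bar h$ and $g$, and for allowing the $x$-block to be unconstrained while $(Y,Z)$ is constrained. I would try first to argue that these cases are covered by the cited framework, since its proof only uses boundedness of the safeguarded multipliers and the qualitative dichotomy between bounded and divergent penalties. If needed, I would re-run its limiting argument directly with the closed-form $d_i$ description from \eqref{eq:rinM_closed_form} to handle the biactive index set $I_{00}$.
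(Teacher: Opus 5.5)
Your proposal is correct and follows the paper's route. The paper proves the first claim by Lemma~\ref{lem:rin_vs_normalcone_rewritten} plus $\ell_\infty$/Euclidean norm equivalence, and the second by a direct appeal to \cite[Thm.~3]{guo2022new}. Your Steps 2--3, which check that Algorithm~\ref{alg:safeguarded_aula} (including the $\zeta^{k+1}$-based penalty test) fits the Guo--Deng template, are extra diligence the paper takes for granted rather than a different argument.
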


\begin{proof}
The first claim follows immediately from Lemma~\ref{lem:rin_vs_normalcone_rewritten} and norm equivalence between
\(\ell_\infty\) and Euclidean distance.  The second claim is exactly the safeguarded AuLa convergence implication
for MPCCs with the complementarity constraints kept as lower-level constraints; see~\cite[Thm.~3]{guo2022new}.
\end{proof}

\paragraph{Takeaway}
The residual \(r_{\rm in}\) is the sole inner-stationarity certificate used by the outer convergence statement.
It is a computable KKT-style residual, but its multiplier feasibility is normal-cone consistent at the
complementarity set. 

\section{\texorpdfstring{$\varepsilon$}--stationary attainability for the BCD inner solver}
\label{app:inner_convergence_rigorous}

This appendix proves that the BCD inner solver can attain the KKT residual \(r_{\rm in}\) to any
prescribed tolerance in finitely many inner iterations.  This is the attainability property needed by the safeguarded
AuLa outer loop.  The result does not assert whole-sequence convergence of the nonconvex inner solver, nor does it
assert global optimality of the AuLa subproblem.

Throughout this section, \(k\) is fixed and \(\Phi\equiv\Phi^k\).  We write
\[
    w^{(j)}=(X^{(j)},Y^{(j)},Z^{(j)})
\]
for completed BCD iterates after the \((Y,Z)\)-update.  If the initialized value
\((Y^{(0)},Z^{(0)})\) is not the exact minimizer for \(X^{(0)}\), all statements below apply after the first completed
\((Y,Z)\)-update, with a harmless relabeling of the index.

\subsection{Assumptions}
\label{app:assumptions_rewritten}

\begin{assumption}[Smoothness and boundedness]
\label{assump:pg_smooth_bounded_rewritten}
Along the inner iterates \(\{w^{(j)}\}\) generated at the fixed outer iteration \(k\):
\begin{enumerate}
    \item \(\Phi\) is continuously differentiable on a neighborhood of the iterates, and \(\nabla\Phi\) is Lipschitz
    on that neighborhood.
    \item The sequence \(\{w^{(j)}\}\) is bounded, and
    \[
        \inf_{w:\,(Y,Z)\in\mathcal C}\Phi(w)>-\infty.
    \]
\end{enumerate}
\end{assumption}

\begin{assumption}[\(X\)-step sufficient decrease]
\label{assump:pg_x_decrease_rewritten}
There exists a constant \(c_X>0\) such that every completed BCD sweep satisfies
\begin{equation}
\label{eq:x_decrease_vs_pg_rewritten}
    \Phi(w^{(j+1)})
    \le
    \Phi(w^{(j)})
    -
    c_X\,
    \bigl\|G^X(w^{(j)})\bigr\|_\infty^2.
\end{equation}
This assumption covers both one-step and multi-step \(X\)-updates. For example, it holds if the completed
\(X\)-update is globalized by Armijo-type line search so that the accumulated decrease is comparable to the
\(X\)-block residual, and the subsequent \((Y,Z)\)-update does not increase \(\Phi\).
\end{assumption}

\begin{assumption}[Exact \((Y,Z)\)-block minimization over the complementarity set]
\label{assump:pg_yz_exact_rewritten}
For every completed BCD iterate \(w^{(j)}\),
\[
    (Y^{(j)},Z^{(j)})
    \in
    \arg\min_{(Y,Z)\in\mathcal C}
    \Phi(X^{(j)},Y,Z).
\]
\end{assumption}

\subsection{Descent and \texorpdfstring{$X$}--block attainability}
\label{app:descent_attainability_rewritten}

\begin{lemma}[Monotone decrease and square-summable \(X\)-block residual]
\label{lem:pg_descent_sum_rewritten}
Under Assumptions~\ref{assump:pg_smooth_bounded_rewritten} and~\ref{assump:pg_x_decrease_rewritten},
the inner objective values are monotone non-increasing and
\[
    \sum_{j=0}^{\infty}
    \bigl\|G^X(w^{(j)})\bigr\|_\infty^2
    <
    \infty.
\]
\end{lemma}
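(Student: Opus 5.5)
The argument is a telescoping sum built on the sufficient-decrease inequality~\eqref{eq:x_decrease_vs_pg_rewritten} of Assumption~\ref{assump:pg_x_decrease_rewritten}.

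\textbf{Monotonicity.} Because $c_X>0$ and $\|G^X(w^{(j)})\|_\infty^2\ge 0$, the inequality~\eqref{eq:x_decrease_vs_pg_rewritten} gives $\Phi(w^{(j+1)})\le\Phi(w^{(j)})$ directly. The inequality is stated per completed sweep: it already accounts for the Armijo-globalized $X$-update followed by the $(Y,Z)$-update. The $(Y,Z)$-update cannot increase $\Phi$, since by Assumption~\ref{assump:pg_yz_exact_rewritten} it is an exact minimization over $\mathcal C$ and the previous $(Y,Z)$ is a feasible candidate. So no separate argument is needed for the second block.

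\textbf{Summability.} Sum~\eqref{eq:x_decrease_vs_pg_rewritten} over $j=0,\dots,J$. The right-hand side telescopes, giving
\[
c_X\sum_{j=0}^{J}\bigl\|G^X(w^{(j)})\bigr\|_\infty^2\le \Phi(w^{(0)})-\Phi(w^{(J+1)}).
\]
Every completed iterate satisfies $(Y^{(j)},Z^{(j)})\in\mathcal C$, either by Assumption~\ref{assump:pg_yz_exact_rewritten} or because the closed-form candidates~\eqref{eq:slack_case_y}--\eqref{eq:slack_case_z} lie in $\mathcal C$. Hence Assumption~\ref{assump:pg_smooth_bounded_rewritten}(2) gives $\Phi(w^{(J+1)})\ge \Phi_{\inf}:=\inf_{(Y,Z)\in\mathcal C}\Phi>-\infty$. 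The partial sums are therefore bounded uniformly in $J$ by $(\Phi(w^{(0)})-\Phi_{\inf})/c_X$. They are nondecreasing, so the series converges.

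\textbf{Main care point.} The only real subtlety is the indexing when the initial $(Y^{(0)},Z^{(0)})$ is not the block minimizer for $X^{(0)}$. In that case $w^{(0)}$ may violate the premise that iterates are completed sweeps, and may not even lie in $\mathcal C$. I would use the relabeling convention already stated at the start of the appendix: start the sum at the first completed $(Y,Z)$-update. At most one additional finite term is then added, which does not affect summability. A further consequence is that $\Phi(w^{(j)})$ converges as a monotone sequence bounded below, and $\|G^X(w^{(j)})\|_\infty\to 0$. This is what gives $X$-block attainability later.
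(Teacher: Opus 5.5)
Your proposal is correct and matches the paper's proof: you telescope the sufficient-decrease inequality \eqref{eq:x_decrease_vs_pg_rewritten}, bound $\Phi(w^{(0)})-\Phi(w^{(N+1)})$ by $\Phi(w^{(0)})-\inf_{(Y,Z)\in\mathcal C}\Phi$, and let $N\to\infty$, and monotonicity follows directly from $c_X>0$. Your explicit check that completed iterates lie in $\mathcal C$, which is what lets the infimum lower bound apply, fills in a step the paper leaves implicit; your side appeal to Assumption~\ref{assump:pg_yz_exact_rewritten} is not needed, since the lemma uses only Assumptions~\ref{assump:pg_smooth_bounded_rewritten} and~\ref{assump:pg_x_decrease_rewritten}.
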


\begin{proof}
Summing~\eqref{eq:x_decrease_vs_pg_rewritten} from \(j=0\) to \(N\) gives
\[
    c_X
    \sum_{j=0}^{N}
    \bigl\|G^X(w^{(j)})\bigr\|_\infty^2
    \le
    \Phi(w^{(0)})-\Phi(w^{(N+1)}).
\]
By Assumption~\ref{assump:pg_smooth_bounded_rewritten}, the right-hand side is bounded above by
\[
    \Phi(w^{(0)})-\inf_{w:\,(Y,Z)\in\mathcal C}\Phi(w).
\]
Letting \(N\to\infty\) proves square summability.  Monotonicity follows directly from
\eqref{eq:x_decrease_vs_pg_rewritten}.
\end{proof}

\begin{theorem}[Attainability of the \(X\)-block residual]
\label{thm:pg_attainability_rewritten}
Under Assumptions~\ref{assump:pg_smooth_bounded_rewritten} and~\ref{assump:pg_x_decrease_rewritten}, for any
\(\varepsilon>0\), there exists a finite inner iteration index \(j_\varepsilon\) such that
\[
    \bigl\|G^X(w^{(j_\varepsilon)})\bigr\|_\infty
    \le
    \varepsilon.
\]
Moreover, for any \(N\ge0\),
\[
    \min_{0\le j\le N}
    \bigl\|G^X(w^{(j)})\bigr\|_\infty
    \le
    \sqrt{
        \frac{
            \Phi(w^{(0)})-\inf_{w:\,(Y,Z)\in\mathcal C}\Phi(w)
        }{
            c_X(N+1)
        }
    }.
\]
\end{theorem}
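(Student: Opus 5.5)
The plan is to derive both claims directly from Lemma~\ref{lem:pg_descent_sum_rewritten}, using the standard ``minimum of a square-summable sequence'' argument. Write $\Delta \doteq \Phi(w^{(0)})-\inf_{w:\,(Y,Z)\in\mathcal C}\Phi(w)$. This quantity is finite by Assumption~\ref{assump:pg_smooth_bounded_rewritten}, and it is nonnegative because every completed iterate satisfies $(Y^{(j)},Z^{(j)})\in\mathcal C$.

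\textbf{Step 1: telescoped bound.} Summing the sufficient-decrease inequality~\eqref{eq:x_decrease_vs_pg_rewritten} for $j=0,\dots,N$, the objective values telescope. Using $\Phi(w^{(N+1)})\ge\inf_{\mathcal C}\Phi$, which holds since $w^{(N+1)}$ is feasible, we obtain
\[
    c_X\sum_{j=0}^{N}\bigl\|G^X(w^{(j)})\bigr\|_\infty^2\le \Delta .
\]
This is exactly the finite-horizon form of Lemma~\ref{lem:pg_descent_sum_rewritten}.

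\textbf{Step 2: the rate.} Each of the $N+1$ summands is at least $\min_{0\le j\le N}\|G^X(w^{(j)})\|_\infty^2$. Hence
\[
    (N+1)\,c_X\min_{0\le j\le N}\|G^X(w^{(j)})\|_\infty^2\le\Delta .
\]
Dividing by $c_X(N+1)>0$ and taking square roots gives the stated $O\bigl(1/\sqrt{N+1}\bigr)$ bound.

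\textbf{Step 3: finite attainability.} Given $\varepsilon>0$, choose $N$ with $\Delta/(c_X(N+1))\le\varepsilon^2$, for instance $N=\lceil \Delta/(c_X\varepsilon^2)\rceil$. By Step 2 some index $j_\varepsilon\le N$ satisfies $\|G^X(w^{(j_\varepsilon)})\|_\infty\le\varepsilon$. Alternatively, square summability forces $\|G^X(w^{(j)})\|_\infty\to0$, so such an index exists.

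There is no real obstacle here. The only point requiring care is justifying the lower bound $\Phi(w^{(N+1)})\ge\inf_{\mathcal C}\Phi$, which needs the iterates to be the completed ones, after the exact $(Y,Z)$-update, so that they lie in $\mathcal C$. This is guaranteed by the relabeling convention and by the fact that the closed-form updates~\eqref{eq:slack_case_y}--\eqref{eq:slack_case_z} output points of $\mathcal C$. The Lipschitz and boundedness hypotheses are not needed for this particular statement; they only enter when relating $G^X$ to the $(Y,Z)$-block residual later.
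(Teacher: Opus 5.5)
Your proposal is correct and follows the paper's proof essentially verbatim: you telescope the sufficient-decrease inequality as in Lemma~\ref{lem:pg_descent_sum_rewritten}, bound the minimum by the average to get the $O(1/\sqrt{N+1})$ rate, and let $N$ grow for finite attainability. Your explicit remark that $\Phi(w^{(N+1)})\ge\inf_{w:\,(Y,Z)\in\mathcal C}\Phi(w)$ relies on the completed iterates lying in $\mathcal C$ is a detail the paper leaves implicit, and you justify it correctly.
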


\begin{proof}
The quantitative bound follows from the summability estimate in
Lemma~\ref{lem:pg_descent_sum_rewritten}.  The finite-attainability statement follows because the right-hand side
tends to zero as \(N\to\infty\).
\end{proof}

\subsection{From \texorpdfstring{$X$}--block attainability to KKT inner stationarity}
\label{app:full_stationarity_rewritten}

\begin{lemma}[Exact \((Y,Z)\)-minimization eliminates the \((Y,Z)\)-part of the KKT residual]
\label{lem:yz_exact_implies_rinM}
\label{lem:yz_exact_implies_rin_diamond}
Under Assumptions~\ref{assump:pg_smooth_bounded_rewritten} and~\ref{assump:pg_yz_exact_rewritten}, every completed
BCD iterate \(w^{(j)}=(X^{(j)},Y^{(j)},Z^{(j)})\) satisfies
\begin{equation}
\label{eq:rinM_equals_xblock}
    r_{\rm in}(w^{(j)})
    =
    \bigl\|G^X(w^{(j)})\bigr\|_\infty.
\end{equation}
\end{lemma}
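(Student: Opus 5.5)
The plan is to evaluate $r_{\rm in}(w^{(j)})$ through the closed-form expression~\eqref{eq:rinM_closed_form}, i.e.\ $r_{\rm in}(w)=\max\{\|G^X(w)\|_\infty,\,r_{\rm pri}(w),\,\max_i d_i(w)\}$. I will show that at every completed BCD iterate the last two terms vanish. This leaves exactly $\|G^X(w^{(j)})\|_\infty$, which is~\eqref{eq:rinM_equals_xblock}.

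\textbf{Step 1: primal term.} The term $r_{\rm pri}$ is immediate. By Assumption~\ref{assump:pg_yz_exact_rewritten}, $(Y^{(j)},Z^{(j)})\in\mathcal C$, so $Y,Z\ge0$ and $Y\circ Z=0$. Hence $r_{\rm pri}(w^{(j)})=0$.

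\textbf{Step 2: per-pair structure.} In~\eqref{eq:aula_sub} the slack variables enter $\Phi$ only through the equality blocks $G(x_t)-y_t$ and $H(x_t)-z_t$ of $\bar h_t$. They do not enter $r_t$, $g_t$ or $h(x_t,x_{t+1})$. So, for fixed $X$, $\Phi$ is separable across complementarity pairs. Write $c_{G,i}\doteq G_i(x_t)+\kappa_{G,t,i}/\rho_{\bar h}$ and $c_{H,i}\doteq H_i(x_t)+\kappa_{H,t,i}/\rho_{\bar h}$. The pair $i$ then contributes $\tfrac{\rho_{\bar h}}{2}\bigl[(y_i-c_{G,i})^2+(z_i-c_{H,i})^2\bigr]$, up to terms independent of $(Y,Z)$. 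This gives
\[
a_i=\rho_{\bar h}(y_i-c_{G,i}),\qquad b_i=\rho_{\bar h}(z_i-c_{H,i}).
\]
Exact minimization over $\mathcal C$ is therefore equivalent to exact minimization of each two-dimensional quadratic over $C_i$.

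\textbf{Step 3: case analysis on $d_i$.} I will check $d_i(w^{(j)})=0$ in each index set.
\begin{itemize}
\item If $i\in I_{+0}$, the minimizer lies on the ray $\{(y,0):y\ge0\}$ with $y_i>0$. Minimizing over that ray gives $y_i=\max(0,c_{G,i})$, and $y_i>0$ forces $y_i=c_{G,i}$. Hence $a_i=0$ and $d_i=|a_i|=0$.
\item If $i\in I_{0+}$, the argument is symmetric and gives $b_i=0$.
\item If $i\in I_{00}$, this is the step needing care. Suppose $c_{G,i}>0$. Then the feasible point $(c_{G,i},0)$ strictly lowers the pair objective compared with $(0,0)$, which contradicts global optimality over $C_i$. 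Hence $c_{G,i}\le0$, and likewise $c_{H,i}\le0$. It follows that $a_i=-\rho_{\bar h}c_{G,i}\ge0$ and $b_i\ge0$. Therefore $\max\{(-a_i)_+,(-b_i)_+\}=0$, so $d_i=0$.
\end{itemize}

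\textbf{Cross-check and main obstacle.} The same conclusion follows abstractly. A global minimizer of a smooth function over a closed set satisfies $-\nabla_{(Y,Z)}\Phi\in\widehat N_{\mathcal C}\subseteq N_{\mathcal C}$. Lemma~\ref{lem:M_normal_cone} then supplies $(u,v)\in\mathcal M_M(w^{(j)})$ with $\nabla_Y\Phi-u=0$ and $\nabla_Z\Phi-v=0$, and~\eqref{eq:rin_corrected_def} then gives the claim. I expect the biactive case to be the main obstacle. There, local stationarity alone would only give an $\mathcal Q_M$-type sign pattern, but the exact global selection between~\eqref{eq:slack_case_y} and~\eqref{eq:slack_case_z} in fact yields the stronger sign condition. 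The other point to verify is that the inner solver's tie-breaking always returns a true global minimizer of each pair, which Assumption~\ref{assump:pg_yz_exact_rewritten} guarantees.
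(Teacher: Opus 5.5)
Your proof is correct, but it is more computational than the paper's. The paper never uses the closed form~\eqref{eq:rinM_closed_form} or the quadratic dependence of $\Phi$ on the slacks. It observes that $(Y^{(j)},Z^{(j)})$ is a global, hence local, minimizer of the smooth map $(Y,Z)\mapsto\Phi(X^{(j)},Y,Z)$ over the closed set $\mathcal C$. The Fermat rule then gives $0\in\nabla_{(Y,Z)}\Phi+N_{\mathcal C}$, and Lemma~\ref{lem:M_normal_cone} shows that $(u,v)=(\nabla_Y\Phi,\nabla_Z\Phi)$ lies in $\mathcal M_M(w^{(j)})$. Substituting these multipliers into~\eqref{eq:rin_corrected_def} gives $r_{\rm in}\le\|G^X\|_\infty$. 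The reverse inequality holds because the $X$-term appears inside the max for every multiplier choice.

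That is essentially your ``cross-check'' paragraph. It is shorter, holds for any $C^1$ objective, and needs only local minimality.

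Your main route uses separability and the explicit gradients $a_i=\rho_{\bar h}(y_i-c_{G,i})$, $b_i=\rho_{\bar h}(z_i-c_{H,i})$ to show $d_i=0$ pair by pair. This is elementary and easy to check. It also shows that biactive pairs satisfy $a_i,b_i\ge0$, the strong-stationarity sign. It does depend on~\eqref{eq:rinM_closed_form}, which the paper states without proof. The formula is correct: by separability, the infimum splits into per-pair $\ell_\infty$-distances to the admissible multiplier sets.

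One correction to your commentary: the biactive case is not really an obstacle here. $d_i=0$ only requires $(a_i,b_i)\in\mathcal Q_M$, and any element of the limiting normal cone already provides that. The stronger sign $a_i,b_i\ge0$ also follows from local minimality, not only global optimality. If $c_{G,i}>0$, moving from $(0,0)$ to $(\epsilon,0)$ for small $\epsilon>0$ already lowers the pair objective.
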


\begin{proof}
By Assumption~\ref{assump:pg_yz_exact_rewritten}, \((Y^{(j)},Z^{(j)})\) is a global, hence local, minimizer of the
smooth function
\[
    (Y,Z)\mapsto \Phi(X^{(j)},Y,Z)
\]
over the closed set \(\mathcal C\).  The Fermat rule for closed sets gives
\[
    0
    \in
    \nabla_{(Y,Z)}\Phi(w^{(j)})
    +
    N_{\mathcal C}(Y^{(j)},Z^{(j)}).
\]
By Lemma~\ref{lem:M_normal_cone}, there exist multipliers
\[
    (u^{(j)},v^{(j)})\in\mathcal M_M(w^{(j)})
\]
such that
\[
    \nabla_Y\Phi(w^{(j)})-u^{(j)}=0,
    \qquad
    \nabla_Z\Phi(w^{(j)})-v^{(j)}=0.
\]
The completed \((Y,Z)\)-update is feasible for \(\mathcal C\), so
\[
    r_{\rm pri}(w^{(j)})=0.
\]
Therefore the only nonzero term in the corrected residual~\eqref{eq:rin_corrected_def} is the \(X\)-block term, and
\[
    r_{\rm in}(w^{(j)})
    \le
    \bigl\|G^X(w^{(j)})\bigr\|_\infty.
\]
The reverse inequality follows directly from the definition of \(r_{\rm in}\), since the \(X\)-block term appears
inside the maximum in~\eqref{eq:rin_corrected_def}.  Hence equality holds.
\end{proof}

\begin{theorem}[Attainability of MPCC inner \(\varepsilon\)-stationarity by the BCD inner solver]
\label{thm:inner_attainability_m}
\label{thm:inner_attainability_diamond}
\label{thm:inner_attainability_corrected}
Fix outer iteration \(k\) and consider the completed BCD inner iterates \(\{w^{(j)}\}\) generated by
Algorithm~\ref{alg:inner_bcd}.  Suppose Assumptions~\ref{assump:pg_smooth_bounded_rewritten}--
\ref{assump:pg_yz_exact_rewritten} hold.  Then, for any tolerance \(\varepsilon>0\), there exists a finite inner
iteration index \(j_\varepsilon\) such that
\[
    r_{\rm in}(w^{(j_\varepsilon)})
    \le
    \varepsilon.
\]
Moreover, for any \(N\ge0\),
\[
    \min_{0\le j\le N}
    r_{\rm in}(w^{(j)})
    \le
    \sqrt{
        \frac{
            \Phi(w^{(0)})-\inf_{w:\,(Y,Z)\in\mathcal C}\Phi(w)
        }{
            c_X(N+1)
        }
    }.
\]
Equivalently, there exist multipliers
\[
    (u^{(j_\varepsilon)},v^{(j_\varepsilon)})
    \in
    \mathcal M_M(w^{(j_\varepsilon)})
\]
such that
\[
    r_{\rm in}
    \bigl(
        w^{(j_\varepsilon)};
        u^{(j_\varepsilon)},v^{(j_\varepsilon)}
    \bigr)
    \le
    \varepsilon.
\]
\end{theorem}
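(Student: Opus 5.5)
The proof combines the two results just established. Lemma~\ref{lem:yz_exact_implies_rinM} reduces the full KKT residual of every completed BCD iterate to its \(X\)-block term. Theorem~\ref{thm:pg_attainability_rewritten} makes that term attainable with an explicit rate. Concretely, I first fix the outer index \(k\) and work with the completed iterates \(w^{(j)}\), i.e.\ those taken after the \((Y,Z)\)-update. If \((Y^{(0)},Z^{(0)})\) is not already an exact minimizer for \(X^{(0)}\), I relabel so that the sequence starts at the first completed sweep. This relabeling leaves the estimate unchanged, because \(\Phi\) only decreases, so the numerator \(\Phi(w^{(0)})-\inf\Phi\) can only shrink.

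Second, I apply Lemma~\ref{lem:yz_exact_implies_rinM}, which uses Assumptions~\ref{assump:pg_smooth_bounded_rewritten} and~\ref{assump:pg_yz_exact_rewritten}. It gives \(r_{\rm in}(w^{(j)})=\|G^X(w^{(j)})\|_\infty\) for every \(j\). Its content is that exact \((Y,Z)\)-minimization over the closed set \(\mathcal C\) yields, by the Fermat rule and Lemma~\ref{lem:M_normal_cone}, multipliers \((u^{(j)},v^{(j)})\in\mathcal M_M(w^{(j)})\) that annihilate the \(Y\)- and \(Z\)-blocks. Feasibility gives \(r_{\rm pri}=0\). Third, I invoke Theorem~\ref{thm:pg_attainability_rewritten}, which uses Assumptions~\ref{assump:pg_smooth_bounded_rewritten} and~\ref{assump:pg_x_decrease_rewritten}. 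This supplies both the finite index \(j_\varepsilon\) with \(\|G^X(w^{(j_\varepsilon)})\|_\infty\le\varepsilon\) and the \(O(1/\sqrt{N+1})\) bound on \(\min_{0\le j\le N}\|G^X(w^{(j)})\|_\infty\). Substituting the identity from the second step turns both statements directly into the claimed statements for \(r_{\rm in}\).

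Finally, the "equivalently" clause follows from the definition of the infimum. The multipliers \((u^{(j_\varepsilon)},v^{(j_\varepsilon)})\) produced by the Fermat-rule argument make the \(Y\)- and \(Z\)-terms vanish, so they attain the infimum exactly. Hence \(r_{\rm in}(w^{(j_\varepsilon)};u^{(j_\varepsilon)},v^{(j_\varepsilon)})=\|G^X(w^{(j_\varepsilon)})\|_\infty\le\varepsilon\), and no approximation argument is needed.

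The only delicate point is consistency of indexing. The sufficient-decrease inequality~\eqref{eq:x_decrease_vs_pg_rewritten} measures \(G^X\) at the completed iterate \(w^{(j)}\). For the reduction to be valid, this must be the same point at which Lemma~\ref{lem:yz_exact_implies_rinM} holds, namely a point whose \((Y,Z)\) block is exactly optimal for its own \(X\). I would check this explicitly: \(G^X\) is evaluated at \(w^{(j)}\) after its \((Y,Z)\)-update, and not at the intermediate point \((X^{(j+1)},Y^{(j)},Z^{(j)})\). Once this is settled, everything else is direct composition of the earlier results.
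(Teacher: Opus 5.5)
Your proposal is correct and matches the paper's proof, which likewise composes Theorem~\ref{thm:pg_attainability_rewritten} with Lemma~\ref{lem:yz_exact_implies_rinM} and reads the multiplier clause off the definition of \(r_{\rm in}\). Two of your points make explicit what the paper leaves implicit: the Fermat-rule multipliers attain the infimum, and \(G^X\) in the sufficient-decrease assumption is evaluated at the completed iterate \(w^{(j)}\).
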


\begin{proof}
Combine Theorem~\ref{thm:pg_attainability_rewritten} with
Lemma~\ref{lem:yz_exact_implies_rinM}.  The multiplier statement follows from the definition of
\(r_{\rm in}\).
\end{proof}

\noindent\textbf{Practical stopping by stagnation.}
The implementation uses a budgeted inner solve and may stop by objective stagnation rather than explicitly
evaluating \(r_{\rm in}\).  The theory above is therefore an attainability statement.  Nevertheless, the sufficient
decrease condition gives the following useful calibration.  If
\[
    \Delta_j
    \doteq
    \Phi(w^{(j)})-\Phi(w^{(j+1)})
    \le
    \tau_k,
\]
then~\eqref{eq:x_decrease_vs_pg_rewritten} implies
\[
    \bigl\|G^X(w^{(j)})\bigr\|_\infty
    \le
    \sqrt{\tau_k/c_X}.
\]
By Lemma~\ref{lem:yz_exact_implies_rinM},
\[
    r_{\rm in}(w^{(j)})
    \le
    \sqrt{\tau_k/c_X}.
\]
Thus objective stagnation is a practical proxy for the KKT residual.  The safeguarded AuLa convergence
statement itself, however, uses the theoretical residual \(r_{\rm in}\) and the attainability result in
Theorem~\ref{thm:inner_attainability_m}.

\subsection{Empirical Results for the Stagnation-Based Stopping Rule}
\label{app_subsec: empirical_active_set}

\paragraph{Empirical validation of the stagnation-based stopping rule}
\label{param:emp_stagnation_stopping}
We empirically examine how the stagnation test relates to the corrected inner stationarity measure on the Push-T manipulation task.
Across $30$ random initializations (15{,}139 recorded inner-iteration pairs), we compute the association between
\[
    \|G^X(w^{(j)})\|_\infty
\]
and the one-step objective-decrease proxy
\[
    \sqrt{\Delta\Phi(w^{(j)})},
    \qquad
    \Delta\Phi(w^{(j)}) \doteq \Phi(w^{(j)})-\Phi(w^{(j+1)}).
\]
By Lemma~\ref{lem:yz_exact_implies_rinM}, for completed BCD iterates this quantity coincides with the KKT residual:
\[
    r_{\rm in}(w^{(j)})=\|G^X(w^{(j)})\|_\infty.
\]

\begin{itemize}
    \item \textbf{Strong correlation.}
    In log--log scale, the correlation coefficient is $\rho = 0.983$, indicating that the stagnation signal is a
    reliable empirical proxy for $\|G^X(w^{(j)})\|_\infty$ and hence for $r_{\rm in}(w^{(j)})$.

    \item \textbf{Near-linear scaling.}
    A power-law fit yields an exponent of $0.945 \approx 1$, consistent with the theoretical
    $\mathcal{O}(\sqrt{\tau_k})$ dependence implied by the sufficient-decrease relation.

    \item \textbf{Empirical constant.}
    Empirically, we observe
    \[
        \|G^X(w^{(j)})\|_\infty
        \le c \sqrt{\Delta\Phi(w^{(j)})}
    \]
    with $c \approx 1.9$
    (equivalently, $c_X \approx 0.28$ in~\eqref{eq:x_decrease_vs_pg_rewritten}).
\end{itemize}

\noindent Consequently, on Push-T, terminating the inner BCD loop when
\[
    \Phi(w^{(j)})-\Phi(w^{(j+1)}) \le \tau_k
\]
\emph{typically} yields
\begin{equation}
\label{eq:empirical_stagnation_bound}
    r_{\rm in}(w^{(j)})
    =
    \|G^X(w^{(j)})\|_\infty
    \ \le\ 1.9\,\sqrt{\tau_k}.
\end{equation}
We emphasize that~\eqref{eq:empirical_stagnation_bound} is an empirical calibration on this problem; the theoretical
analysis only requires the existence of a sufficient-decrease constant.

\begin{figure}[htpb]
    \centering
    \includegraphics[width=0.66\linewidth]{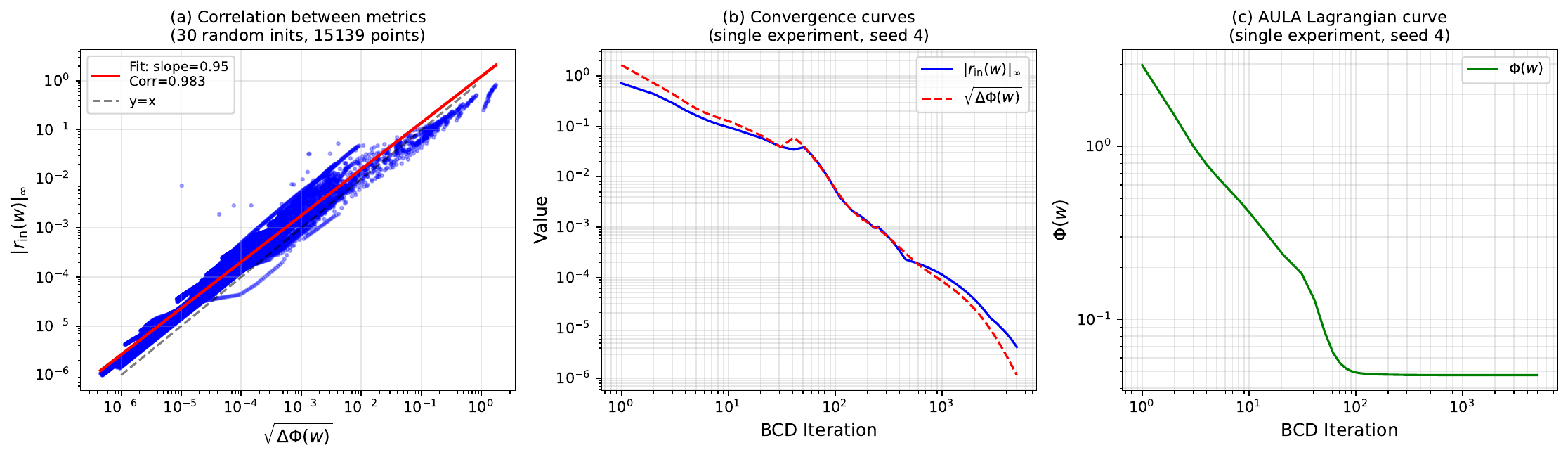}
    \caption{Empirical validation of the stagnation-based stopping criterion on the Push-T task.
    (a) Log-log scatter plot showing the correlation ($\rho = 0.98$) between the $X$-block residual
    $\|G^X(w)\|_\infty$ (equivalently, the corrected inner residual $r_{\rm in}(w)$ on completed BCD iterates)
    and the stagnation measure $\sqrt{\Delta\Phi(w)}$ across 30 random initializations (15,139 data points).
    (b) Convergence of both metrics during BCD iterations for a representative experiment.
    (c) Corresponding value of the augmented Lagrangian $\Phi(w)$.}
    \label{fig:practical_correlation}
\end{figure}

\section{Hyperparameters and Experiments setup}
\label{app:hypper_setup}

\subsection{CITO Benchmark Problems}
\noindent\textbf{Push Box} Push Box is a planar pushing task where a rectangular box moves on a tabletop under friction. The solver must plan a long-horizon trajectory to reach target poses while selecting appropriate contact modes (which face is pushed) and corresponding contact forces, resulting in nontrivial mode switching.

\noindent\textbf{Cart Transport} Cart Transport involves indirect manipulation via friction: a payload rests on a cart and may stick or slip as the cart accelerates. The goal is to drive the system to a target while keeping the payload on the cart, often requiring non-monotone motions that exploit stick--slip transitions to reposition the payload.

\noindent\textbf{Push T} Push T generalizes planar pushing to a non-convex T-shaped object, increasing the number of feasible contact locations and modes. This leads to a larger complementarity system and a more complex mode landscape, making Push T a harder benchmark.

We follow the optimization problem formulation in CRISP~\cite{li2025surprising} (in its Appendix~B). Specifically, CRISP enforces the system dynamics as equality constraints, models unilateral contacts via complementarity constraints, and (for tasks such as Push-T) uses mutual-exclusion complementarity constraints to ensure that at most one control mode is active at each time step. For the Push-T and Cart-Transport tasks, we include the control-feasibility and boundary constraints within the MPCC formulation. In general, the objective terms and constraints are identical to those in CRISP.

We summarize the benchmark settings in Table~\ref{tab:cito_hyperparameters}. The solver parameters are reported for \textsc{IMPACT}; for all other baselines, we set the stopping criteria to match this convergence setup.
\begin{table*}[ht]
\centering
\caption{Hyperparameters for the three benchmark tasks.}
\label{tab:cito_hyperparameters}
\begin{tabular}{l|c|c|c}
\toprule
\textbf{Parameter} & \textbf{Push Box} & \textbf{Push T} & \textbf{Cart Transporter} \\
\midrule
\multicolumn{4}{c}{\textit{Physical Parameters}} \\
\midrule
Mass $m$ (kg) & 0.1 & 0.1 & $m_1=0.1$, $m_2=0.2$ \\
Gravity $g$ (m/s$^2$) & 9.81 & 9.8 & 9.81 \\
Friction coefficient $\mu$ & 0.5 & 0.4 & 0.2 \\
Characteristic length (m) & $a=0.3$, $b=0.4$ & $l=0.05$ & $l=1.0$ \\
Time step $\Delta t$ (s) & 0.05 & 0.05 & 0.02 \\
\midrule
\multicolumn{4}{c}{\textit{Problem Dimensions}} \\
\midrule
State dimension $n_x$ & 3 & 3 & 4 \\
Control dimension $n_u$ & 6 & 24 & 4 \\
Complementarity pairs $n_c$ & 10 & 43 & 3 \\
Equality constraints $n_e$ & 0 & 7 & 1 \\
Inequality constraints $n_i$ & 0 & 4 & 4 \\
\midrule
\multicolumn{4}{c}{\textit{Optimization Parameters}} \\
\midrule
Horizon & 50 & 50 & 300 \\
Stage control cost weight  & 0.001 & 0.01 & $10^{-6}$ \\
Final cost weight  & 100.0 & 100.0 & 5000.0 \\
Multiplier Safeguard & $10^6$ & $10^6$ & $10^6$ \\
$\rho$ scaling factor & 1.1 & 1.1 & 1.5 \\
\midrule
\multicolumn{4}{c}{\textit{Convergence Criteria}} \\
\midrule
Max outer iterations & 1000 & 1000 & 1000 \\
Outer tolerance $\epsilon_h$ & $10^{-5}$ & $10^{-5}$ & $10^{-5}$ \\
Outer tolerance $\epsilon_{\text{comp}}$ & $10^{-5}$ & $10^{-5}$ & $10^{-5}$ \\
Max inner iterations & 50 & 50 & 10 \\
Inner tolerance & $10^{-3}$ & $10^{-3}$ & $10^{-3}$ \\
\midrule
\multicolumn{4}{c}{\textit{Newton Solver Parameters}} \\
\midrule
Max Newton iterations & 50 & 200 & 100 \\
Newton tolerance & $10^{-6}$ & $10^{-6}$ & $10^{-6}$ \\
Regularization & $2\times10^{-5}$ & $5\times10^{-5}$ & $10^{-5}$ \\
\bottomrule
\end{tabular}
\end{table*}

\subsection{Allegro Benchmark}
We summarize the hyperparameters used in the Allegro benchmark tests in Table~\ref{tab:allegro_hyperparameters}.

\begin{table*}[t]
\centering
\caption{Hyperparameters comparison for Allegro Hand in-hand manipulation: IMPACT vs. C-Free}
\label{tab:allegro_hyperparameters}
\begin{tabular}{l|c|c|l}
\toprule
\textbf{Parameter} & \textbf{IMPACT} & \textbf{C-Free} & \textbf{Notes} \\
\midrule
\multicolumn{4}{c}{\textit{System Dimensions}} \\
\midrule
State dimension $n_q$ & 23 & 23 & Same (obj: 7, robot: 16) \\
Velocity dimension $n_v$ & 22 & 22 & Same (obj: 6, robot: 16) \\
Command dimension $n_u$ & 16 & 16 & Same (joint position cmds) \\
Max contacts $n_{\text{con}}$ & 20 & 20 & Same \\
\midrule
\multicolumn{4}{c}{\textit{Simulation Parameters (Mujoco)}} \\
\midrule
Time step $h$ (s) & 0.1 & 0.1 & Same \\
Frame skip & 50 & 50 & Same \\
Friction coefficient $\mu$ & 0.5 & 0.5 & Same \\
\midrule
\multicolumn{4}{c}{\textit{MPC Parameters}} \\
\midrule
Horizon $N$ & 4 & 4 & Same \\
Control bound & $\pm 0.1$ & $\pm 0.1 (0.2)$ &  \\
\midrule
\multicolumn{4}{c}{\textit{Cost Function Weights (Path Cost)}} \\
\midrule
Position cost weight & 0.0 & 0.0 & Same \\
Quaternion cost weight & 0.0 & 0.0 & Same \\
Contact cost weight & 1.0 & 1.0 & Encourage contact \\
Grasp closure weight & 0.0 & 0.0 & Same \\
Control cost weight & 0.1 & 0.1 & Same \\
Velocity penalty & 0.1 & -- & Object Damping \\
\midrule
\multicolumn{4}{c}{\textit{Cost Function Weights (Final Cost)}} \\
\midrule
Final position weight & 1000.0 & 1000.0 & Same \\
Final quaternion weight & 90.0 & 50.0 & scale match \\
\midrule
\multicolumn{4}{c}{\textit{Solver-Specific Parameters}} \\
\midrule
\multicolumn{4}{l}{\textbf{IMPACT Parameters}} \\
$\rho_{\max}$ & $10^3$ & -- & \\
$\rho$ scale factor & 5.0 & -- & \\
Max outer iterations & 10 & -- & \\
Outer tol ($\epsilon_h$) & $10^{-3}$ & -- & \\
Outer tol ($\epsilon_{\text{comp}}$) & $10^{-3}$ & -- & \\
Max inner iterations & 5 & -- & \\
Newton max iterations & 30 & -- & \\
Newton step tolerance & $10^{-5}$ & -- & \\
Newton obj tolerance & $10^{-6}$ & -- & \\
\midrule
\multicolumn{4}{l}{\textbf{C-Free (IPOPT-based NLP)}} \\
IPOPT max iterations & -- & 50 & \\
Complementarity relaxation & -- & $>0$ & Relaxed LCP \\
\midrule
\multicolumn{4}{c}{\textit{Success Criteria}} \\
\midrule
Position tolerance (m) & 0.02 & 0.02 & Same \\
Quaternion tolerance & 0.04 & 0.04 & Same \\
Consecutive success steps & 20 & 20 & Same \\
Max rollout steps & 1000 & 1000 & Same \\
\bottomrule
\end{tabular}
\end{table*}

\section{Additional Experiment Demonstration}
\label{app:additional_exps}
We provide additional experimental demonstrations here to build intuition for the main-text results.
\subsection{IMPACT CITO Extra Demos}
We include additional qualitative demos of IMPACT CITO on three tasks. Figures~\ref{fig:box-2x2} and~\ref{fig:pusht-2x2} report overlaid rollouts for the Pushing Box and PushT settings across multiple start--goal pairs in the $(x,y,\theta)$ state space. Figure~\ref{fig:cart-1x4} shows rollout sequences for the Cart Transporter task in $(x,y,dx,dy)$, with zero start and goal velocities. Visualization conventions (start/goal poses, force direction, and time progression) follow the figure captions. These results are based on the same data as the supplementary video.

\begin{figure}[t]
  \centering

  \begin{subfigure}[t]{0.25\linewidth}
    \centering
    \includegraphics[width=\linewidth]{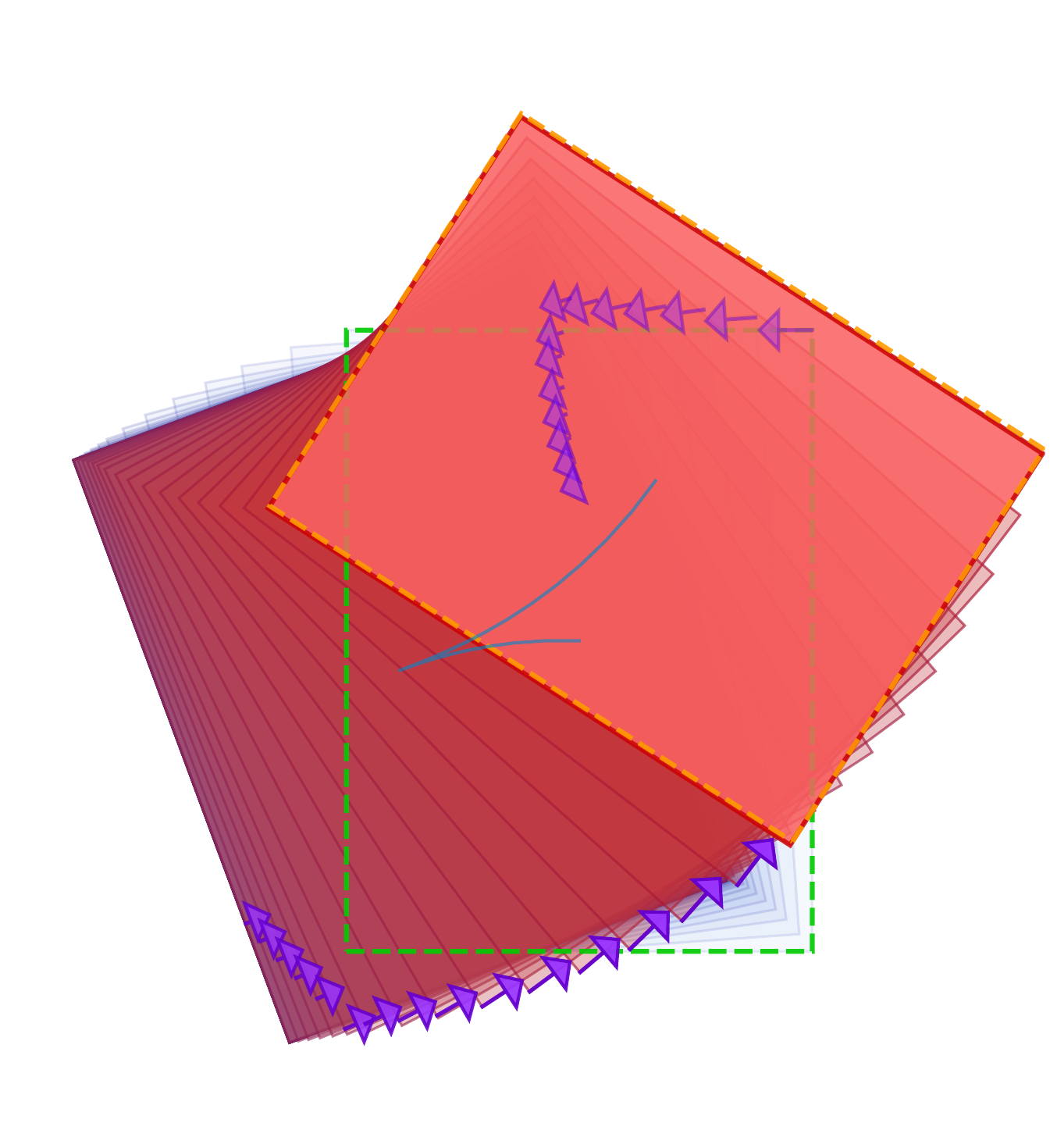}
    \caption{From $(x,y,\theta)=(0.00,0.00,0.00)$ to $(0.10,0.21,1.00)$.}
    \label{fig:box-a}
  \end{subfigure}\hspace{0.1\linewidth}
  \begin{subfigure}[t]{0.32\linewidth}
    \centering
    \includegraphics[width=\linewidth]{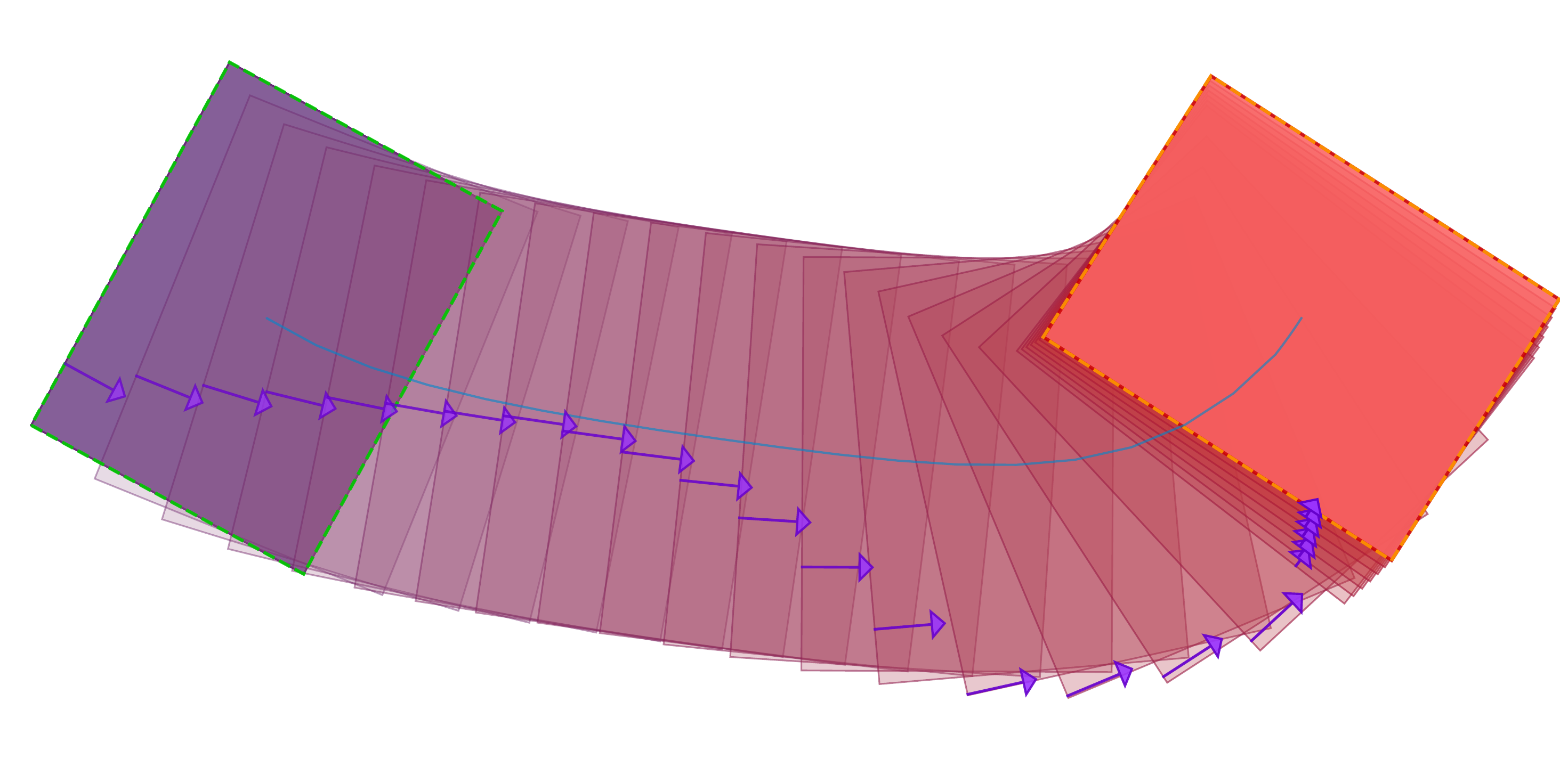}
    \caption{From $(x,y,\theta)=(0.00,0.00,-0.50)$ to $(2.00,0.00,1.00)$.}
    \label{fig:box-b}
  \end{subfigure}

  \begin{subfigure}[t]{0.3\linewidth}
    \centering
    \includegraphics[width=\linewidth]{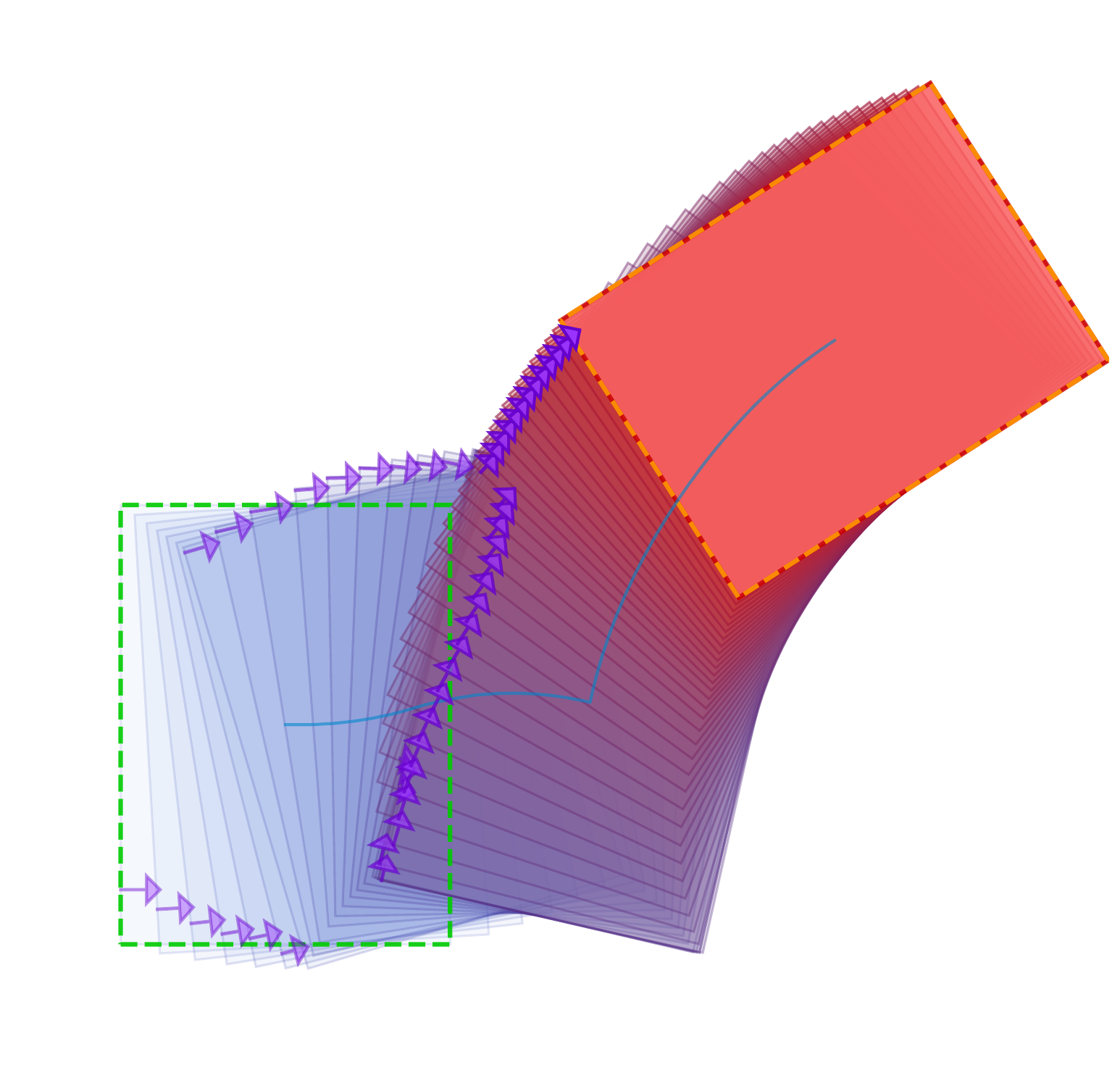}
    \caption{From $(x,y,\theta)=(0.00,0.00,0.00)$ to $(1.00,0.70,-1.00)$.}
    \label{fig:box-c}
  \end{subfigure}\hspace{0.1\linewidth}
  \begin{subfigure}[t]{0.3\linewidth}
    \centering
    \includegraphics[width=\linewidth]{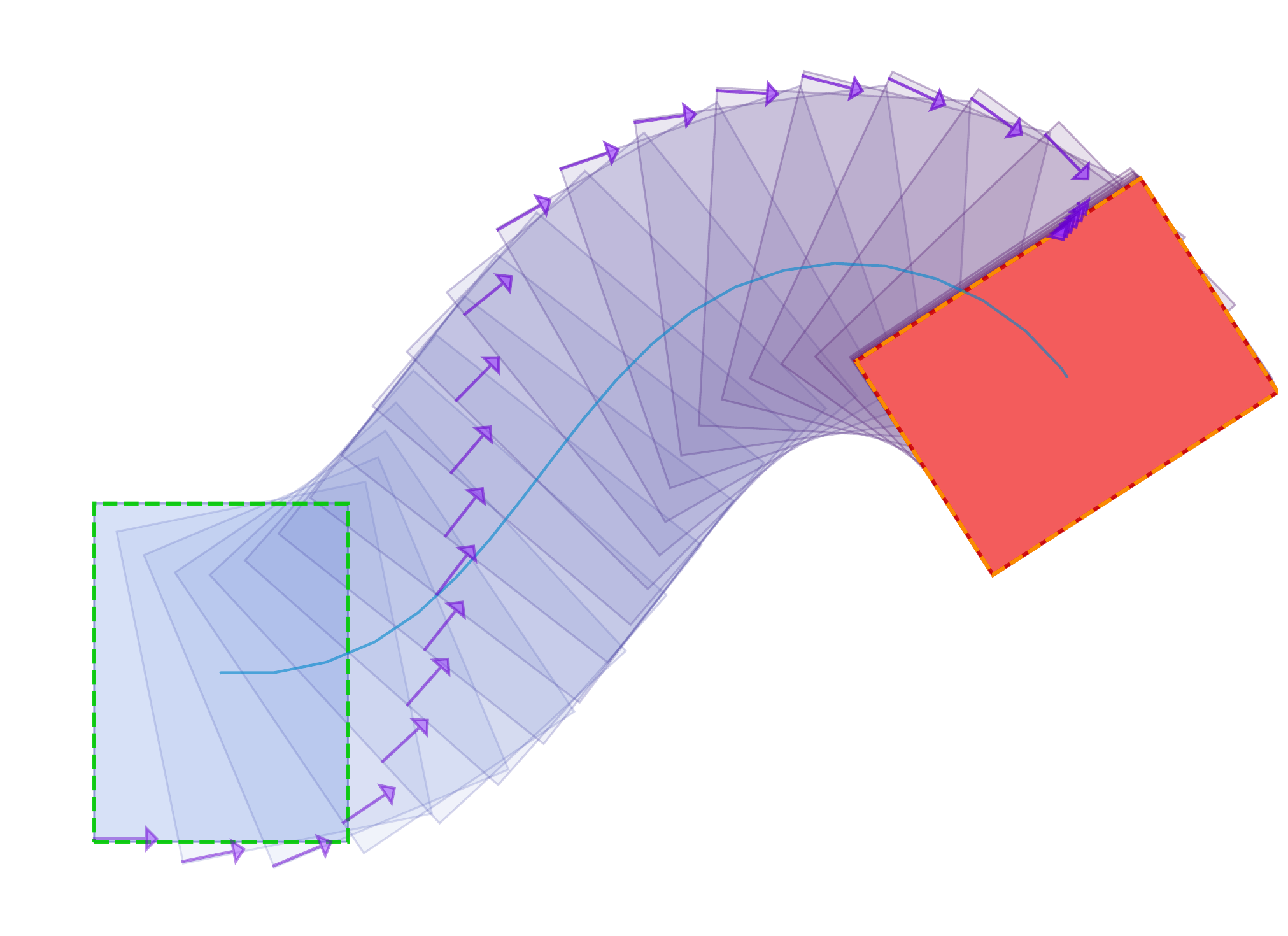}
    \caption{From $(x,y,\theta)=(0.00,0.00,0.00)$ to $(2.00,0.70,-1.00)$.}
    \label{fig:box-d}
  \end{subfigure}

  \caption{Pushing Box task. Each panel overlays one rollout conditioned on a start and a goal state in $(x, y, \theta)$. The green and orange dashed boxes indicate the start and goal poses, respectively. The purple arrow shows the pushing force, and the color gradient from light blue to orange indicates time progression. The blue solid line shows the trajectory of the box center.}
  \label{fig:box-2x2}
\end{figure}

\begin{figure}[t]
  \centering

  \begin{subfigure}[t]{0.3\linewidth}
    \centering
    \includegraphics[width=\linewidth]{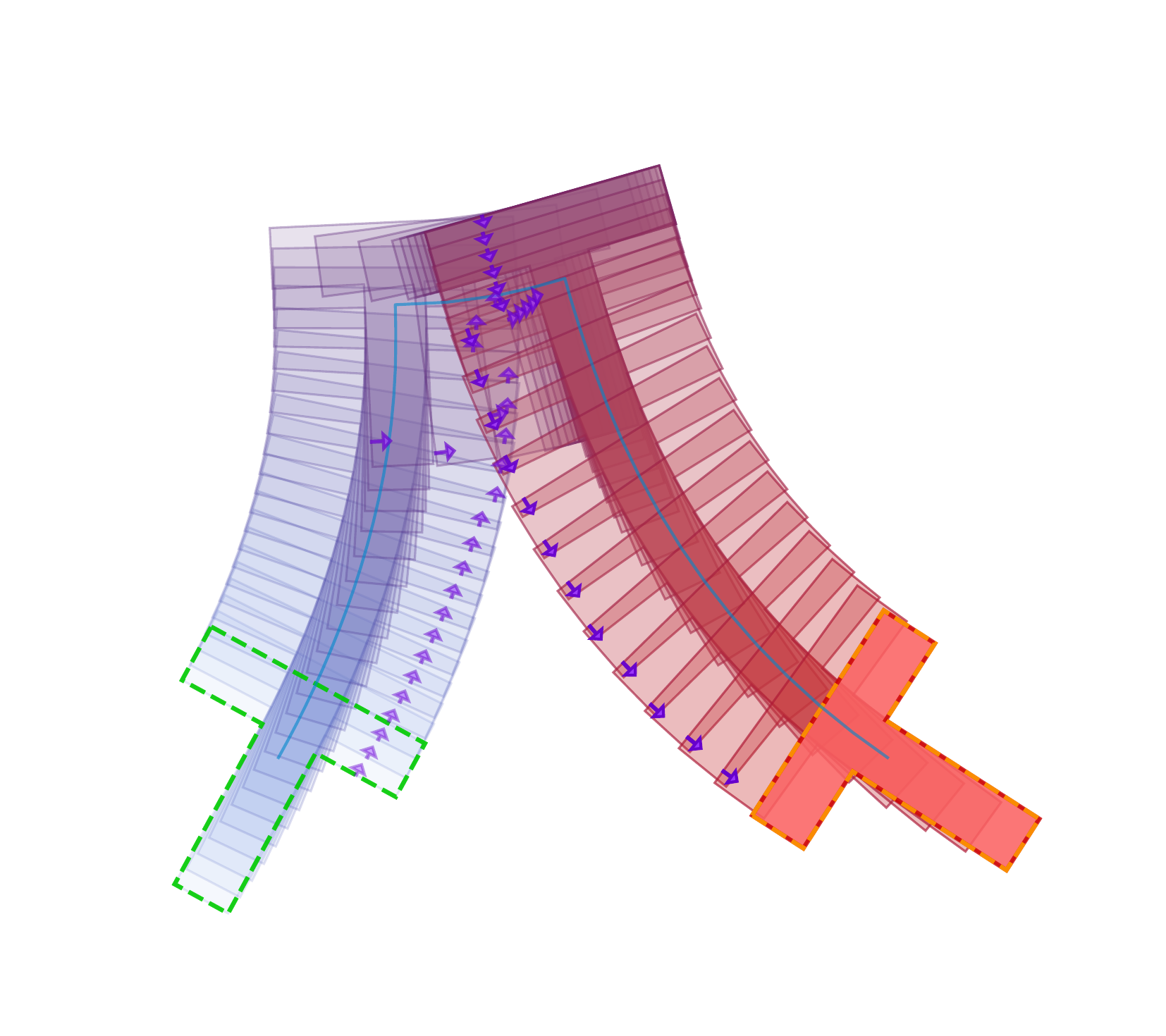}
    \caption{From $(x,y,\theta)=(0.00,0.00,-0.50)$ to $(0.50,0.00,1.00)$.}
    \label{fig:pusht-a}
  \end{subfigure}\hspace{0.04\linewidth}
  \begin{subfigure}[t]{0.4\linewidth}
    \centering
    \includegraphics[width=\linewidth]{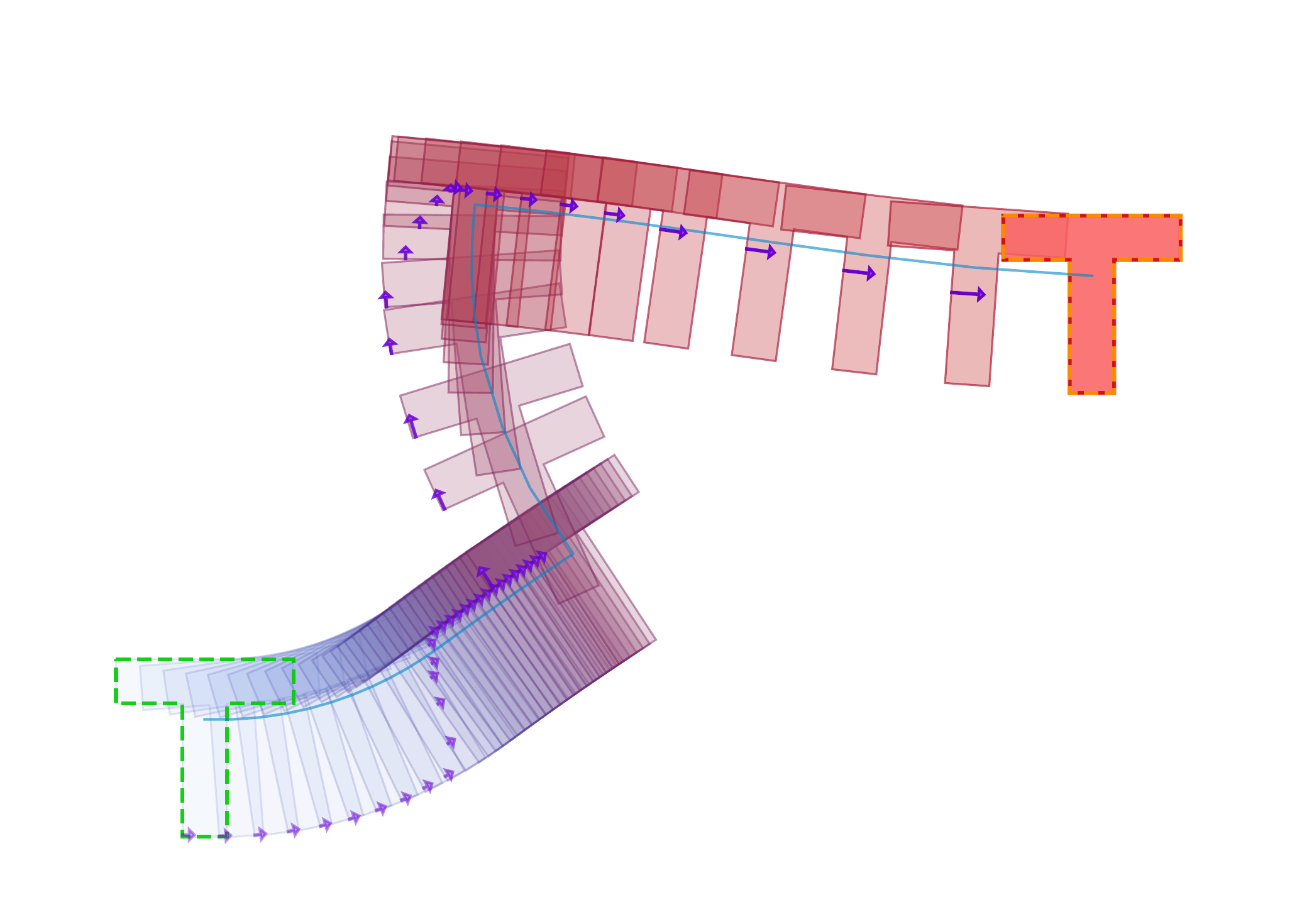}
    \caption{From $(x,y,\theta)=(0.00,0.00,0.00)$ to $(1.00,0.50,0.00)$.}
    \label{fig:pusht-b}
  \end{subfigure}

  \begin{subfigure}[t]{0.4\linewidth}
    \centering
    \includegraphics[width=\linewidth]{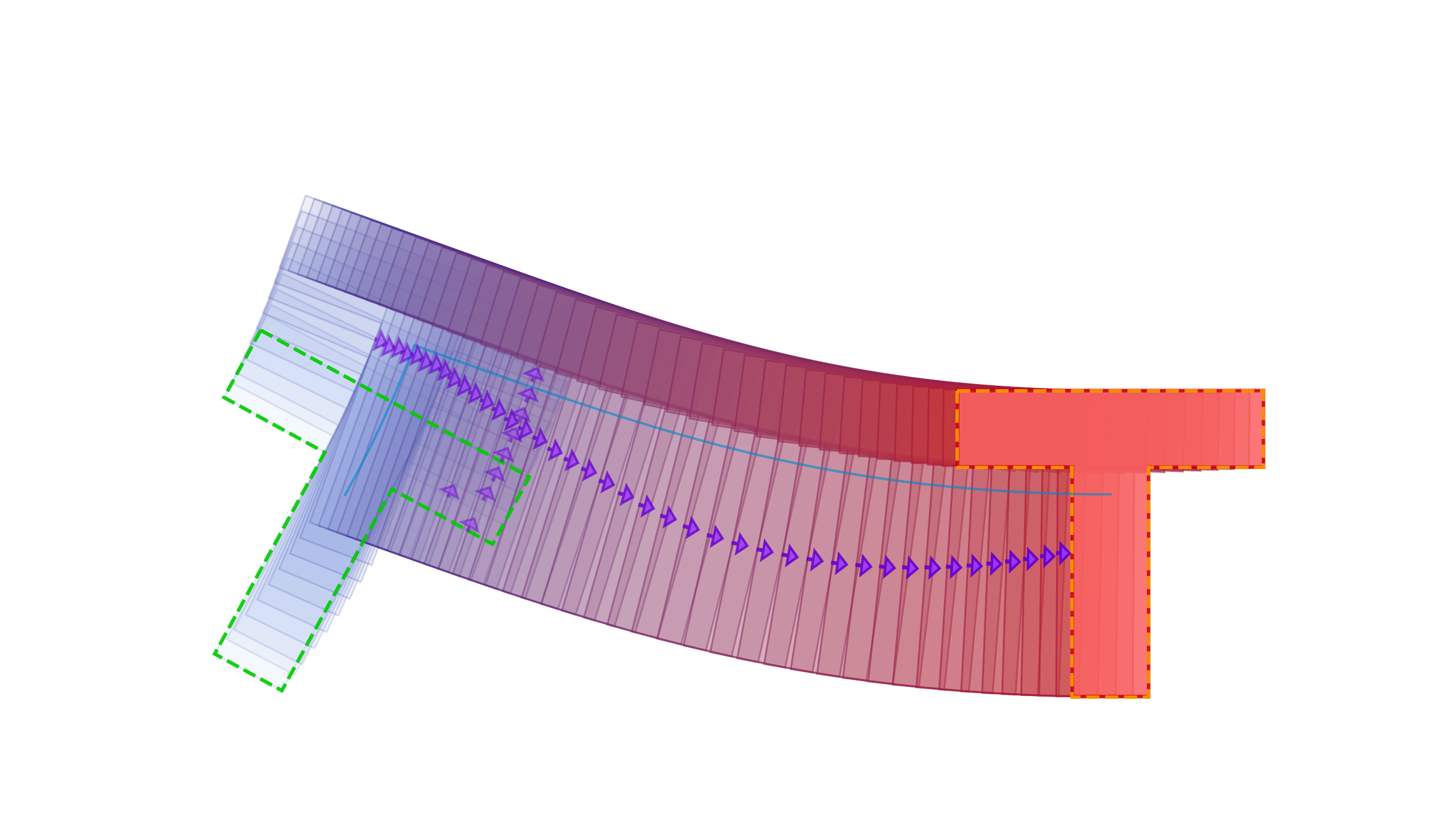}
    \caption{From $(x,y,\theta)=(0.00,0.00,-0.50)$ to $(0.50,0.00,0.00)$.}
    \label{fig:pusht-c}
  \end{subfigure}\hspace{0.04\linewidth}
  \begin{subfigure}[t]{0.4\linewidth}
    \centering
    \includegraphics[width=\linewidth]{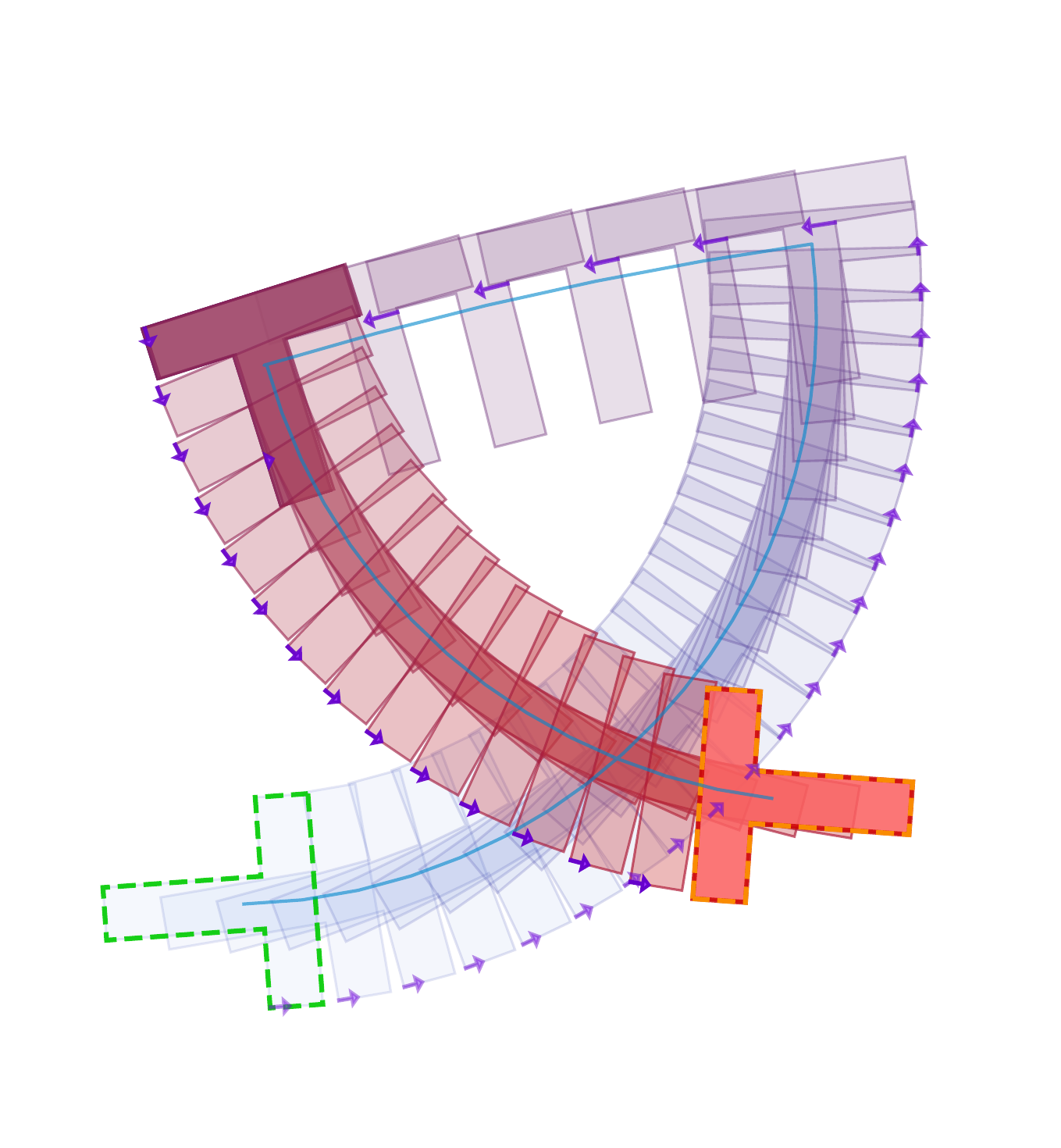}
    \caption{From $(x,y,\theta)=(0.00,0.00,-1.50)$ to $(0.50,0.10,1.50)$.}
    \label{fig:pusht-d}
  \end{subfigure}

  \caption{Push T task. Each panel overlays one rollout conditioned on a start and a goal state in $(x, y, \theta)$. The green and orange dashed boxes indicate the start and goal poses, respectively. The purple arrow shows the pushing force, and the color gradient from light blue to orange indicates time progression. The blue solid line shows the trajectory of the pushed object's center.}
  \label{fig:pusht-2x2}
\end{figure}

\begin{figure}[htbp]
  \centering
  \begin{subfigure}[t]{0.211\textwidth}
    \centering
    \includegraphics[width=\linewidth]{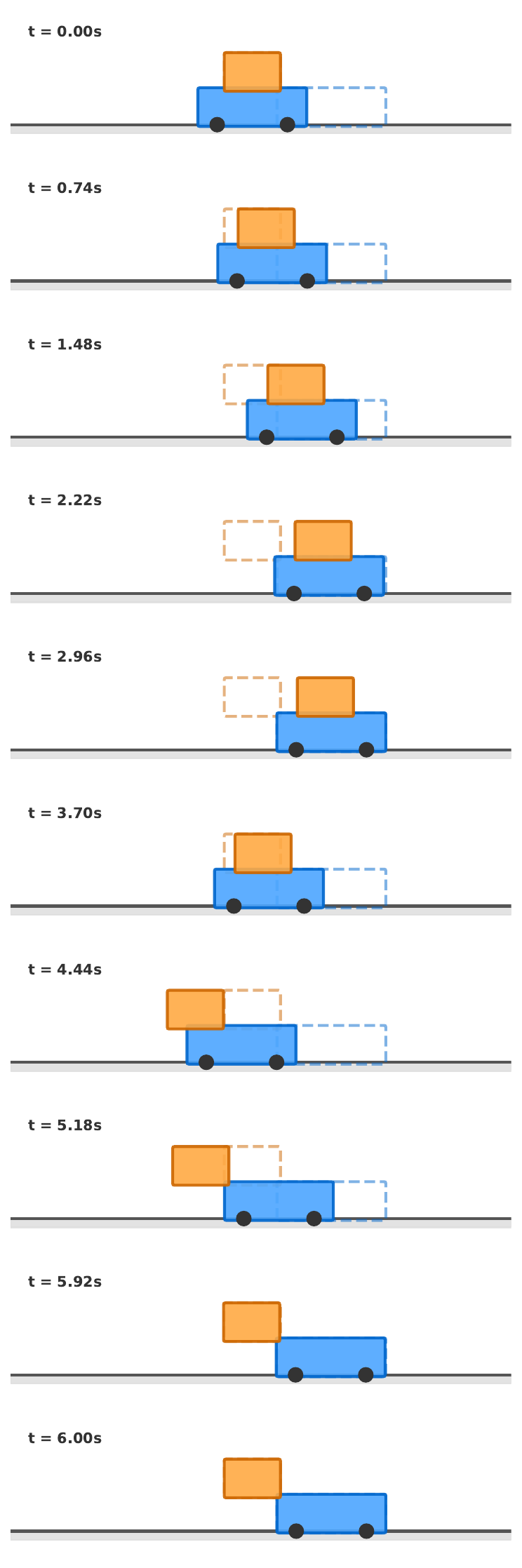}
    \caption{Start at x=0.00, y=0.00. Goal is x=0.00, y=0.90.}
    \label{fig:cart-a}
  \end{subfigure}
  \begin{subfigure}[t]{0.2\textwidth}
    \centering
    \includegraphics[width=\linewidth]{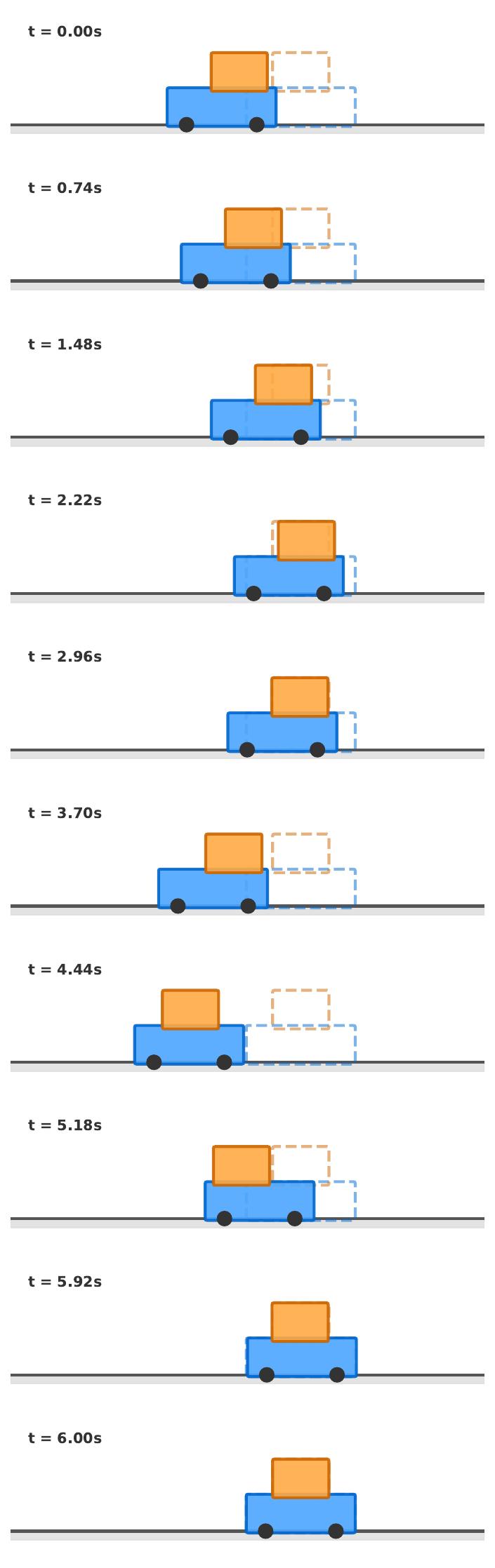}
    \caption{Start at x=0.00, y=-0.20. Goal is x=0.70, y=0.70.}
    \label{fig:cart-b}
  \end{subfigure}
  \begin{subfigure}[t]{0.257\textwidth}
    \centering
    \includegraphics[width=\linewidth]{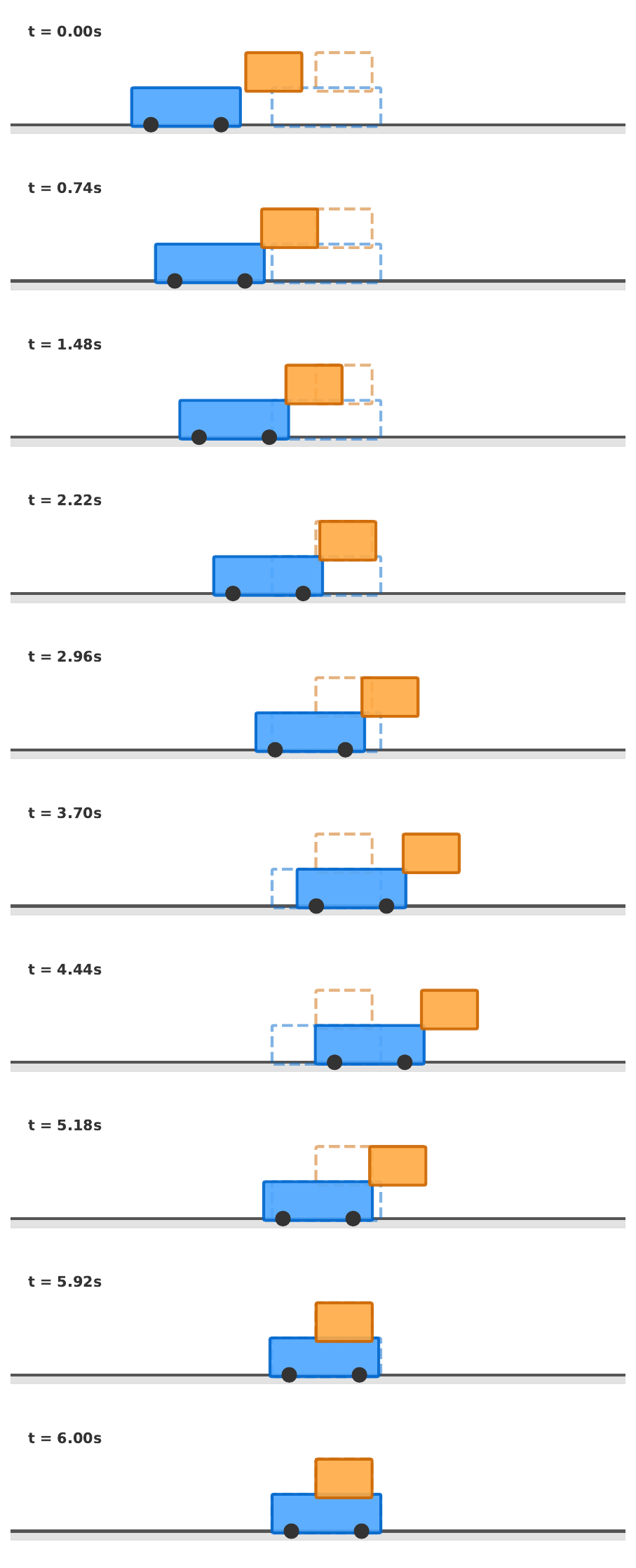}
    \caption{Start at x=0.90, y=-0.10. Goal is x=1.70, y=1.50.}
    \label{fig:cart-c}
  \end{subfigure}
  \begin{subfigure}[t]{0.239\textwidth}
    \centering
    \includegraphics[width=\linewidth]{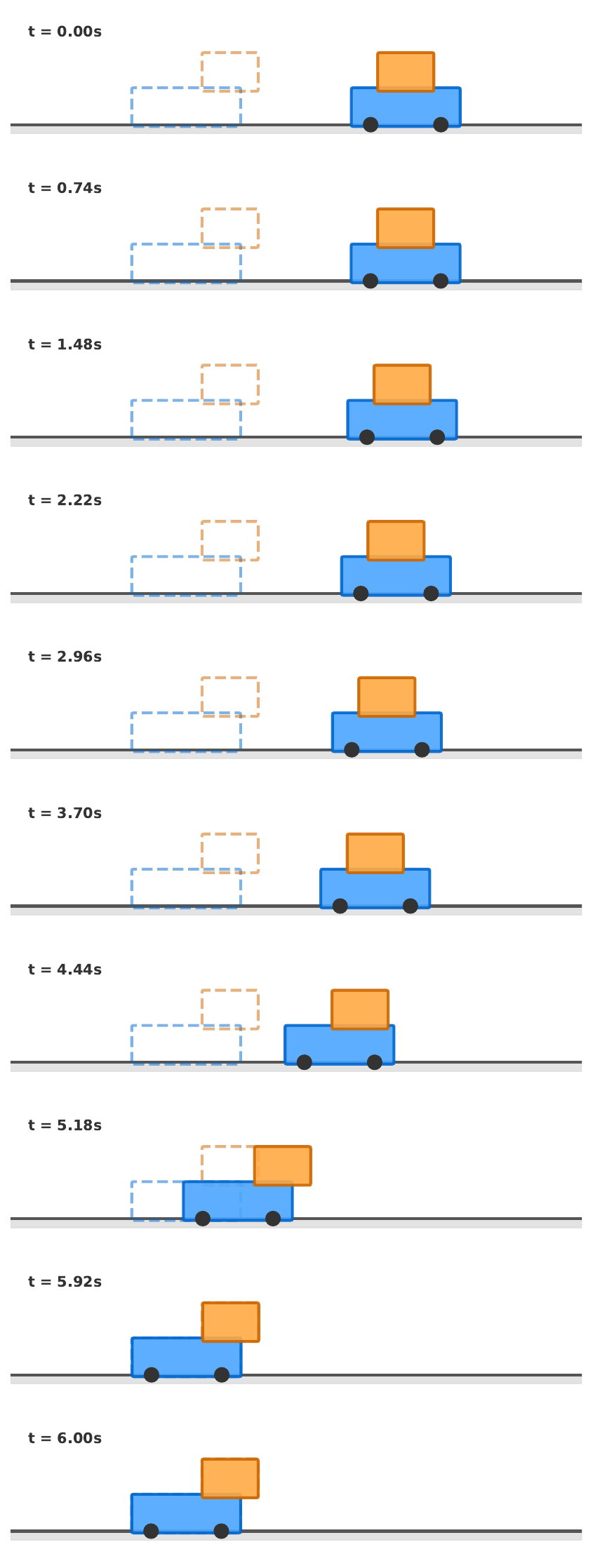}
    \caption{Start at x=1.00, y=1.00. Goal is x=-1.00, y=-1.50.}
    \label{fig:cart-d}
  \end{subfigure}

  \caption{Cart Transporter task. Each panel shows a rollout sequence in the state space $(x, y, dx, dy)$, where $x$ and $dx$ denote the load's position and velocity, and $y$ and $dy$ denote the cart's position and velocity, all expressed in the global coordinate frame. Start and goal velocities are set to zero.}
  \label{fig:cart-1x4}
\end{figure}

\subsection{Allegro Extra Demos}
We provide additional Allegro in-hand re-orientation results on a variety of objects.
Due to space constraints, we split them into four figures containing 5, 5, 5, and 2 objects, respectively
(Figs.~\ref{fig:allegro_extra_1}--\ref{fig:allegro_extra_4}). The object shown above the hand indicates the target pose.

\begin{figure}[t]
  \centering

  \begin{subfigure}{\linewidth}
    \centering
    \includegraphics[width=0.8\linewidth]{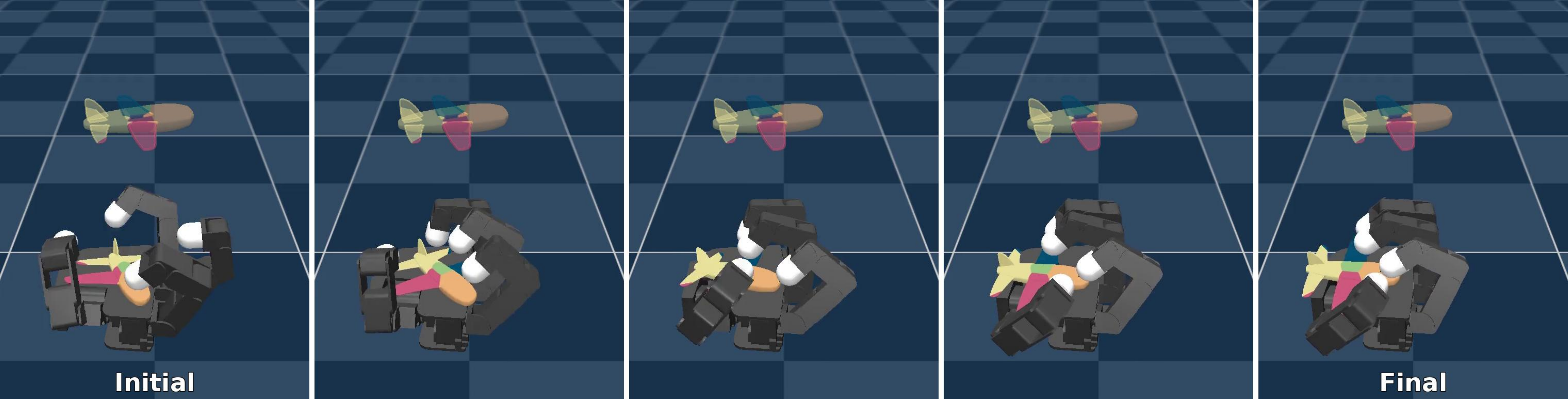}
    \caption{Airplane}
    \label{fig:allegro_airplane}
  \end{subfigure}

  \vspace{0.6em}

  \begin{subfigure}{\linewidth}
    \centering
    \includegraphics[width=0.8\linewidth]{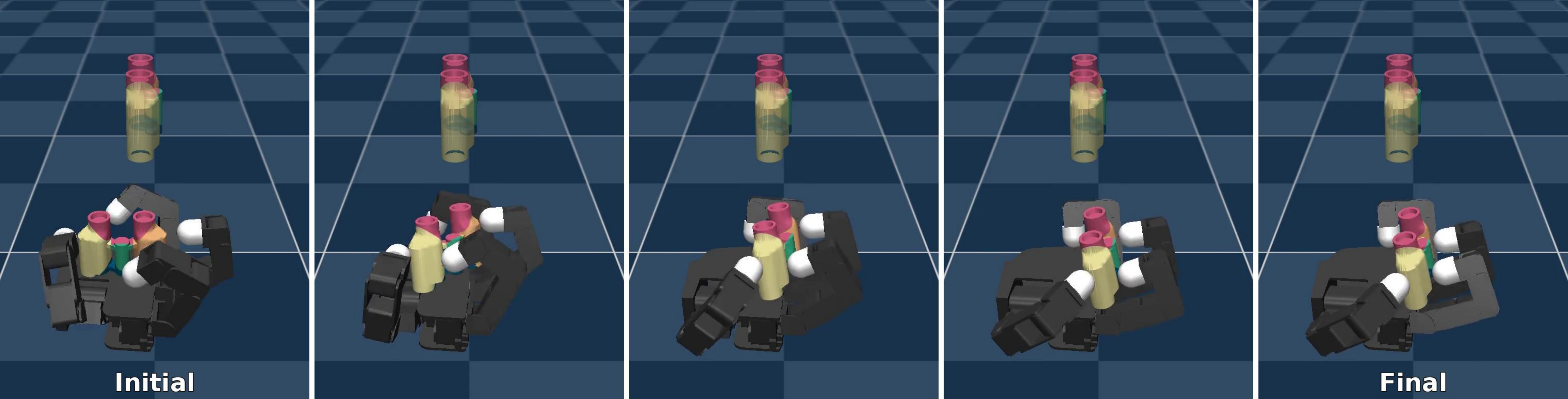}
    \caption{Binoculars}
    \label{fig:allegro_binoculars}
  \end{subfigure}

  \vspace{0.6em}

  \begin{subfigure}{\linewidth}
    \centering
    \includegraphics[width=0.8\linewidth]{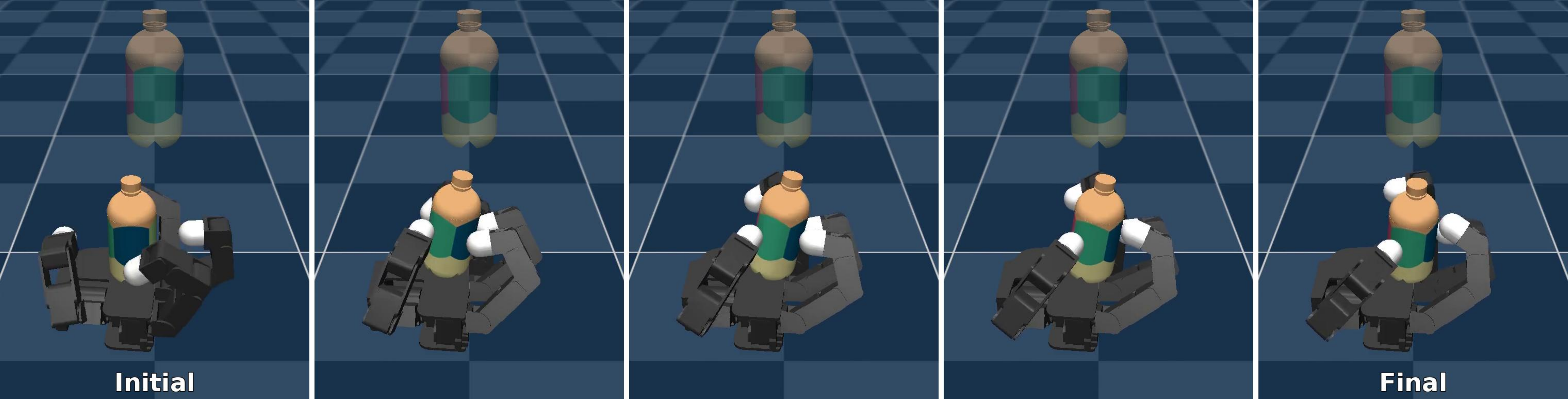}
    \caption{Bottle}
    \label{fig:allegro_bottle}
  \end{subfigure}

  \vspace{0.6em}

  \begin{subfigure}{\linewidth}
    \centering
    \includegraphics[width=0.8\linewidth]{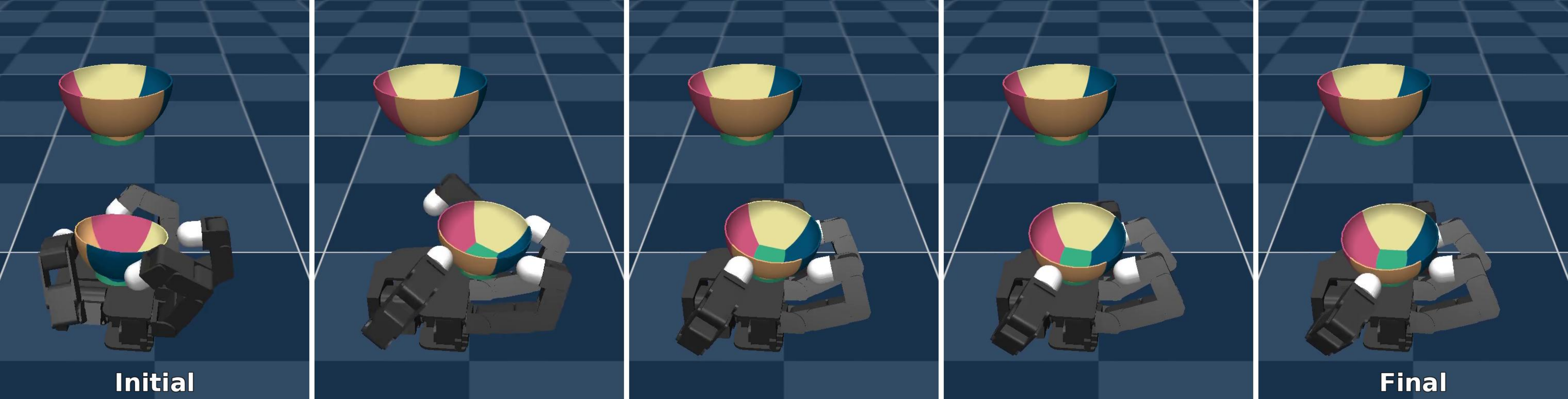}
    \caption{Bowl}
    \label{fig:allegro_bowl}
  \end{subfigure}

  \vspace{0.6em}

  \begin{subfigure}{\linewidth}
    \centering
    \includegraphics[width=0.8\linewidth]{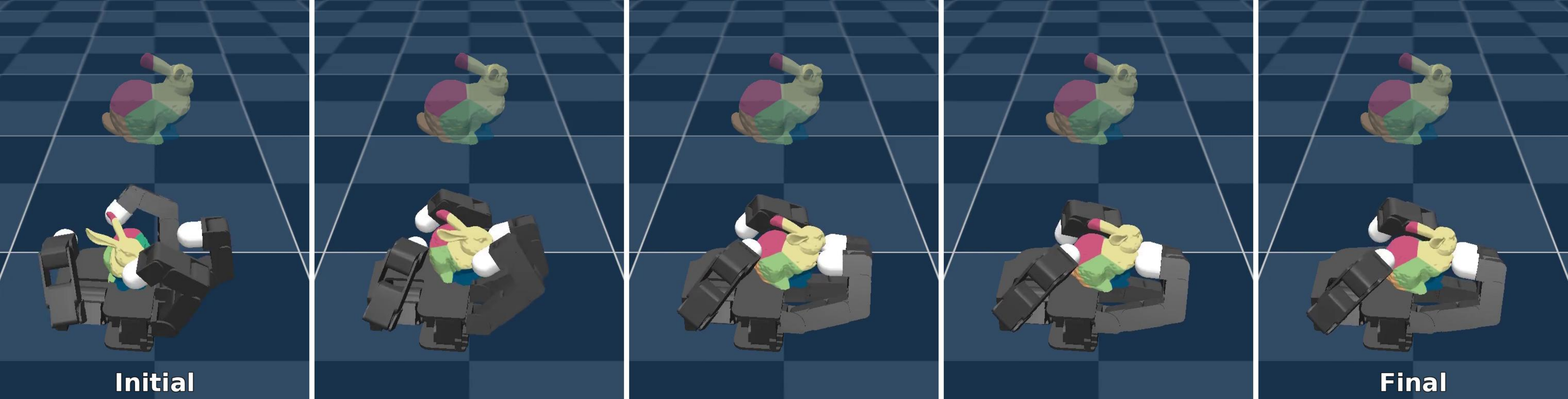}
    \caption{Bunny}
    \label{fig:allegro_bunny}
  \end{subfigure}

  \caption{Additional Allegro in-hand re-orientation results on: Airplane, Binoculars, Bottle, Bowl, and Bunny.}
  \label{fig:allegro_extra_1}
\end{figure}

\begin{figure}[t]
  \centering

  \begin{subfigure}{\linewidth}
    \centering
    \includegraphics[width=0.8\linewidth]{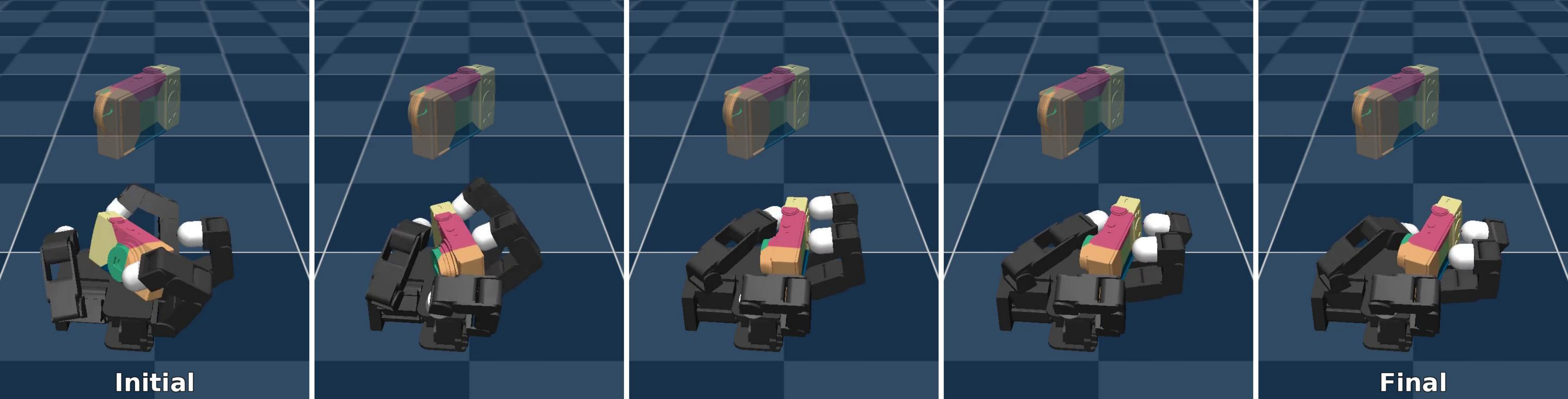}
    \caption{Camera}
    \label{fig:allegro_camera}
  \end{subfigure}

  \vspace{0.6em}

  \begin{subfigure}{\linewidth}
    \centering
    \includegraphics[width=0.8\linewidth]{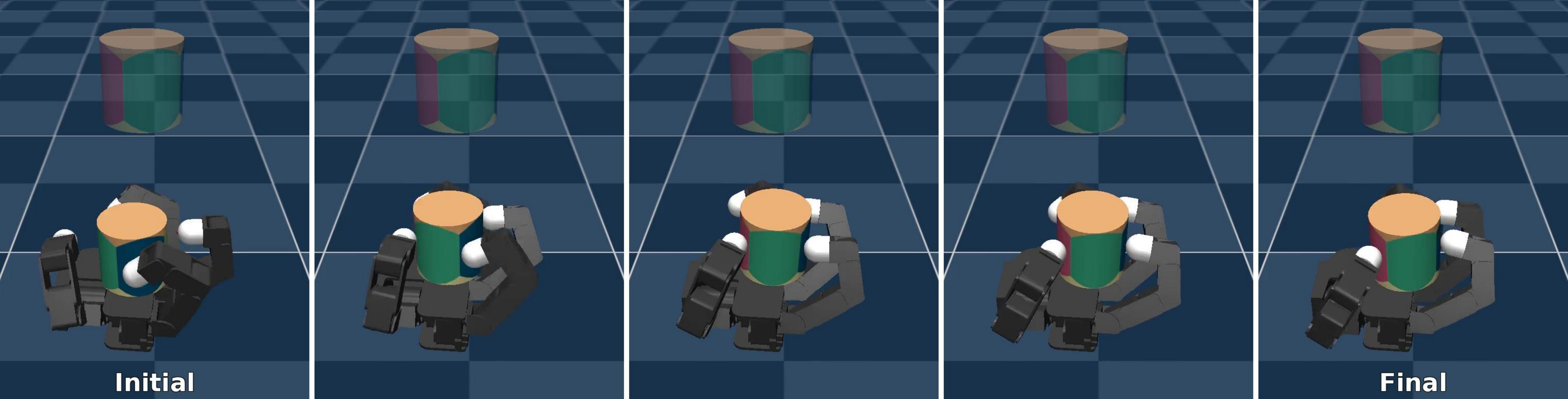}
    \caption{Can}
    \label{fig:allegro_can}
  \end{subfigure}

  \vspace{0.6em}

  \begin{subfigure}{\linewidth}
    \centering
    \includegraphics[width=0.8\linewidth]{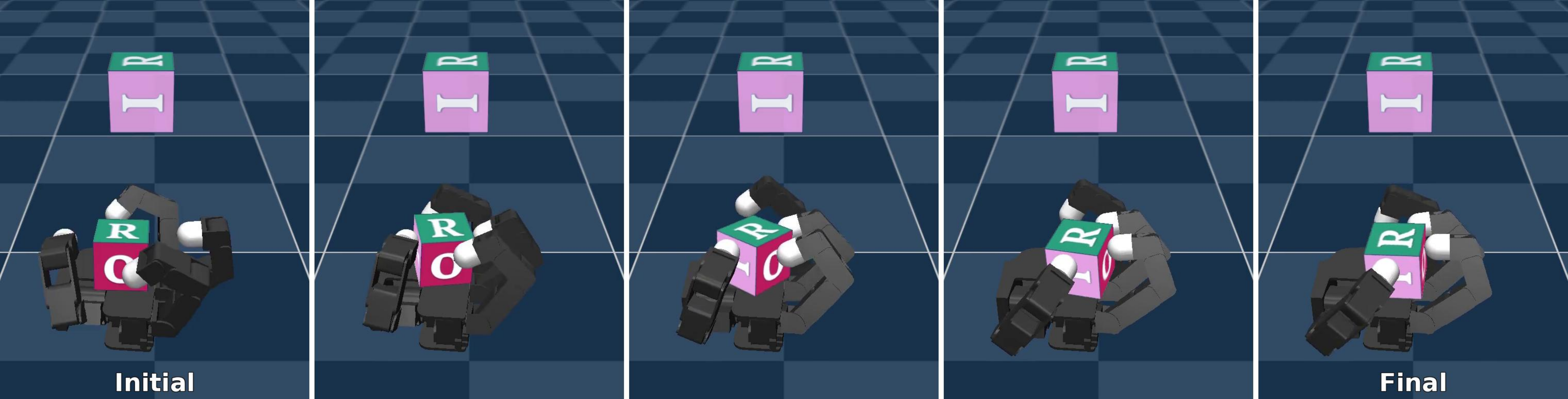}
    \caption{Cube}
    \label{fig:allegro_cube}
  \end{subfigure}

  \vspace{0.6em}

  \begin{subfigure}{\linewidth}
    \centering
    \includegraphics[width=0.8\linewidth]{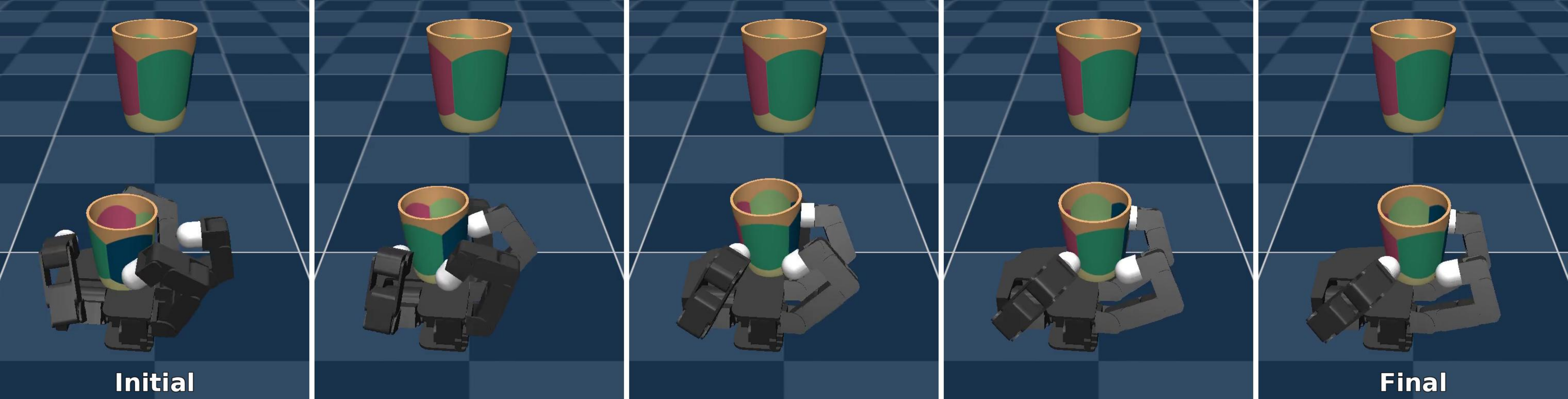}
    \caption{Cup}
    \label{fig:allegro_cup}
  \end{subfigure}

  \vspace{0.6em}

  \begin{subfigure}{\linewidth}
    \centering
    \includegraphics[width=0.8\linewidth]{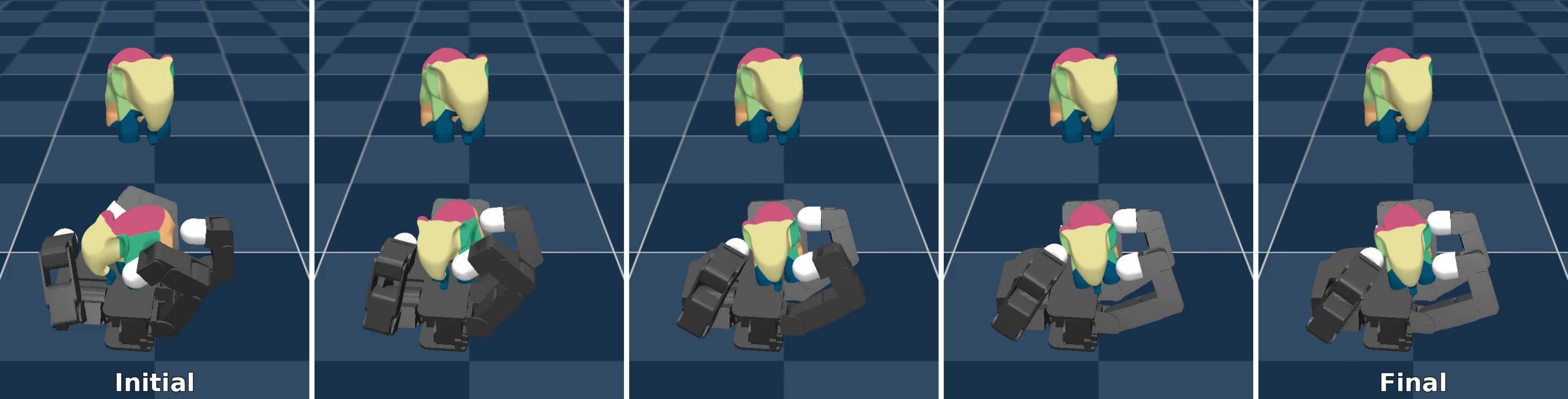}
    \caption{Elephant}
    \label{fig:allegro_elephant}
  \end{subfigure}

  \caption{Additional Allegro in-hand re-orientation results on: Camera, Can, Cube, Cup, and Elephant.}
  \label{fig:allegro_extra_2}
\end{figure}

\begin{figure}[t]
  \centering

  \begin{subfigure}{\linewidth}
    \centering
    \includegraphics[width=0.8\linewidth]{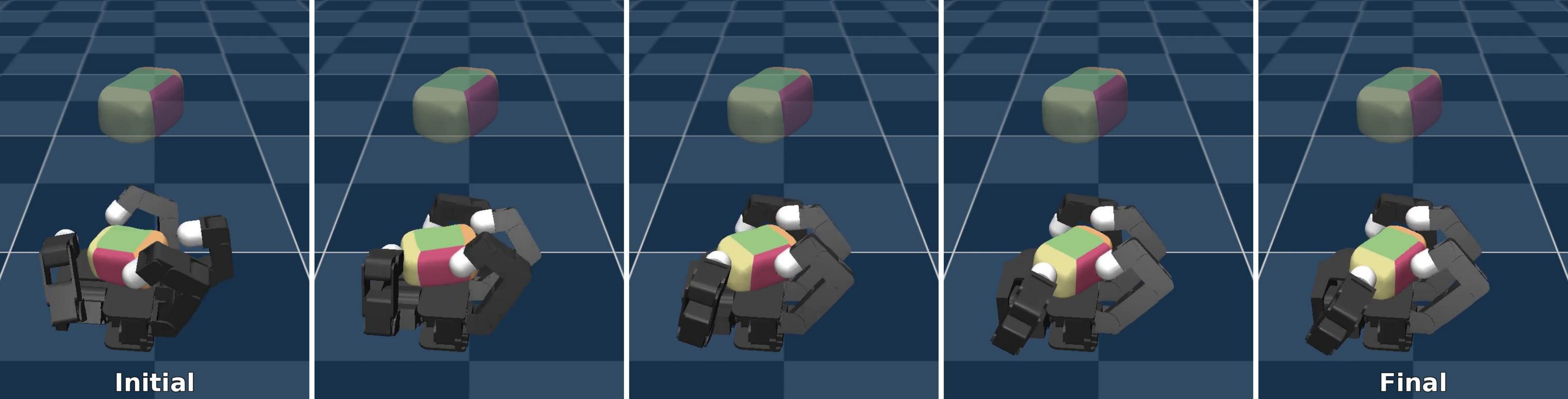}
    \caption{Foam brick}
    \label{fig:allegro_foambrick}
  \end{subfigure}

  \vspace{0.6em}

  \begin{subfigure}{\linewidth}
    \centering
    \includegraphics[width=0.8\linewidth]{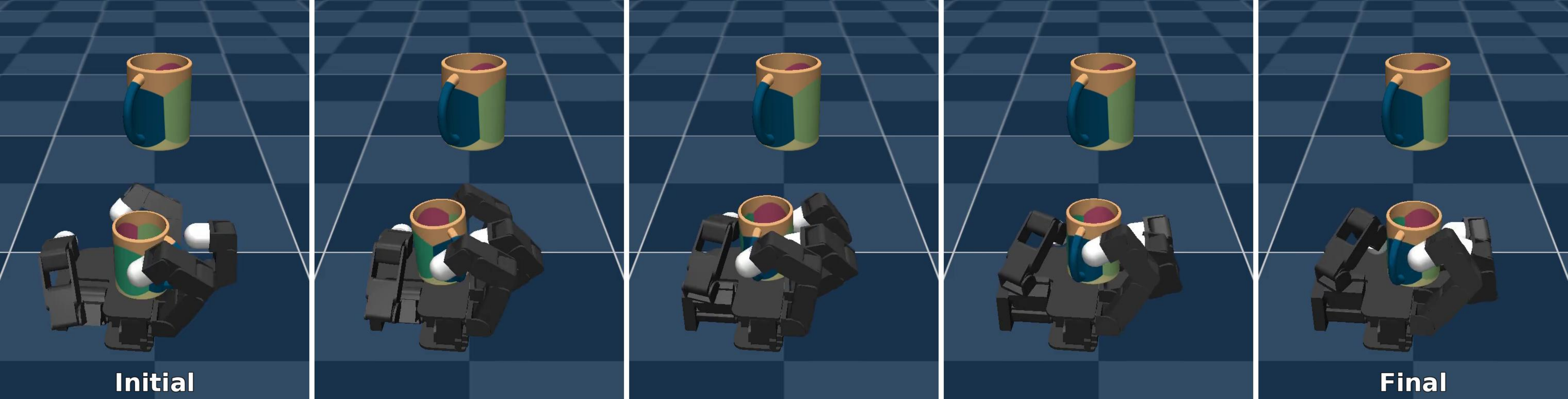}
    \caption{Mug}
    \label{fig:allegro_mug}
  \end{subfigure}

  \vspace{0.6em}

  \begin{subfigure}{\linewidth}
    \centering
    \includegraphics[width=0.8\linewidth]{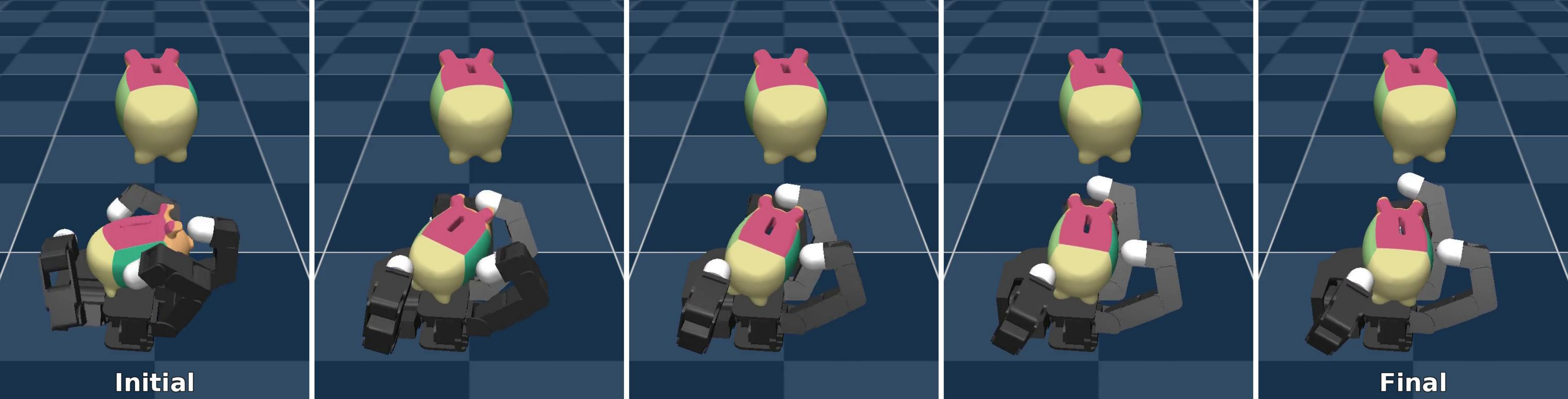}
    \caption{Piggy}
    \label{fig:allegro_piggy}
  \end{subfigure}

  \vspace{0.6em}

  \begin{subfigure}{\linewidth}
    \centering
    \includegraphics[width=0.8\linewidth]{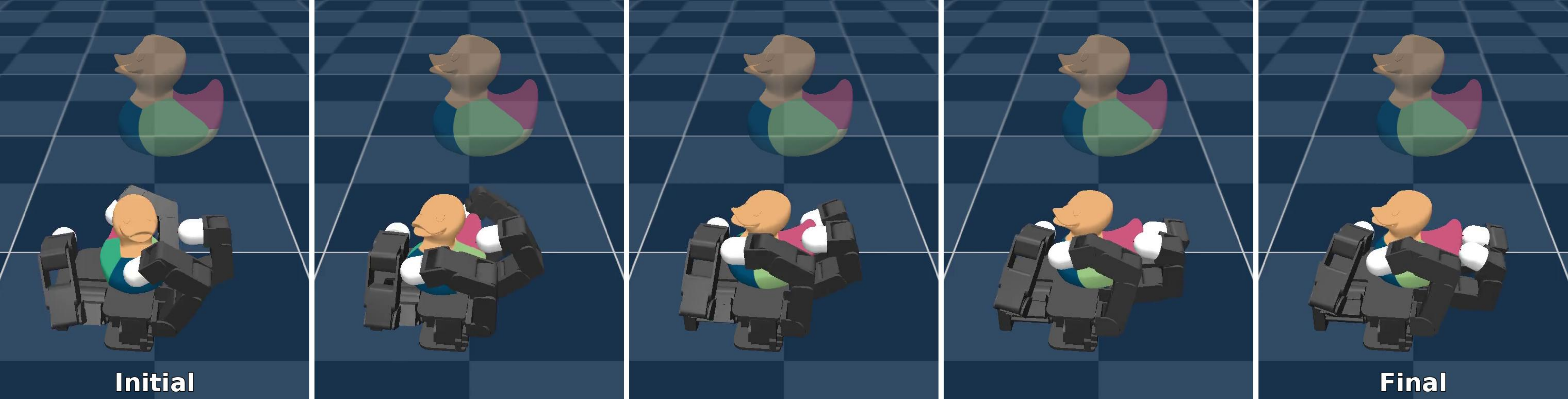}
    \caption{Rubber object}
    \label{fig:allegro_rubber}
  \end{subfigure}

  \vspace{0.6em}

  \begin{subfigure}{\linewidth}
    \centering
    \includegraphics[width=0.8\linewidth]{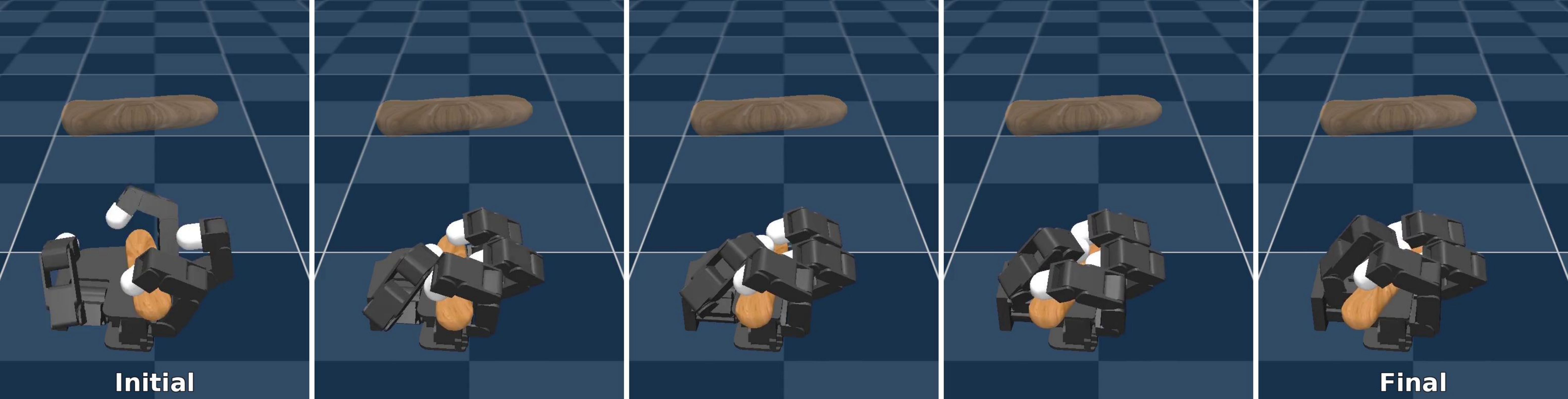}
    \caption{Stick}
    \label{fig:allegro_stick}
  \end{subfigure}

  \caption{Additional Allegro in-hand re-orientation results on: Foam brick, Mug, Piggy, Rubber object, and Stick (fail example).}
  \label{fig:allegro_extra_3}
\end{figure}

\begin{figure}[t]
  \centering

  \begin{subfigure}{\linewidth}
    \centering
    \includegraphics[width=0.8\linewidth]{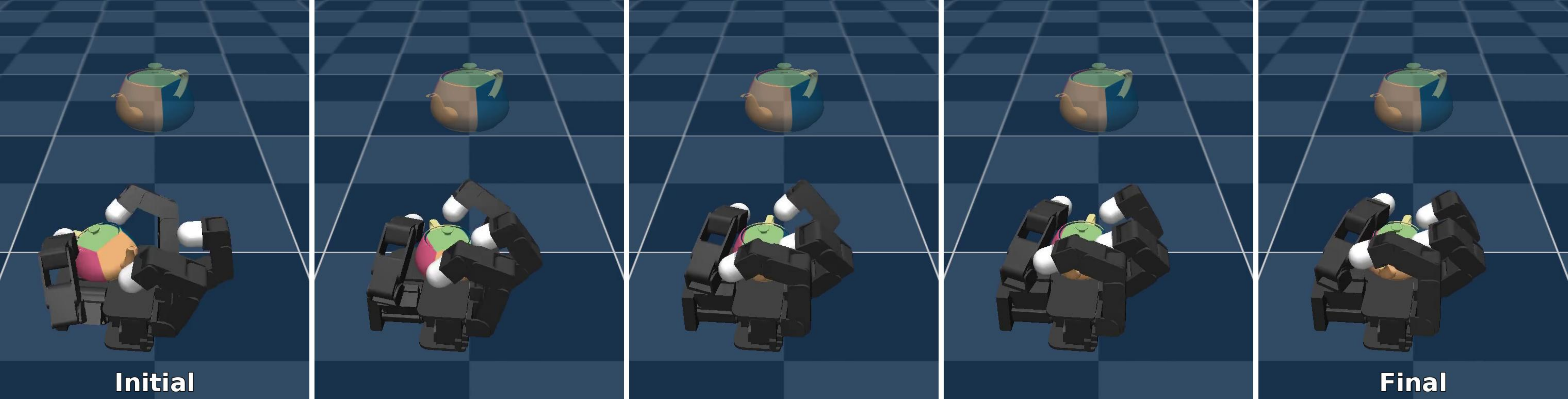}
    \caption{Teapot}
    \label{fig:allegro_teapot}
  \end{subfigure}

  \vspace{0.6em}

  \begin{subfigure}{\linewidth}
    \centering
    \includegraphics[width=0.8\linewidth]{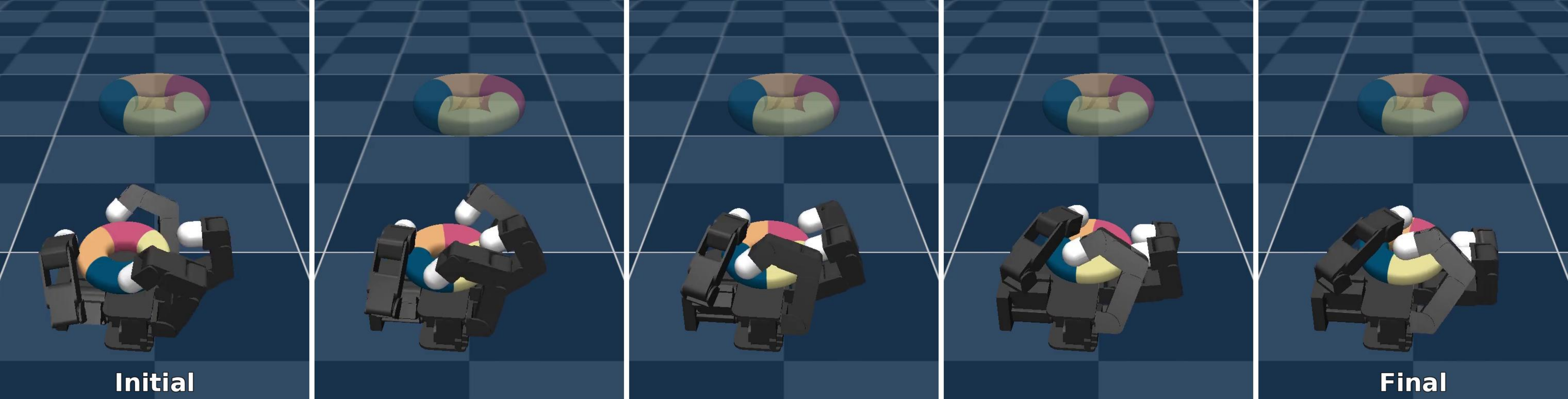}
    \caption{Torus}
    \label{fig:allegro_torus}
  \end{subfigure}

  \caption{Additional Allegro in-hand re-orientation results on: Teapot and Torus.}
  \label{fig:allegro_extra_4}
\end{figure}

\end{document}